\newtheorem{definition}{Definition}[section]
\newtheorem{lemma}{Lemma}[section]
\newtheorem{theorem}{Theorem}[section]
\newtheorem{corollary}{Corollary}[section]
\begin{document}

\runninghead{Allen, Chhikara et al.}

\title{Sound and Complete Neurosymbolic Reasoning with LLM-Grounded Interpretations}

\author{Bradley P. Allen\affilnum{1}, Prateek Chhikara\affilnum{2}, Thomas Macaulay Ferguson\affilnum{3}, Filip Ilievski\affilnum{4} and Paul Groth\affilnum{1}}

\affiliation{\affilnum{1}University of Amsterdam, Amsterdam, The Netherlands\\
\affilnum{2}University of Southern California, Los Angeles, CA, United States\\
\affilnum{3}Rensselaer Polytechnic Institute, Troy, NY, United States\\
\affilnum{4}Vrije Universiteit Amsterdam, Amsterdam, The Netherlands}

\corrauth{Bradley P. Allen}

\email{b.p.allen@uva.nl}

\begin{abstract}
Large language models (LLMs) have demonstrated impressive capabilities in natural language understanding and generation, but exhibit problems with logical consistency in their output. How can we harness LLMs' broad-coverage parametric knowledge in formal reasoning despite their inconsistency? We present a method for directly integrating an LLM into the interpretation function of the formal semantics for a paraconsistent logic. We evaluate the method empirically using datasets derived from the short-form factuality benchmarks GPQA and SimpleQA, showing that bilateral factuality evaluation improves macro-F1 over a unilateral baseline by roughly 6 percentage points on both benchmarks (at the cost of reduced coverage, as abstention is triggered on inconsistent or uncertain cases). We further describe a proof-of-concept tableau reasoner implementing the method, and apply it to a medication-safety knowledge base of 228 asserted and 712 inferred statements: the system detects 92 gluts corresponding to medically significant errors (e.g., opioids inferred as non-addictive, beta-blockers inferred as safe in asthma) while remaining satisfiable, demonstrating that contradictions are localized rather than causing logical explosion. Unlike prior work, our method offers a theoretical framework with a practical implementation for neurosymbolic reasoning that leverages an LLM's knowledge while preserving the underlying logic's soundness and completeness properties.
\end{abstract}

\keywords{Neurosymbolic AI, Large Language Models, Paraconsistent Logic, Factuality Evaluation, Knowledge Representation}

\maketitle

\section{Introduction}
\label{sect:intro}

Applications involving commonsense reasoning and biomedical knowledge, where inconsistencies and incomplete information are commonplace, remain challenging frontiers for AI systems.
While LLMs encode vast parametric knowledge \citep{petroni2019language}, they also suffer from inconsistency and incompleteness \citep{cheng2025empowering}, limiting their use as knowledge bases for such applications.
% Large language models (LLMs) possess broad-coverage parametric knowledge \citep{petroni2019language} but suffer from inconsistency and incompleteness, limiting their reliability as knowledge bases \citep{cheng2025empowering}. 
Efforts to connect logical reasoning with LLMs relying on prompting strategies and external symbolic solvers \citep{wei2022chain,cheng2025empowering} show potential but exhibit significant shortcomings as well \citep{hoppe2025investigating}, lacking formal frameworks for managing LLM knowledge inconsistency and incompleteness. 

\textit{Paraconsistent logics} \citep{sep-logic-paraconsistent} are multi-valued non-classical logics that handle inconsistent information without logical explosion, where contradictions would make everything provable. \textit{Belnap computers} \citep{belnap1977how,belnap1977useful} are theoretical constructions described by Nuel Belnap to model contexts in which machines are responsible for reasoning in the face of incomplete or inconsistent information. 

We propose a Belnap computer using an \textit{LLM judge} as an external knowledge source (Figure \ref{fig:bcllmint}). LLM judges scale factuality evaluation of LLM output for short- or long-form question answering tasks, returning truth valuations for statements in the LLM's output \citep{li2024llms}. In our approach, an LLM judge responds to a query for the valuation of an atomic formula in the context of a paraconsistent reasoner with a \textit{generalized truth value} \citep{sep-truth-values,shramko2011truth}. Generalized truth values allow the LLM judge to provide information to the reasoner not only about the degree of truth of an atomic formula given the LLM's parametric knowledge, but also with respect to the degree of knowledge the LLM has about the formula.
% , in terms of whether it can verify or refute it. 

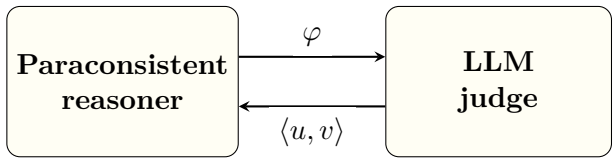
\begin{figure}[t]
\centering
\begin{tikzpicture}[
  scale=1, 
  transform shape,
  node distance=6cm,
  box/.style={rectangle, draw, text centered, minimum width=3cm, minimum height=2cm, fill=yellow!5, rounded corners=5pt},
  arrow/.style={thick,->,>=stealth},
  label/.style={midway, fill=white, font=\small}
]

% Reasoner box
\node[box, align=center] (reasoner) at (0,0) {\textbf{Paraconsistent}\\\textbf{reasoner}};

% LLM box
\node[box, align=center] (llm) at (5,0) {\textbf{LLM}\\\textbf{judge}};

% Arrows between boxes
\draw[arrow] ([yshift=0.33cm]reasoner.east) -- ([yshift=0.33cm]llm.west) node[midway, above, fill=white] {$\varphi$};
\draw[arrow] ([yshift=-0.33cm]llm.west) -- ([yshift=-0.33cm]reasoner.east) node[midway, below, fill=white] {$\langle u,v \rangle$};

\end{tikzpicture}
\caption{A Belnap computer using an LLM judge as a source of knowledge. Let $\mathcal{L}$ be the object language for a paraconsistent logic, and let $\mathcal{L}_{AT}$ be the set of atomic formulas. A paraconsistent reasoner (left) sends an atomic formula $\varphi \in \mathcal{L}_{AT}$ to the LLM judge (right), which returns a generalized truth value $\langle u,v \rangle$, such that $u$ indicates if the LLM judge was able to verify $\varphi$, and $v$ indicates if the LLM judge was able to refute $\varphi$.}
\label{fig:bcllmint}
\end{figure}

Section \ref{sect:related} discusses related work, and 
% in the following sections we make three key contributions. 
Section \ref{sect:llmgrounded} defines \textbf{\textit{a bilateral factuality evaluation function (contribution i)}}, which separately assesses whether an LLM can verify and whether it can refute a claim made by an atomic formula. Unlike standard factuality evaluation, which yields a single truth value, bilateral evaluation surfaces the LLM's epistemic state: consistent knowledge, inconsistency (where the LLM both affirms and denies), or uncertainty (where the LLM can do neither). This is valuable as an evaluation method in and of itself, providing actionable information beyond that provided by current LLM judge approaches to factuality evaluation. Section \ref{sec:preservation}
shows how bilateral evaluation integrates naturally into  the formal semantics of a paraconsistent logic through \textbf{\textit{an LLM-grounded interpretation (contribution ii)}} that preserves the soundness and completeness of analytic tableau systems for reasoning in the logic, enabling principled reasoning over LLM parametric knowledge despite its inherent inconsistency and incompleteness. Section \ref{sect:experiments} provides \textbf{\textit{empirical evidence for practical implementation of bilateral factuality evaluation (contribution iii)}}, presenting evaluation findings using benchmarks derived from short-form factuality benchmarks and discussing limitations. Section \ref{sect:tableau} describes \textbf{\textit{a demonstration of a scalable proof-of-concept implementation of a Belnap computer as a paraconsistent reasoner with LLM integration (contribution iv)}} as described in Section \ref{sect:llmgrounded} and \ref{sec:preservation}, and demonstrates our approach's feasibility and scalability. Section \ref{sect:limitations} discusses limitations of our approach, and Section \ref{sect:conclusion} concludes with a summary and discussion of future work.\footnote{We note that this article is an extended version of \cite{pmlr-v284-allen25a}; we express our deep appreciation to the two anonymous reviewers for valuable feedback which contributed to this version.}

\section{Related work}
\label{sect:related}

\paragraph{Logical reasoning with LLMs}
Current approaches to reasoning with LLMs
\citep{hoppe2025investigating,cheng2025empowering} appear in Figure \ref{fig:four_configurations}. In a \textit{prompt-based} approach (\textit{a}), an LLM is prompted with a verbalization $\delta(\Gamma) \in \Sigma^*$ of a set of formulas $\Gamma$, and performs natural language reasoning to produce a verbalization $\delta(\varphi)$ of $\varphi$ \citep{wei2022chain,kojima2022large,dhuliawala2023chain,yao2023tree}. In a \textit{solver-based} approach (\textit{b}), an LLM is prompted with a verbalization of a set of formulas and produces a set of formulas $\Gamma$ in an object language $\mathcal{L}$, which a reasoner uses to deduce $\varphi$ \citep{pan2023logic,olausson2023linc,callewaert2025verus}. In an approach based on \textit{pre-training or fine-tuning} (\textit{c}), a reasoner provides a training set of proofs $\Pi$, and the LLM learns from that to reason as in (\textit{a}) \citep{jiao2023exploring,morishita2024enhancing,feng2024language,liu2025synlogic}. Unlike approaches (\textit{b}) and (\textit{c}) that use LLMs alongside reasoning systems, in our proposed \textit{interpretation-based} approach (\textit{d}) we integrate LLMs directly into the formal semantics of the reasoner's logic itself, by using an LLM to implement an interpretation function $\mathcal{I}$ to be used by the reasoner in inferring $\varphi$. This allows us to provide formal guarantees about the soundness and completeness of the reasoning process that incorporates the LLM. 
In contrast with approaches (\textit{a}), (\textit{b}), and (\textit{c}), instead of trying to get an LLM to reason using logic, we get a logic to reason using an LLM. 

This architectural distinction has important methodological implications. Systems such as Logic-LM \citep{pan2023logic} and LINC \citep{olausson2023linc} fall within configurations (b) or (c): they use neural components to generate or transform logical representations, which are then processed by symbolic reasoners. DeepProbLog~\citep{manhaeve2018deepproblog} shares our interpretation-based architecture but operates within classical probabilistic semantics, which cannot accommodate the contradictions LLMs routinely produce. Our approach addresses a different problem than do these systems: how to formally integrate unreliable knowledge sources (in our case, LLMs) into reasoning systems while preserving soundness and completeness guarantees. The relevant comparison is not benchmark accuracy on clean reasoning tasks, but robustness to the inconsistencies that real-world knowledge sources exhibit. Our contribution is that bilateral evaluation surfaces the epistemic state of the LLM, and paraconsistent semantics provides principled machinery for reasoning over that state.

\begin{figure}[t]
\centering
% First subfigure (top-left)
\subfigure[Prompt-based]{%
\begin{tikzpicture}[
  scale=0.8, 
  transform shape,
  node distance=6cm,
  box/.style={rectangle, draw, text centered, minimum width=2cm, minimum height=1cm, fill=red!5, rounded corners=5pt},
  arrow/.style={thick,->,>=stealth},
  label/.style={midway, fill=white, font=\small}
]
% LLM box
\node[box, align=center] (llm) at (3.5,0) {\textbf{LLM}};
\draw[arrow] ([yshift=1cm]llm.north) -- (llm.north) node[midway, left] {$\delta(\Gamma)$};
\draw[arrow] (llm.south) -- ([yshift=-1cm]llm.south) node[midway, left] {$\delta(\{\varphi\})$};
\end{tikzpicture}
\label{fig:subfig_a}
}
\hspace{1cm} % Space between subfigures
% Second subfigure (top-right)
\subfigure[Solver-based]{%
\begin{tikzpicture}[
  scale=0.8, 
  transform shape,
  node distance=6cm,
  box/.style={rectangle, draw, text centered, minimum width=2cm, minimum height=1cm, fill=blue!5, rounded corners=5pt},
  arrow/.style={thick,->,>=stealth},
  label/.style={midway, fill=white, font=\small}
]
% Reasoner box
\node[box, align=center] (reasoner) at (0,0) {\textbf{Reasoner}};
\draw[arrow] (reasoner.south) -- ([yshift=-1cm]reasoner.south) node[midway, left] {$\varphi$};
% LLM box
\node[box, align=center] (llm) at (3.5,0) {\textbf{LLM}};
\draw[arrow] ([yshift=1cm]llm.north) -- (llm.north) node[midway, left] {$\delta(\Gamma)$};
\draw[arrow] (llm) -- (reasoner) node[midway, below, fill=white] {$\Gamma$};
\end{tikzpicture}
\label{fig:subfig_b}
}

\vspace{0.5cm} % Space between rows

% Third subfigure (bottom-left)
\subfigure[Pre-train/fine tune]{%
\begin{tikzpicture}[
  scale=0.8, 
  transform shape,
  node distance=6cm,
  box/.style={rectangle, draw, text centered, minimum width=2cm, minimum height=1cm, fill=green!5, rounded corners=5pt},
  arrow/.style={thick,->,>=stealth},
  label/.style={midway, fill=white, font=\small}
]
% Reasoner box
\node[box, align=center] (reasoner) at (0,0) {\textbf{Reasoner}};
% LLM box
\node[box, align=center] (llm) at (3.5,0) {\textbf{LLM}};
\draw[arrow] ([yshift=1cm]llm.north) -- (llm.north) node[midway, left] {$\delta(\Gamma)$};
\draw[arrow] (llm.south) -- ([yshift=-1cm]llm.south) node[midway, left] {$\delta(\{\varphi\})$};
% Arrows between boxes
\draw[arrow] (reasoner) -- (llm) node[midway, below, fill=white] {$\Pi$};
\end{tikzpicture}
\label{fig:subfig_c}
}
\hspace{1cm} % Space between subfigures
% Fourth subfigure (bottom-right)
\subfigure[Interpretation-based]{%
\begin{tikzpicture}[
  scale=0.8, 
  transform shape,
  node distance=6cm,
  box/.style={rectangle, draw, text centered, minimum width=2cm, minimum height=1cm, fill=yellow!5, rounded corners=5pt},
  arrow/.style={thick,->,>=stealth},
  label/.style={midway, fill=white, font=\small}
]
% Reasoner box
\node[box, align=center] (reasoner) at (0,0) {\textbf{Reasoner}};
\draw[arrow] ([yshift=1cm]reasoner.north) -- (reasoner.north) node[midway, left] {$\Gamma$};
\draw[arrow] (reasoner.south) -- ([yshift=-1cm]reasoner.south) node[midway, left] {$\varphi$};
% LLM box
\node[box, align=center] (llm) at (3.5,0) {\textbf{LLM}};
\draw[arrow] (llm) -- (reasoner) node[midway, below, fill=white] {$\mathcal{I}$};
\end{tikzpicture}
\label{fig:subfig_d}
}

\caption{Approaches to logical reasoning with LLMs. Let $\mathcal{L}$ be a first-order language, $\Gamma$ be a set of statements in $\mathcal{L}$, $\varphi$ be a statement in $\mathcal{L}$, $\mathcal{I}$ be an interpretation for $\mathcal{L}$, $\Pi$ be a set of proofs of statements in $\mathcal{L}$, and $\delta: \mathcal{P}(\mathcal{L}) \rightarrow \Sigma^*$ be a verbalization function that takes a set of formulas in $\mathcal{L}$ and returns a natural language translation of the formulas. In each approach, we show how reasoning is performed in the context of generating a formal or natural language proof showing that $\Gamma \vdash \varphi$.} 
\label{fig:four_configurations}
\end{figure}
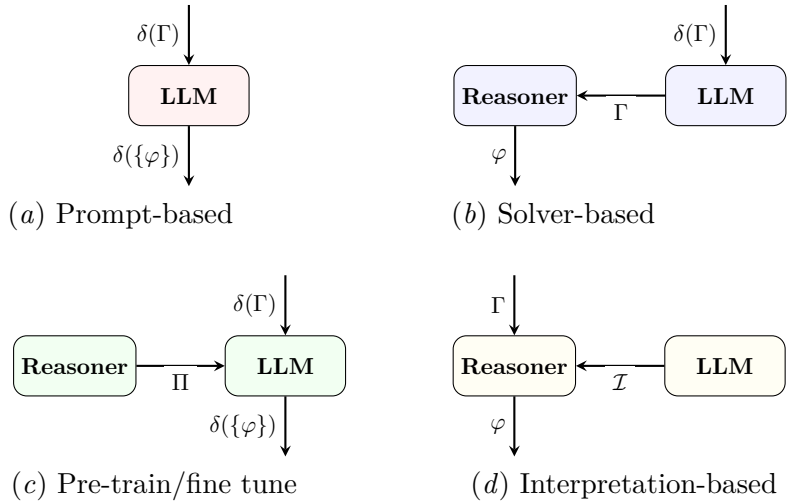

\paragraph{Factuality evaluation using LLM judges}
Factuality evaluation is the assessment of whether the output of a language model is factually correct \citep{wang2023survey,bang2025hallulens}. Recent work has focused on LLM judges \citep{zheng2023judging,zhu2023judgelm} as a means to scale factuality evaluation, by prompting an LLM to produce a truth value assignment of the output of an LLM that is being evaluated, specifically a short- \citep{wei2024measuring} or long-form \citep{jacovi2025facts} answer to a question. We apply LLM judges to assign generalized truth values to natural language translations of atomic formulas, taking a single-answer grading approach \citep{zheng2023judging}. In contrast with current approaches to factuality evaluation, the use of generalized truth values provides information as to the LLM's epistemic stance towards the formula in question.

\paragraph{Generalized truth values}

Generalized truth values offer significant advantages over truth valuations provided by current approaches to factuality evaluation using LLM judges, presenting a more sophisticated framework for handling complex evidential scenarios. These systems enable systematic distinctions between different types of evidence, as demonstrated by the theoretical work of \cite{shramko2011truth} and \cite{ferguson2021tableaux}, allowing evaluators to categorize and weight various forms of factual support rather than treating all evidence as equivalent. Additionally, generalized truth values provide principled methods for handling inconsistent and incomplete information, a critical advantage given the messy nature of real-world factual claims where evidence may be contradictory or absent, as explored in the foundational work by \cite{shramko2011truth}, the extensions by \cite{szmuc2019epistemic} and \cite{ferguson2021tableaux}, and the theoretical groundwork laid by \cite{correia2010grounding}. Perhaps, most importantly, for practical evaluation applications, these frameworks enhance the explanatory transparency of logical valuations through their structured nature, as noted by \cite{shramko2011truth}, \cite{ferguson2021tableaux}, and \cite{fine2016angellic}, providing clear logical foundations that make evaluation decisions more interpretable and allowing researchers to understand not just whether a claim is deemed factual, but why that determination was reached and what types of evidence contributed to the final assessment. Our work, following \cite{ferguson2021tableaux}, uses truth values that are elements of the bilattice $\mathcal{{NINE}}$ \citep{arieli1998value}.

\paragraph{Bilateralism}

Bilateralism in logic \citep{rumfitt2000yes} holds that understanding a proposition requires grasping both the conditions under which it can be asserted and the conditions under which it should be denied. Meaning isn't just about knowing when something is true, but also explicitly understanding when it is false. This philosophical view contrasts with unilateral approaches where only conditions for truth or assertion are primary, and falsity or denial is treated as derivative (just the absence or negation of truth). Bilateralists argue this misses something fundamental about meaning and inference, and that having explicit roles for both verification and refutation leads to better logical reasoning and clearer understanding. Our work empirically validates Rumfitt's philosophical position: treating assertion and denial as primitive speech acts (rather than reducing one to the other) provides superior analytical power. 

\paragraph{Multi-valued logics for paraconsistent reasoning}
Work that builds on Belnap's four-valued semantics has led to the development of a range of non-classical paraconsistent logics. \cite{patel1989four} showed how this idea could be applied to make terminological logics capable of performing subsumption correctly in the presence of contradictory knowledge. \cite{kamide2010paraconsistent}, \cite{ma2007algorithms}, and \cite{maier2013paraconsistent} expanded on this work to define a number of paraconsistent description logics. More recently, \cite{ferguson2017faulty} has proposed a computational interpretation of versions of first degree entailment ($\mathbf{FDE}$) and Richard Angell's logic of analytic containment ($\mathbf{AC}$). This interpretation models reasoners using these logics as Belnap computers, and leads to a formal framework for $\mathbf{FDE}$ and $\mathbf{AC}$ as bilateral logics with sound and complete analytic tableau systems. We show how an LLM judge can be used to provide an interpretation for $\mathbf{AC}$ that preserves the soundness and completeness of the tableau system $\mathbf{ACrQ}$, with applications to description logics as described in \citep{ferguson2021modeling}.

\paragraph{Interpreting symbols in logical statements as natural language terms}
Previous efforts have explored the degree to which meaning can be captured through the use of natural language terms as symbols in a formal knowledge representation language. 
Google distance between symbol names has been used to weight ontology matches, explicitly leveraging that ontology concepts are often named with meaningful words \citep{gligorov2007using}.
Semantic distances between natural language representations of concepts have been used to reason with inconsistent ontologies, selecting maximally consistent subsets based on linguistic similarity \citep{huang2008using}. The degree to which symbol names carry social meaning in knowledge graphs has been quantified, providing an information-theoretic framework for measuring the effectiveness of this symbol-language correspondence \citep{de2016names}. Most recently, work on context-aware clause selection in theorem proving has shown that symbol names can provide valuable heuristics even in purely formal reasoning tasks \citep{schon2025context}. Our approach builds on these efforts, using the linguistic processing capabilities of LLMs to interpret the predicate and constant symbols in atomic formulas in $\mathbf{AC}$ as terms in natural language. At first glance, this may seem to fly in the face of traditional approaches to logic; however, traditional formal systems have always required human judgment to determine what constitutes truth for atomic formulas. Our contribution makes this interpretative act explicit by using LLMs to operationalize how humans assess claims through language.

\section{Bilateral factuality evaluation of an atomic formula using an LLM}
\label{sect:llmgrounded}

$\mathbf{AC}$ is a \textit{conceptivist} logic \citep{ferguson2017computational} that addresses hierarchical relationships between concepts. While in this work we focus on $\mathbf{AC}$ as a paraconsistent logic, it is also \textit{paracomplete}, i.e., it rejects the law of the excluded middle. The combination of those two properties makes it particularly suitable for applications involving vague predicates, incomplete information, or situations where classical logic's demands for both consistency and completeness are too strong. Appendix \ref{apd:ac} provides a definition of $\mathbf{AC}$ with restricted quantification, which is necessary to support concept subsumption and existential quantification of roles when used as a description logic \citep{ferguson2021tableaux}. This definition treats $\mathbf{AC}$ as a \textit{bilateral} logic, i.e., a logic which manages values for both the truth and falsity of a formula separately. Bilateralism in philosophical logic \citep{rumfitt2000yes} holds that understanding a proposition requires grasping both the conditions under which it can be asserted and the conditions under which it should be denied. We operationalize this principle by evaluating the factuality of atomic formulas using an LLM judge. 

First, we generate a natural language verbalization of an atomic formula, then prompt the LLM to generate two statements on the verifiability and refutability of the assertion. The statements are then mapped into the set of truth values $\mathcal{V}_3$ used in weak Kleene logic \citep{kleene1952introduction,szmuc2019epistemic}, i.e., $\mathfrak{t}$ (true), $\mathfrak{e}$ (undefined), and $\mathfrak{f}$ (false). 
Weak Kleene truth values allow us to formalize the semantics of LLM-judge output. For example, the SimpleQA grader used in the SimpleQA benchmark \citep{wei2024measuring} grades an LLM's answer to a question as either ``CORRECT", ``INCORRECT", or ``NOT ATTEMPTED"; the PreciseWikiQA Question Answerability Prompt in the HalluLens benchmark \citep{bang2025hallulens} uses ``UNVERIFIABLE" instead of ``NOT ATTEMPTED". We equate ``CORRECT" with $\mathfrak{t}$ and ``INCORRECT" with $\mathfrak{f}$; equating $\mathfrak{e}$  with ``NOT ATTEMPTED" or ``UNVERIFIABLE" is consistent with Kleene's original statement that it indicates ``an absence of information" that a given formula is either $\mathfrak{t}$ or $\mathfrak{f}$ \citep[p. 333]{kleene1952introduction}. 

Finally, we pair the weak Kleene truth value $u$ for verifiability with the weak Kleene truth value $v$ for refutability to yield a generalized truth value $\langle u, v \rangle \in \mathcal{V}_3 \times \mathcal{V}_3$. Generalized truth values offer significant advantages over truth values provided by current approaches to factuality evaluation using LLM judges: they enable systematic distinctions between different types of evidence \citep{shramko2011truth,ferguson2021tableaux}, provide principled methods for handling inconsistent and incomplete information \citep{shramko2011truth,szmuc2019epistemic,ferguson2021tableaux,correia2010grounding}, and enhance the explanatory transparency of logical valuations through their structured nature \citep{shramko2011truth,ferguson2021tableaux,fine2016angellic}. Figure \ref{fig:discovery} shows an example of this process in action; a longer example is provided in Appendix \ref{apd:examples}.

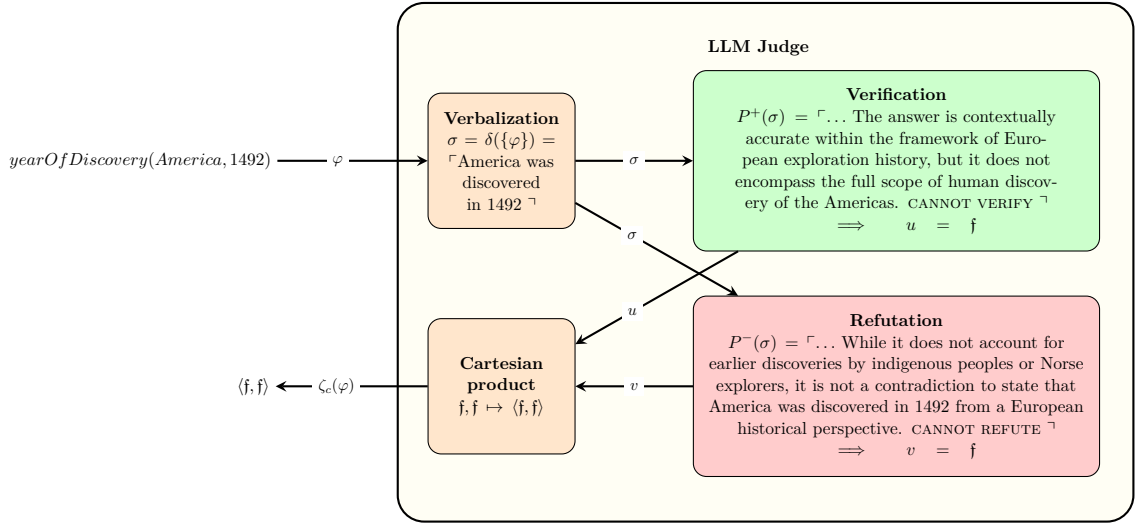
\begin{figure}[t]
\centering
\resizebox{\textwidth}{!}{%
\begin{tikzpicture}[
  node distance=4cm,
  box/.style={rectangle, draw, text centered, minimum width=3cm, minimum height=2cm, rounded corners=5pt},
  smallbox/.style={rectangle, draw, text centered, minimum width=3cm, minimum height=3cm, rounded corners=5pt, text width=3cm, text badly ragged},
  promptbox/.style={rectangle, draw, text centered, minimum width=9cm, minimum height=4cm, rounded corners=5pt, fill=blue!20, text width=8.5cm, text badly ragged},
  bigbox/.style={rectangle, draw, dashed, rounded corners=10pt, thick, fill=yellow!5, inner sep=15pt},
  graybox/.style={smallbox, fill=gray!20},
  greenbox/.style={promptbox, fill=green!20},
  redbox/.style={promptbox, fill=red!20},
  orangebox/.style={smallbox, fill=orange!20},
  arrow/.style={thick,->,>=stealth},
  label/.style={midway, fill=white, font=\small}
]

% First draw the background bounding box manually
\begin{scope}
    \draw[rounded corners=10pt, thick, fill=yellow!5] 
         (-15.8,-5.5) rectangle (0.5,6)
         node[anchor=north west, xshift=5pt, yshift=-5pt] at (-9.25,5.5) {\textbf{LLM Judge}};
\end{scope}

% First node - the input assertion (left)
\node[align=center] (phi) at (-21.5,2.5) {$yearOfDiscovery(America,1492)$};

% Second node - verbalization
\node[orangebox, align=center] (sigma) at (-13.5,2.5) {\textbf{Verbalization} \\ $\sigma = \delta(\{\varphi\}) =$ $\ulcorner$America was discovered in 1492 $\urcorner$};

% Third level - verification and refutation (above and below)
\node[greenbox, align=center] (verification) at (-4.75,2.5) {\textbf{Verification} \\ $P^+(\sigma) = \ulcorner \ldots$ The answer is contextually accurate within the framework of European exploration history, but it does not encompass the full scope of human discovery of the Americas. \textsc{cannot verify} $\urcorner$ \\ $\implies u = \mathfrak{f}$};

\node[redbox, align=center] (refutation) at (-4.75,-2.5) {\textbf{Refutation} \\ $P^-(\sigma) = \ulcorner \ldots$ While it does not account for earlier discoveries by indigenous peoples or Norse explorers, it is not a contradiction to state that America was discovered in 1492 from a European historical perspective. \textsc{cannot refute} $\urcorner$ \\ $\implies v = \mathfrak{f}$};

% Fourth level - cross-product  
\node[orangebox, align=center] (uv) at (-13.5,-2.5) {\textbf{Cartesian} \\ \textbf{product} \\ $\mathfrak{f}, \mathfrak{f} \mapsto \langle \mathfrak{f}, \mathfrak{f} \rangle$};

% Fifth level - the output truth valuations
\node[align=center] (v1) at (-19,-2.5) {$\langle \mathfrak{f}, \mathfrak{f} \rangle$};

% Arrows
\draw[arrow] (phi) -- (sigma) node[label,pos=0.4] {$\varphi$};
\draw[arrow] (sigma) -- (verification) node[label,pos=0.5] {$\sigma$};
\draw[arrow] (sigma) -- (refutation) node[label,pos=0.35] {$\sigma$}; 
\draw[arrow] (verification) -- (uv) node[label,pos=0.65] {$u$};
\draw[arrow] (refutation) -- (uv) node[label,pos=0.5] {$v$};
\draw[arrow] (uv) -- (v1) node[label, pos=0.6] { $\zeta_c(\varphi)$};

\end{tikzpicture}%
}% end resizebox
\caption{An example of bilateral factuality evaluation $\zeta_c$ as performed by the LLM judge shown in Figure \ref{fig:bcllmint}. Let $\varphi \in \mathcal{L}_{AT}$ be an assertion that the discovery of America occurred in the year 1492. $\sigma = \delta(\{\varphi\})$ is the verbalization of $\varphi$ (Definition \ref{def:verbalizationfn}). The bilateral factuality evaluation of $\varphi$ by an LLM judge (Definitions \ref{def:verificationfn}, \ref{def:refutationfn}, and \ref{def:zeta}) generates the truth value $\zeta_c(\varphi) = \langle \mathfrak{f}, \mathfrak{f} \rangle$. The LLM has in effect identified \textit{incompleteness} in its knowledge --- based on differing perspectives on who discovered America, it can neither verify nor refute $\varphi$.}
\label{fig:discovery}
\end{figure}

\paragraph{Preliminaries}
Let $\Sigma$ be a countable set of tokens and $\Sigma^*$ be the set of finite sequences of tokens $\ulcorner t_0 \ldots t_k \urcorner$, where $t_{0 \leq i \leq k} \in \Sigma$, $k \in \mathbb{N}$. 
Given sequences $\sigma, \sigma' \in \Sigma^*$, $\sigma \prec \sigma'$ iff $\sigma$ is a \emph{proper contiguous subsequence} of $\sigma'$. Let $L_\mathfrak{C}$ be an LLM trained on a corpus $\mathfrak{C} \in \mathcal{P}(\Sigma^*)$.

\begin{definition}
\label{def:verbalizationfn}
A \emph{verbalization function} $\delta: \mathcal{P}(\mathcal{L}) \rightarrow \Sigma^*$ is a total function that maps a set of formulas to a sequence of tokens. 
\end{definition}

The verbalization function $\delta$ thus serves as a critical bridge between formal logic and natural language in our framework, and makes the assumption that symbols in our logical language can meaningfully correspond to words or phrases that an LLM can interpret. In practice, $\delta$ can be implemented through three distinct approaches:
\begin{enumerate}
\item \textbf{Direct syntactic mapping}: Logical formulas are provided directly in their formal syntax (e.g., \texttt{yearOfDiscovery(America, 1492)}). This assumes the LLM can parse and understand logical notation. This is the approach used in the implementation of bilateral factuality evaluation used in the evaluation in Section \ref{sect:experiments} and the tableau reasoning system described in Section \ref{sect:tableau}.
\item \textbf{Template-based transformation}: Formulas are converted using predefined templates (e.g., ``America was discovered in 1492''). This approach provides more natural language but requires manual template creation for each predicate type.
\item \textbf{LLM-generated verbalization}: An LLM generates the natural language representation of the formula. This is the most flexible approach, but it introduces another layer of potential error and interpretation.
\end{enumerate}

\begin{definition}
\label{def:verificationfn}
A \emph{verification function} $P^+: \Sigma^* \rightarrow \Sigma^*$ prompts $L_\mathfrak{C}$ to take a verbalization of an atomic formula $\varphi \in \mathcal{L}_{AT}$ and generate a token sequence $\sigma^+$ that states that $\varphi$ is verified or that it cannot be verified.
\end{definition}

\begin{definition}
\label{def:refutationfn}
A \emph{refutation function} $P^-: \Sigma^* \rightarrow \Sigma^*$ prompts $L_\mathfrak{C}$ to take a verbalization of an atomic formula $\varphi \in \mathcal{L}_{AT}$ and generate a token sequence $\sigma^-$ that states that $\varphi$ is refuted or that it cannot be refuted.
\end{definition}

\begin{definition}
\label{def:zeta}
A \emph{bilateral factuality evaluation function} $\zeta : \mathcal{L}_{AT} \rightarrow \mathcal{V}_3 \times \mathcal{V}_3$ is a total function that given an atomic formula $\varphi \in \mathcal{L}_{AT}$ yields a pair $\langle u, v \rangle$ where:
$$u = \begin{cases}
\mathfrak{t} & \text{if } \ulcorner \mathsf{VERIFIED} \urcorner \prec P^+(\delta(\{\varphi\})) \\
\mathfrak{f} & \text{if } \ulcorner \mathsf{CANNOT\ VERIFY} \urcorner \prec P^+(\delta(\{\varphi\})) \\
\mathfrak{e} & \text{otherwise}
\end{cases}$$
$$v = \begin{cases}
\mathfrak{t} & \text{if } \ulcorner \mathsf{REFUTED} \urcorner \prec P^-(\delta(\{\varphi\})) \\
\mathfrak{f} & \text{if } \ulcorner \mathsf{CANNOT\ REFUTE} \urcorner \prec P^-(\delta(\{\varphi\})) \\
\mathfrak{e} & \text{otherwise}
\end{cases}$$
\end{definition}
 
The ``otherwise'' cases in Definition \ref{def:zeta} reflect LLMs failing to output tokens indicating verification or refutation state, e.g., by failing to
follow instructions or timing out during API calls. We use repeated sampling with majority vote \citep{brown2024large} to determine truth value components; other hallucination mitigation approaches, such as chain-of-verification \citep{dhuliawala2023chain}, are also admissible.

The definition of $\zeta$ leaves open the possibility that multiple calls over time might return different truth values. This violates the assumption of analytic tableau reasoning that atomic formulas have stable truth values within the scope of the reasoning process. We ensure this by using a caching version of the bilateral evaluation function $\zeta_c$ where valuations for atomic formulas are persistently and immutably stored. This type of caching is consistent with the range of optimization techniques used in description logics tableau reasoners \citep{gore2007exptime,nguyen2009efficient}.  

\begin{definition}
\label{def:zetasubc}
A \emph{caching bilateral factuality evaluation function} $\zeta_{c}: \mathcal{L}_{AT} \rightarrow \mathcal{V}_3 \times \mathcal{V}_3$ is a total function defined as:
$$\zeta_c(\varphi) =
\begin{cases}
    c(\varphi) & \text{if } \varphi \in \text{dom}(c) \\
    \zeta(\varphi) & \text{otherwise, and } c := c \cup \{(\varphi, \zeta(\varphi))\}
\end{cases}
$$
where $c$ is a persistent and immutable cache mapping atomic formulas to truth value pairs, and $dom(c)$ denotes the domain of $c$, i.e., the set of atomic formulas for which a valuation has already been stored: $\operatorname{dom}(c) = \{ \varphi \in L_{AT} \mid \exists \langle u,v\rangle \in \mathcal{V}_3 \times \mathcal{V}_3 \text{ such that } (\varphi, \langle u,v\rangle) \in c \}$.
\end{definition}

Having established a bilateral evaluation, we now show how this enables LLM judges to implement interpretation functions directly.

\section{LLM-grounded interpretations}
\label{sec:preservation}

We formalize how the bilateral evaluation function $\zeta_c$ is integrated into the definition of an interpretation for $\mathbf{AC}$, show that for every LLM-grounded $\mathbf{AC}$ interpretation there is an equivalent standard $\mathbf{AC}$ interpretation, and then show that soundness and completeness of the tableau-style analytic calculus $\mathbf{ACrQ}$ defined in Definition 18 of \cite{ferguson2021tableaux} is preserved when we adopt an LLM-grounded interpretation. 

These results are metalogical: they concern the reasoner's treatment of atomic valuations, not the quality of the valuations themselves, which we address empirically in Section \ref{sect:experiments}. They assume stability (guaranteed via caching; Definition \ref{def:zetasubc}, Lemma \ref{lem:stability}) and faithful verbalization of atomic formulas by $\delta$ (Section \ref{sect:limitations}). Throughout this section we write $\Gamma \models_{\mathcal{I}} \varphi$ to denote validity relative to the class of interpretations over which $\mathcal{I}$ is understood to range (either all standard $\mathbf{AC}$ interpretations or all LLM-grounded $\mathbf{AC}$ interpretations, as disambiguated by context), rather than model-checking in a single interpretation.

\begin{definition}
\label{def:llmgrounded}
    An \emph{LLM-grounded $\mathbf{AC}$ interpretation} $\mathcal{I} = \langle \mathbf{C}^\mathcal{I}, \mathbf{R}^\mathcal{I} \rangle$ is an $\mathbf{AC}$ interpretation such that for every function $R^\mathcal{I} \in \mathbf{R}^\mathcal{I}$ and $c_1^\mathcal{I}, \ldots, c_n^\mathcal{I} \in \mathbf{C}^\mathcal{I}$:
\begin{align*}
    R^\mathcal{I}(c_1^\mathcal{I}, \ldots, c_n^\mathcal{I}) = \zeta_c(R(c_1, \ldots, c_n))
\end{align*}
\end{definition}

\begin{lemma}[Stability of LLM-grounded interpretations]
\label{lem:stability}
For any LLM-grounded $\mathbf{AC}$ interpretation $\mathcal{I}$ and atomic formula $\varphi \in \mathcal{L}_{AT}$:
\begin{enumerate}
    \item $\mathcal{I}(\varphi)$ is well-defined and yields exactly one pair $\langle u, v \rangle \in \mathcal{V}_3 \times \mathcal{V}_3$
    \item Once computed, $\mathcal{I}(\varphi)$ remains constant throughout the reasoning process
\end{enumerate}
\end{lemma}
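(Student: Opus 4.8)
The plan is to reduce both claims to properties of the caching function $\zeta_c$ from Definition~\ref{def:zetasubc}. By Definition~\ref{def:llmgrounded} every atomic formula $\varphi \in \mathcal{L}_{AT}$ has the form $R(c_1,\ldots,c_n)$, and $\mathcal{I}(\varphi) = R^\mathcal{I}(c_1^\mathcal{I},\ldots,c_n^\mathcal{I}) = \zeta_c(R(c_1,\ldots,c_n)) = \zeta_c(\varphi)$, so it suffices to show (1) that $\zeta_c(\varphi)$ denotes a unique element of $\mathcal{V}_3 \times \mathcal{V}_3$, and (2) that this value does not change over the course of a tableau derivation in $\mathbf{ACrQ}$.

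For part (1), I would first argue that $\zeta(\varphi)$ is a single well-defined pair: $\delta$ is total (Definition~\ref{def:verbalizationfn}) and $P^+,P^-$ are total functions $\Sigma^*\to\Sigma^*$ (Definitions~\ref{def:verificationfn}, \ref{def:refutationfn}), so $P^+(\delta(\{\varphi\}))$ and $P^-(\delta(\{\varphi\}))$ are each one fixed token sequence. The three clauses defining $u$ (and likewise $v$) in Definition~\ref{def:zeta} are jointly exhaustive by virtue of the \emph{otherwise} clause, and the marker strings $\ulcorner \mathsf{VERIFIED}\urcorner$ and $\ulcorner \mathsf{CANNOT\ VERIFY}\urcorner$ together with the priority ordering of the clauses ensure that exactly one value of $\mathcal{V}_3$ is selected; hence $\zeta$ is a well-defined total function into $\mathcal{V}_3\times\mathcal{V}_3$. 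Then $\zeta_c$ inherits this: if $\varphi\in\mathrm{dom}(c)$ it returns $c(\varphi)$, a single pair since $c$ is a partial function, and otherwise it returns $\zeta(\varphi)$; either way the output is a unique pair.

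For part (2), the key observation is that the cache $c$ only ever grows, and never in a conflicting way. I would argue by induction on the sequence of queries to $\zeta_c$ issued during the derivation, with the invariant that $c$ is at all times a partial function $\mathcal{L}_{AT}\rightharpoonup\mathcal{V}_3\times\mathcal{V}_3$ with nondecreasing domain. Each query $\zeta_c(\varphi)$ either leaves $c$ untouched (when $\varphi\in\mathrm{dom}(c)$, returning the stored $c(\varphi)$) or inserts the single pair $(\varphi,\zeta(\varphi))$ for a $\varphi\notin\mathrm{dom}(c)$; since insertions occur only for formulas not already in the domain, no existing binding is overwritten, $c$ stays functional, and its value on $\varphi$ is fixed from the first query involving $\varphi$ onward. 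Therefore every later evaluation of $\mathcal{I}(\varphi)=\zeta_c(\varphi)$ returns $c(\varphi)=\zeta(\varphi)$, the value computed on that first query, which is exactly the asserted constancy.

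I expect the only real subtlety — the main obstacle — to lie in part (1): the LLM $L_\mathfrak{C}$ underlying $P^+$ and $P^-$ is in practice nondeterministic, so one must be explicit that ``$P^+$, $P^-$ are functions'' is a genuine modeling assumption (operationalized via repeated sampling with majority vote, as noted after Definition~\ref{def:zeta}), and that it is the caching layer of $\zeta_c$, rather than determinism of the LLM, that actually delivers stability \emph{within} a derivation — part (2) holds regardless of any between-call variability of $L_\mathfrak{C}$ precisely because $\zeta$ is consulted at most once per atomic formula. A minor point I would dispatch in passing is that, in the $\mathbf{AC}$ setting of Appendix~\ref{apd:ac}, the constants naming the elements $c_i^\mathcal{I}$ are uniquely determined, so the argument $R(c_1,\ldots,c_n)$ passed to $\zeta_c$ in Definition~\ref{def:llmgrounded} is unambiguous and the reduction $\mathcal{I}(\varphi)=\zeta_c(\varphi)$ is well-posed.
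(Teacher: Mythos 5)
Your proof is correct and takes the same essential route as the paper's: both reduce stability to the immutability of the cache $c$ in Definition~\ref{def:zetasubc}, so that the first query to $\zeta_c(\varphi)$ fixes the value returned by every subsequent one. You are actually somewhat more careful than the paper on part~(1) — the paper's proof addresses only the constancy claim, while you explicitly argue totality and uniqueness of $\zeta(\varphi)$ from the totality of $\delta$, $P^+$, $P^-$ and the exhaustive/prioritized case split in Definition~\ref{def:zeta}, and correctly flag that treating $P^\pm$ as functions is a modeling assumption (operationalized by majority vote) and that it is the caching layer, not LLM determinism, that delivers within-derivation stability.
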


\begin{proof}
    The stability follows directly from Definition \ref{def:zetasubc} of $\zeta_c$. Let $t_0$ be the time at which $\zeta$ is first called to set $c(\varphi)$. If $\zeta_c(\varphi) = \langle u, v \rangle$ at time $t_0$, then for all subsequent calls at $t > t_0$, $\zeta_c(\varphi) = \langle u, v \rangle$. This ensures that once an atomic formula $\varphi$ has been evaluated and the returned pair $\langle u, v \rangle \in \mathcal{V}_3 \times \mathcal{V}_3$ has been cached, all subsequent evaluations will return the same pair from the cache, guaranteeing stability.
\end{proof}

Lemma \ref{lem:stability} states that the logical validity of derivations depends on logical operator structure rather than atomic proposition content, a fundamental principle in formal logic. The tableau method only depends on the truth-functional behavior of the logical connectives, which remains unchanged between standard and LLM-grounded interpretations.

\begin{lemma}[LLM-grounded to standard interpretation mapping]
\label{lem:groundedtostd}
    For any LLM-grounded $\mathbf{AC}$ interpretation $\mathcal{I} = \langle \mathbf{C}^\mathcal{I}, \mathbf{R}^\mathcal{I} \rangle$, there exists a standard $\mathbf{AC}$ interpretation $\mathcal{I}'$ that preserves the semantic behavior of $\mathcal{I}$ on all formulas.
\end{lemma}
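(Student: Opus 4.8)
The plan is to show that a witness $\mathcal{I}'$ can be obtained from $\mathcal{I}$ essentially by ``forgetting'' that its relation functions were computed by an LLM. The key observation is that, by Definition~\ref{def:llmgrounded}, an LLM-grounded $\mathbf{AC}$ interpretation is already an $\mathbf{AC}$ interpretation in the sense of Appendix~\ref{apd:ac}; the only thing that needs care is that its relation functions, being specified through the caching evaluation $\zeta_c$, genuinely determine a total function on tuples of the concept domain rather than an artifact of a particular sequence of calls.

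First I would fix, for the duration of the argument, the cache state $c$ reached during the reasoning process, so that $\zeta_c$ restricted to the atomic formulas relevant to $\mathcal{I}$ is a determinate function. Lemma~\ref{lem:stability} licenses this: it guarantees that each $\mathcal{I}(\varphi)$ is well-defined, yields exactly one pair $\langle u, v\rangle \in \mathcal{V}_3 \times \mathcal{V}_3$, and does not change thereafter. With this in hand, define $\mathcal{I}' = \langle \mathbf{C}^{\mathcal{I}'}, \mathbf{R}^{\mathcal{I}'}\rangle$ by $\mathbf{C}^{\mathcal{I}'} := \mathbf{C}^{\mathcal{I}}$ and, for every relation symbol $R$ of arity $n$ and all $c_1^{\mathcal{I}}, \ldots, c_n^{\mathcal{I}} \in \mathbf{C}^{\mathcal{I}}$, $R^{\mathcal{I}'}(c_1^{\mathcal{I}'}, \ldots, c_n^{\mathcal{I}'}) := \zeta_c(R(c_1, \ldots, c_n))$, i.e.\ exactly the value $R^{\mathcal{I}}$ already assigns, now regarded as a fixed map with no reference to $L_{\mathfrak{C}}$. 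Since $\mathcal{I}$ was assumed to be an $\mathbf{AC}$ interpretation and $\mathcal{I}'$ assigns the same generalized truth value to every atomic formula over the same concept domain, $\mathcal{I}'$ inherits every structural constraint that the definition of an $\mathbf{AC}$ interpretation imposes, so $\mathcal{I}'$ is a bona fide standard $\mathbf{AC}$ interpretation.

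It then remains to check that $\mathcal{I}$ and $\mathcal{I}'$ agree on every formula of $\mathcal{L}$, not merely on atoms. This is a routine induction on the construction of formulas: the base case is immediate from the definition of $\mathcal{I}'$, and each inductive step appeals to the fact that the $\mathbf{AC}$ semantic clause for a compound formula --- negation, the conceptivist conjunction and disjunction, and the restricted quantifiers --- yields its generalized truth value as a fixed function of the values of its immediate subformulas (and, for the quantifiers, of the shared concept domain $\mathbf{C}^{\mathcal{I}} = \mathbf{C}^{\mathcal{I}'}$), which coincide by the induction hypothesis. Hence $\mathcal{I}(\psi) = \mathcal{I}'(\psi)$ for all $\psi \in \mathcal{L}$, which is exactly the claimed preservation of semantic behavior.

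The step I expect to require the most attention is the first: making precise the sense in which an LLM-grounded interpretation ``is'' a function, given that $\zeta_c$ is described operationally via a mutable cache and that the underlying $\zeta$ may in principle return different values on different invocations. The argument must lean on Lemma~\ref{lem:stability} to collapse that operational description to a static assignment. A secondary point worth verifying is that the induction really does cover the restricted-quantification clauses of $\mathbf{ACrQ}$, since those are the nonstandard part of the semantics; but because $\mathbf{C}^{\mathcal{I}'} = \mathbf{C}^{\mathcal{I}}$, this reduces to the same truth-functional bookkeeping as for the propositional connectives.
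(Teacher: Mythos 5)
Your proof is correct and follows essentially the same route as the paper's: construct $\mathcal{I}'$ by copying the domain and the atom-level assignments of $\mathcal{I}$, and then show agreement on all formulas by induction on formula structure using the compositional clauses of Definition~\ref{def:stdmap}. Your added framing --- noting that an LLM-grounded interpretation is already an $\mathbf{AC}$ interpretation by Definition~\ref{def:llmgrounded} and that the construction amounts to ``forgetting'' the LLM provenance, with Lemma~\ref{lem:stability} securing that the cached $\zeta_c$ really determines a static total function --- is sound and makes explicit a point the paper leaves implicit, but it does not change the substance of the argument.
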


\begin{proof}
    Given an LLM-grounded $\mathbf{AC}$ interpretation $\mathcal{I} = \langle \mathbf{C}^\mathcal{I}, \mathbf{R}^\mathcal{I} \rangle$, we define a standard $\mathbf{AC}$ interpretation $\mathcal{I}' = \langle \mathbf{C}^{\mathcal{I}'}, \mathbf{R}^{\mathcal{I}'} \rangle$ such that:
    \begin{enumerate}
        \item $\mathbf{C}^{\mathcal{I}'} = \mathbf{C}^\mathcal{I}$
        \item $\mathbf{R}^{\mathcal{I}'} = \mathbf{R}^\mathcal{I}$
        \item For all $R \in \mathbf{R} \text{, } R^{\mathcal{I}'}(c_1^{\mathcal{I}'}, \ldots, c_n^{\mathcal{I}'}) = R^\mathcal{I}(c_1^\mathcal{I}, \ldots, c_n^\mathcal{I})$
        \item For all $c \in \mathcal{C}' \text{, } c^{\mathcal{I}'} = c^\mathcal{I}$
    \end{enumerate}
    We show that for any formula $\varphi \in \mathcal{L}$, $\mathcal{I}(\varphi) = \mathcal{I}'(\varphi)$ by induction on the complexity of $\varphi$:
    \begin{itemize}
        \item For $\varphi = R(c_1, \ldots c_n) \in \mathcal{L}_{AT}$, then by Definition \ref{def:llmgrounded}, $\mathcal{I}(\varphi) = \mathcal{I}(R(c_1, \ldots, c_n)) = R^\mathcal{I}(c^\mathcal{I}_1, \ldots, c^\mathcal{I}_n) = R^{\mathcal{I}'}(c^{\mathcal{I}'}_1, \ldots, c^{\mathcal{I}'}_n) = \mathcal{I}'(R(c_1, \ldots, c_n)) = \mathcal{I}'(\varphi)$.
        \item For $\varphi = \neg \psi$, by Definition \ref{def:stdmap}, $\mathcal{I}(\neg \psi) = \langle \mathcal{I}_1(\psi), \mathcal{I}_0(\psi) \rangle$ and $\mathcal{I}'(\neg \psi) = \langle \mathcal{I'}_1(\psi), \mathcal{I'}_0(\psi) \rangle$. By the inductive hypothesis, $\mathcal{I}(\psi) = \mathcal{I}'(\psi)$. Therefore $\mathcal{I}(\neg \psi) = \mathcal{I}'(\neg \psi)$.
        \item For $\varphi = \psi \land \chi$, by Definition \ref{def:stdmap}, $\mathcal{I}(\psi \land \chi) = \langle \mathcal{I}_0(\psi) \, \dot{\land} \, \mathcal{I}_0(\chi), \mathcal{I}_1(\psi) \, \dot{\lor} \, \mathcal{I}_1(\chi) \rangle$ and similarly for $\mathcal{I}'$. By the inductive hypothesis, $\mathcal{I}(\psi) = \mathcal{I}'(\psi)$ and $\mathcal{I}(\chi) = \mathcal{I}'(\chi)$. Therefore $\mathcal{I}(\psi \land \chi) = \mathcal{I}'(\psi \land \chi)$.
    \end{itemize}
    The same arguments apply in the cases of disjunction, restricted universal quantification, and restricted existential quantification, again following Definition \ref{def:stdmap} in Appendix \ref{apd:ac}.
\end{proof}

\begin{lemma}[Standard to LLM-grounded interpretation mapping]
\label{lem:stdtogrounded}
    For any standard $\mathbf{AC}$ interpretation $\mathcal{I} = \langle \mathbf{C}^\mathcal{I}, \mathbf{R}^\mathcal{I} \rangle$, there exists an LLM-grounded $\mathbf{AC}$ interpretation $\mathcal{I}'$ that preserves the semantic behavior of $\mathcal{I}$ on all formulas.
\end{lemma}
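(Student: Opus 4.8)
The plan is to mirror the construction in the proof of Lemma~\ref{lem:groundedtostd}, but run it in the opposite direction: rather than forgetting the LLM-grounding, we \emph{install} it. Given a standard $\mathbf{AC}$ interpretation $\mathcal{I} = \langle \mathbf{C}^\mathcal{I}, \mathbf{R}^\mathcal{I} \rangle$, the key observation is that a standard $\mathbf{AC}$ interpretation already assigns to every atomic formula $R(c_1,\ldots,c_n)$ a value $R^\mathcal{I}(c_1^\mathcal{I},\ldots,c_n^\mathcal{I}) \in \mathcal{V}_3 \times \mathcal{V}_3$. So I would define a cache $c$ that is total on $\mathcal{L}_{AT}$ by setting
$$c\bigl(R(c_1,\ldots,c_n)\bigr) := R^\mathcal{I}(c_1^\mathcal{I},\ldots,c_n^\mathcal{I})$$
for every atomic formula. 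Because this cache is already total, Definition~\ref{def:zetasubc} gives $\zeta_c(\varphi) = c(\varphi)$ for every $\varphi \in \mathcal{L}_{AT}$, so the behaviour of the underlying evaluator $\zeta$ (and hence of the specific LLM) plays no role: the caching layer alone fixes all atomic valuations. This is the one point that needs a word of justification, since Definition~\ref{def:zetasubc} does not require the cache to start empty; I would note that a pre-populated, persistent, immutable cache is a legitimate instance of $\zeta_c$, and that without allowing the cache (or the LLM) to vary, the statement could not hold for an arbitrary $\mathcal{I}$.

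Next I would define the candidate interpretation $\mathcal{I}' = \langle \mathbf{C}^{\mathcal{I}'}, \mathbf{R}^{\mathcal{I}'} \rangle$ by copying the first-order structure of $\mathcal{I}$: $\mathbf{C}^{\mathcal{I}'} := \mathbf{C}^\mathcal{I}$, $\mathbf{R}^{\mathcal{I}'} := \mathbf{R}^\mathcal{I}$, $c^{\mathcal{I}'} := c^\mathcal{I}$ for each constant, and $R^{\mathcal{I}'}(c_1^{\mathcal{I}'},\ldots,c_n^{\mathcal{I}'}) := R^\mathcal{I}(c_1^\mathcal{I},\ldots,c_n^\mathcal{I})$ for each relation symbol $R \in \mathbf{R}$. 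Since the right-hand side equals $\zeta_c(R(c_1,\ldots,c_n))$ by the choice of cache, $\mathcal{I}'$ satisfies the defining equation of Definition~\ref{def:llmgrounded}, so it is an LLM-grounded $\mathbf{AC}$ interpretation; and it is a bona fide $\mathbf{AC}$ interpretation because it is literally the standard interpretation $\mathcal{I}$ with its relation interpretations renamed.

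Finally I would show $\mathcal{I}(\varphi) = \mathcal{I}'(\varphi)$ for all $\varphi \in \mathcal{L}$ by induction on the complexity of $\varphi$, exactly as in Lemma~\ref{lem:groundedtostd}: the atomic base case holds by construction of the cache, and the inductive steps for $\neg$, $\land$, $\lor$, and restricted universal and existential quantification go through verbatim via Definition~\ref{def:stdmap}, since the connective and quantifier clauses are computed identically in the two interpretations once their arguments agree (the quantifier cases use $\mathbf{C}^{\mathcal{I}'} = \mathbf{C}^\mathcal{I}$ and the agreement of the constant interpretations). The only genuinely non-routine point is the remark above about treating the cache as a parameter of the construction; everything else is bookkeeping. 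Combined with Lemma~\ref{lem:groundedtostd}, this yields the two-way semantic equivalence of LLM-grounded and standard $\mathbf{AC}$ interpretations on which the soundness and completeness transfer for $\mathbf{ACrQ}$ in the remainder of Section~\ref{sec:preservation} rests.
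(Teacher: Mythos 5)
Your proposal is correct and takes essentially the same route as the paper: both constructions pre-populate the cache $c$ so that $c(R(c_1,\ldots,c_n)) = R^{\mathcal I}(c_1^{\mathcal I},\ldots,c_n^{\mathcal I})$ on all atoms, read off the induced $\zeta_c$, and then reuse the Lemma~\ref{lem:groundedtostd} induction over formula complexity. The paper is terser, but your extra remark about a pre-populated cache being a legitimate instance of Definition~\ref{def:zetasubc} makes explicit a point the paper glosses over, so this is a faithful elaboration rather than a different argument.
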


\begin{proof}
    Let $\mathcal{I}=\langle\mathbf{C}^{\mathcal{I}},\mathbf{R}^{\mathcal{I}}\rangle$ be a standard $\mathbf{AC}$ interpretation. Define a key-value store $c_{\mathcal{I}}$ such that for all atomic $R(c_1,...,c_n)\in \mathcal{L}_{AT}$, $c_{\mathcal{I}}(R(c_1,...,c_n))=R^{\mathcal{I}}(c_1^{\mathcal{I}},...,c_n^{\mathcal{I}})$. 
    Since $c_{\mathcal{I}}$ is defined on every atomic formula, $\operatorname{dom}(c_{\mathcal{I}}) = \mathcal{L}_{AT}$, and so by Definition \ref{def:zetasubc}, $\zeta_{c_{\mathcal{I}}}$ returns $c_{\mathcal{I}}(\varphi)$ directly for each $\varphi \in \mathcal{L}_{AT}$, without invoking $\zeta$. 
    The induced LLM-grounded $\mathbf{AC}$ interpretation $\mathcal{I}'$ therefore agrees with $\mathcal{I}$ on all atoms by construction. An induction on formula complexity, parallel to that in Lemma \ref{lem:groundedtostd}, extends this agreement to all formulas.
\end{proof}

\begin{corollary}[Extensional equivalence of validity]
\label{cor:extensional}
    $\Gamma\vDash_{\mathbf{AC}}\varphi$ holds if and only if the inference from $\Gamma$ to $\varphi$ is valid over all LLM-grounded $\mathbf{AC}$ interpretations.
\end{corollary}

\begin{proof}
    Immediate from Lemmas \ref{lem:groundedtostd} and \ref{lem:stdtogrounded}. This extensional equivalence is what allows the metalogical properties of $\mathbf{ACrQ}$ to transfer to the LLM-grounded setting, as the following two theorems make explicit.
\end{proof}

\begin{theorem}[Preservation of soundness]
\label{thm:soundness}
  Let \(\Gamma\) be a finite set of formulas and \(\varphi\) a formula in 
  \(\mathbf{AC}\). If $\Gamma \vdash_{\mathbf{ACrQ}} \varphi$, then $\Gamma \models_{\mathcal{I}} \varphi$ is valid for all LLM-grounded $\mathbf{AC}$ interpretations $\mathcal{I}$. 
\end{theorem}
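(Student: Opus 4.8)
The plan is to reduce the claim to the already-established soundness of the tableau calculus $\mathbf{ACrQ}$ with respect to the standard semantics of $\mathbf{AC}$ (the soundness half of Definition~18 of \citep{ferguson2021tableaux}), and then transport that result across the semantic equivalence between standard and LLM-grounded interpretations supplied by Lemmas~\ref{lem:groundedtostd} and~\ref{lem:stdtogrounded}. In effect, the $\mathbf{ACrQ}$ soundness theorem does all the logical work, and the interpretation-mapping lemmas do nothing but translate the models.

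First I would invoke Lemma~\ref{lem:stability} to confirm that an LLM-grounded $\mathbf{AC}$ interpretation $\mathcal{I}$ is a legitimate interpretation at all: every atomic formula receives exactly one value in $\mathcal{V}_3 \times \mathcal{V}_3$ and this value is fixed throughout the reasoning process, so the recursive clauses of Definition~\ref{def:stdmap} extend it to a well-defined total valuation on $\mathcal{L}$. This is precisely what makes the statement $\Gamma \models_{\mathcal{I}} \varphi$ meaningful in the first place.

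Next, fix an arbitrary LLM-grounded interpretation $\mathcal{I}$ and apply Lemma~\ref{lem:groundedtostd} to obtain a standard $\mathbf{AC}$ interpretation $\mathcal{I}'$ with $\mathcal{I}(\psi) = \mathcal{I}'(\psi)$ for every $\psi \in \mathcal{L}$. Since the $\mathbf{AC}$ consequence relation is defined purely in terms of the values a single interpretation assigns to the premises and the conclusion, together with the fixed set of designated values, the agreement of $\mathcal{I}$ and $\mathcal{I}'$ on all of $\Gamma \cup \{\varphi\}$ means that $\mathcal{I}'$ validates the inference from $\Gamma$ to $\varphi$ if and only if $\mathcal{I}$ does. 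Now suppose $\Gamma \vdash_{\mathbf{ACrQ}} \varphi$. By the soundness of $\mathbf{ACrQ}$ with respect to the standard $\mathbf{AC}$ semantics, $\Gamma \models_{\mathbf{AC}} \varphi$, i.e.\ the inference holds in every standard $\mathbf{AC}$ interpretation; in particular it holds in $\mathcal{I}'$, hence in $\mathcal{I}$. As $\mathcal{I}$ was arbitrary, $\Gamma \models_{\mathcal{I}} \varphi$ for every LLM-grounded $\mathbf{AC}$ interpretation. (Equivalently, one can appeal directly to the corollary stated just before the theorem, which already packages the ``standard validity iff LLM-grounded validity'' equivalence; only the left-to-right direction, underwritten by Lemma~\ref{lem:groundedtostd}, is needed here.)

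I expect the only real subtlety --- and the point that deserves an explicit sentence in the write-up --- to be the observation that ``semantic behavior is preserved on all formulas'' (Lemma~\ref{lem:groundedtostd}) is exactly the strength required to preserve the model relation $\models$: validity in $\mathbf{AC}$ is a condition on the tuple of generalized truth values that $\Gamma$ and $\varphi$ receive under one interpretation, and no structural feature of the interpretation beyond those values is consulted, so coincidence of valuations forces coincidence of the satisfaction verdict. Everything else is bookkeeping; in particular, finiteness of $\Gamma$ is inherited from the hypothesis of the $\mathbf{ACrQ}$ soundness theorem and plays no further role.
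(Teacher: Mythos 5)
Your argument is correct but takes a more circuitous route than the paper's. You invoke Lemma~\ref{lem:groundedtostd} to transport an arbitrary LLM-grounded interpretation $\mathcal{I}$ to a standard interpretation $\mathcal{I}'$, argue that $\models_{\mathbf{AC}}$ covers $\mathcal{I}'$, and then carry the verdict back to $\mathcal{I}$ via agreement of valuations. That reasoning is sound, but the paper's proof is shorter and dispenses with the lemma entirely: by Definition~\ref{def:llmgrounded}, an LLM-grounded $\mathbf{AC}$ interpretation simply \emph{is} an $\mathbf{AC}$ interpretation (the LLM grounding is an additional constraint on the predicate assignments, not a different kind of object). So once $\Gamma \vdash_{\mathbf{ACrQ}} \varphi$ gives $\Gamma \models_{\mathbf{AC}} \varphi$ by the cited soundness theorem, validity holds in every $\mathbf{AC}$ interpretation, the LLM-grounded ones included, with no transfer needed. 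The mapping lemmas only become load-bearing in the converse direction: for Theorem~\ref{thm:completeness}, validity over the \emph{subclass} of LLM-grounded interpretations must be extended to validity over \emph{all} standard interpretations, and that is exactly where Lemma~\ref{lem:groundedtostd} does real work. Your preliminary appeal to Lemma~\ref{lem:stability} to ensure $\mathcal{I}$ is a well-defined total valuation is a reasonable explanatory remark, though the paper also treats that as packaged into Definition~\ref{def:llmgrounded}. In short: correct, but you have imported the heavier machinery reserved for completeness into a direction where a one-line subclass observation suffices.
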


\begin{proof}
By Theorem 3 of \cite{ferguson2021tableaux}, if $\Gamma \vdash_{\mathbf{ACrQ}} \varphi$, then $\Gamma \models_{\mathbf{AC}} \varphi$. Therefore $\Gamma \models_{\mathcal{I}} \varphi$, as is the case for any $\mathbf{AC}$ interpretation, standard or LLM-grounded.
\end{proof}

\begin{theorem}[Preservation of completeness]
\label{thm:completeness}
  Let \(\Gamma\) be a finite set of formulas and \(\varphi\) a formula in 
  \(\mathbf{AC}\). Then if for all LLM-grounded $\mathbf{AC}$ interpretations $\mathcal{I}$, $\Gamma \models_{\mathcal{I}} \varphi$, then $\Gamma \vdash_{\mathbf{ACrQ}} \varphi$.
\end{theorem}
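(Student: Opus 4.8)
The plan is to reduce completeness of $\mathbf{ACrQ}$ relative to the LLM-grounded semantics to its completeness relative to the ordinary $\mathbf{AC}$ semantics, which is already supplied by \cite{ferguson2021tableaux}. The bridge is the corollary of Lemmas \ref{lem:groundedtostd} and \ref{lem:stdtogrounded} stated just above the theorems: the class of LLM-grounded $\mathbf{AC}$ interpretations and the class of standard $\mathbf{AC}$ interpretations validate exactly the same inferences, so $\Gamma \models_{\mathbf{AC}} \varphi$ iff $\Gamma$ entails $\varphi$ over every LLM-grounded interpretation. Given this, the theorem is essentially the completeness half of that equivalence composed with the known completeness of $\mathbf{ACrQ}$.

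Concretely, I would argue the contrapositive. Suppose $\Gamma \not\vdash_{\mathbf{ACrQ}} \varphi$. By the completeness direction of the soundness-and-completeness result for $\mathbf{ACrQ}$ in \cite{ferguson2021tableaux} (the converse of the implication invoked in the proof of Theorem \ref{thm:soundness}), there is a standard $\mathbf{AC}$ interpretation $\mathcal{I}$ that is a countermodel: $\mathcal{I}$ satisfies every $\psi \in \Gamma$ but $\mathcal{I} \not\models_{\mathbf{AC}} \varphi$. Applying Lemma \ref{lem:stdtogrounded} to $\mathcal{I}$ yields an LLM-grounded $\mathbf{AC}$ interpretation $\mathcal{I}'$ that agrees with $\mathcal{I}$ on the generalized truth value $\langle u,v\rangle \in \mathcal{V}_3 \times \mathcal{V}_3$ assigned to every formula of $\mathcal{L}$ — in particular on each member of $\Gamma$ and on $\varphi$. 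Since the $\mathbf{AC}$ satisfaction relation (with restricted quantification, read bilaterally) is defined purely in terms of these assigned values and the designated ones, $\mathcal{I}'$ is itself a countermodel, so $\Gamma \not\models_{\mathcal{I}'} \varphi$. Hence it is false that $\Gamma \models_{\mathcal{I}} \varphi$ for all LLM-grounded $\mathbf{AC}$ interpretations $\mathcal{I}$. Contraposing gives the theorem; reading the chain forward, the hypothesis forces $\Gamma \models_{\mathbf{AC}} \varphi$, and completeness of $\mathbf{ACrQ}$ then yields $\Gamma \vdash_{\mathbf{ACrQ}} \varphi$.

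I expect the only delicate point to be making the countermodel-transfer step precise in the three-valued bilateral setting: one must check that the phrase ``preserves the semantic behavior on all formulas'' in Lemma \ref{lem:stdtogrounded} is exactly the statement $\mathcal{I}(\psi) = \mathcal{I}'(\psi)$ for every $\psi \in \mathcal{L}$, and that being a model (or countermodel) of the inference $\Gamma \Rightarrow \varphi$ factors through this value assignment, so that equal values imply equal model status. This is underwritten by Lemma \ref{lem:stability} (well-definedness and stability of $\mathcal{I}$ on atoms) together with the structural induction already carried out in the proof of Lemma \ref{lem:groundedtostd}, so the argument is routine; the substantive work has effectively been discharged by the two mapping lemmas, and this theorem is their packaging with the imported completeness of $\mathbf{ACrQ}$. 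A secondary caveat is simply to confirm that the result cited from \cite{ferguson2021tableaux} indeed establishes completeness of $\mathbf{ACrQ}$ for the restricted-quantifier variant of $\mathbf{AC}$ used here, not merely soundness.
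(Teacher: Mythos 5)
Your proof is correct and reaches the theorem by essentially the same reduction—to the known completeness of $\mathbf{ACrQ}$ over standard $\mathbf{AC}$ interpretations via the two mapping lemmas—but the route you take is cleaner, and in fact more careful than the paper's stated proof. You argue contrapositively: from $\Gamma \not\vdash_{\mathbf{ACrQ}} \varphi$ extract a standard countermodel, then apply Lemma~\ref{lem:stdtogrounded} to convert it into an LLM-grounded countermodel. That is the right lemma for this direction of the equivalence: completeness needs every standard interpretation to be mirrored by some LLM-grounded one. The paper's proof, by contrast, is stated in the forward direction and cites Lemma~\ref{lem:groundedtostd}, which only converts LLM-grounded interpretations into standard ones; as written, that establishes $\Gamma \models_{\mathcal{I}'} \varphi$ only for those standard $\mathcal{I}'$ arising from LLM-grounded interpretations, whereas the invocation of completeness requires validity over \emph{all} standard interpretations. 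The gap is bridged in the paper by the corollary stated just before the theorems (which explicitly draws on \emph{both} lemmas to equate the two validity notions), but the theorem's proof text itself elides this. Your version makes the dependence on Lemma~\ref{lem:stdtogrounded} explicit and is therefore the more defensible writeup; the only real work left to check—as you flag—is that ``preserves the semantic behavior on all formulas'' means pointwise equality $\mathcal{I}(\psi) = \mathcal{I}'(\psi)$ and that the cited Theorem~4 of \cite{ferguson2021tableaux} is genuinely completeness for the restricted-quantifier calculus, both of which the paper does supply.
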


\begin{proof}
    Let $\mathcal{I}'$ be an arbitrary standard $\mathbf{AC}$ interpretation. By Lemma \ref{lem:stdtogrounded}, there exists an LLM-grounded $\mathbf{AC}$ interpretation $\mathcal{I}$ that agrees with $\mathcal{I}'$ on all formulas. By hypothesis, $\Gamma \models_{\mathcal{I}} \varphi$, and therefore $\Gamma \models_{\mathcal{I}'} \varphi$. Since $\mathcal{I}'$ was arbitrary, $\Gamma \models_{\mathcal{I}'} \varphi$ holds for every standard $\mathbf{AC}$ interpretation. Then by Theorem 4 of \cite{ferguson2021tableaux}, $\Gamma \vdash_{\mathbf{ACrQ}} \varphi$.
\end{proof}

Section \ref{sect:tableau} presents our implementation of these theoretical concepts in a complete tableau reasoning system, demonstrating the practical realizability of LLM-grounded interpretations.

\section{Factuality evaluation of LLM-grounded interpretations}
\label{sect:experiments}

To validate the factuality of LLM-grounded interpretations, we evaluate the bilateral evaluation function $\zeta$ that underlies them, using question/answer pairs from two short-form factuality benchmarks. Given our theoretical results above, this focus on the practicality of atomic formula valuation within our framework provides a proof-of-concept that a Belnap computer of the type described above is feasible.\footnote{Code and data for the experiment is available at \url{https://github.com/bradleypallen/bilateral-factuality-evaluation}.} 

\paragraph{Data and metrics} Question/answer pairs in short-form factuality benchmarks are typically factoids providing a question together with a short answer (Figure \ref{fig:discovery}). We use these as an approximation to the verbalization $\delta(\varphi)$ of an atomic formula $\varphi$. We used the short-form factuality benchmarks GPQA \citep{rein2023gpqa} and SimpleQA \citep{wei2024measuring} to create two test datasets (each with N=400) for our experiments, each balanced between positive and negative examples. Test data preparation and experimental setup are discussed in Appendix \ref{apd:experiments}. We evaluated the performance of $\zeta$ over the two datasets using two metrics: \textit{macro F1} against question/answer pairs that the judge did not abstain from, i.e., where the judge provided a valuation of $\zeta(\varphi) = \langle \mathfrak{t}, \mathfrak{f} \rangle$ (i.e., verified and not refuted) or $\zeta(\varphi) = \langle \mathfrak{f}, \mathfrak{t} \rangle$ (i.e., not verified and refuted), and \textit{coverage}, which is the percentage of the total set of question/answer pairs where the judge did not abstain. We also measured the \textit{time taken per evaluation}, and the \textit{number of tokens used per evaluation}.

\paragraph{LLM judges} We used three flagship LLMs (Llama 4 Maverick, GPT-4o, and Claude 3.5 Sonnet), and three distilled LLMs (Llama 4 Scout, GPT-4o Mini, and Claude 3.5 Haiku). Each LLM was evaluated using three different pairs of prompts: direct prompts that asked for a verification or refutation for the QA pair, zero-shot chain-of-thought prompts, and few-shot chain-of-thought prompts. As a baseline, we also used the six models and three prompt types to perform \textit{unilateral} factuality evaluation, which asks an LLM to simply determine whether a question/answer pair is $\mathfrak{t}$ or $\mathfrak{f}$. The prompt templates used are provided in Appendix \ref{apd:templates}. Standard errors (in parentheses) presented in tables in this section and in Appendix \ref{apd:experiments} were estimated by bootstrap resampling: 1000 subsamples of size N=100 were drawn from the classification results within each model category \citep{politis1994large}. 

\begin{table}[t]
\centering
\footnotesize
\begin{tabular}{llrrrrrrrr}
\toprule
& & \multicolumn{2}{c}{\textbf{Bilateral} ($\zeta$)} & \multicolumn{2}{c}{\textbf{Unilateral}} \\
\cmidrule(lr){3-4} \cmidrule(lr){5-6}
\textbf{Dataset} & \textbf{Model Type} & \textbf{Macro F1} & \textbf{Coverage} & \textbf{Macro F1} & \textbf{Coverage} \\
\midrule
GPQA & flagship & 0.699 (0.010) & 0.589 (0.008) & 0.633 (0.007)	& 1.000 (0.000) \\
& distilled & 0.608 (0.011) & 0.504 (0.008) & 0.559 (0.008) & 1.000 (0.000) \\
\midrule
SimpleQA & flagship & 0.736 (0.009) & 0.584 (0.008) & 0.657 (0.008) & 1.000 (0.000)\\
& distilled & 0.624 (0.011) & 0.499 (0.008) & 0.570 (0.008) & 1.000 (0.000) \\
\bottomrule
\end{tabular}
\caption{Summary macro F1 (given abstention) and coverage metrics for the bilateral factuality evaluation function $\zeta$ and a baseline unilateral factuality evaluation function.}
\label{tab:summaryperf}
\end{table}

\paragraph{Results} Table \ref{tab:summaryperf} compares macro F1 and coverage between the unilateral and bilateral evaluations across the two datasets, grouped by whether a model's type was flagship or distilled, and Table \ref{tab:summaryperftnt} does the same for mean time of execution and mean tokens used. Table \ref{tab:summarytvd} summarizes the distribution of truth values produced in bilateral evaluation. Detailed breakouts are shown in the tables in Appendix \ref{apd:experiments}. Our key findings are as follows. 

\begin{enumerate}
    \item \textbf{\textit{Bilateral evaluation macro F1 outperforms unilateral evaluation $(p < 0.01)$ at the cost of lower coverage}}. The mean difference between bilateral and unilateral macro F1 is 0.062 and for coverage is -0.456.
    \item \textbf{\textit{Flagship models outperform distilled models $(p < 0.01)$}}. Table \ref{tab:summaryperf} shows that this is the case for both unilateral and bilateral approaches, though the difference is more pronounced with bilateral evaluation (0.091 on the GPQA dataset, 0.112 on the SimpleQA dataset) versus unilateral evaluation (0.074 on the GPQA dataset, 0.087 on the SimpleQA dataset).
    \item \textbf{\textit{Bilateral evaluation is more expensive than unilateral evaluation $(p < 0.01)$}}. Table \ref{tab:summaryperftnt} shows that bilateral evaluation takes roughly twice as much time and twice as many tokens as unilateral evaluation. However, evaluation times vary widely, with GPT-4o Mini using direct prompting taking a mean of 2.5 seconds, and Llama 4 Scout using zero-shot prompting taking a mean of 43.4 seconds. Token consumption scales predictably, from up to a mean of 2,008.6 tokens with direct prompting, and up to a mean of 6,704.7 tokens with few-shot prompting.
    \item \textbf{\textit{Inconsistency occurs significantly more frequently than incompleteness $(p < 0.05)$}}. Table \ref{tab:summarytvd} shows that bilateral judge models more frequently abstain by assigning $\langle \mathfrak{t},\mathfrak{t}\rangle$ (both verified and refuted) as opposed to  $\langle \mathfrak{f},\mathfrak{f}\rangle$ (neither verified nor refuted).
\end{enumerate}

We note that in the experiments reported here, the $\mathfrak{e}$ component of the weak Kleene truth value never occurred: every bilateral evaluation produced a pair $\langle u, v \rangle \in \{ \mathfrak{t}, \mathfrak{f} \} \times \{ \mathfrak{t}, \mathfrak{f} \}$. The otherwise clauses of Definition \ref{def:zeta} — which assign $\mathfrak{e}$ when neither the verification nor the refutation marker tokens appear in the LLM output — were therefore never triggered. We attribute this to two factors. First, the verification and refutation prompt templates (Appendix \ref{apd:templates}) instruct the LLM to terminate its response with a single line drawn from a two-element set ($\{ \mathsf{\ulcorner VERIFIED \urcorner, \ulcorner CANNOT\ VERIFY \urcorner} \}$ and $\{ \mathsf{ \ulcorner REFUTED \urcorner, \ulcorner CANNOT\ REFUTE \urcorner} \}$, respectively), which strongly constrains the output distribution. Second, all six models evaluated — three flagship (Llama 4 Maverick, GPT-4o, Claude 3.5 Sonnet) and three distilled (Llama 4 Scout, GPT-4o Mini, Claude 3.5 Haiku) — are highly reliable at following such terminal-line instructions, and we observed no API timeouts or malformed responses over the $2 \times 400 \times 18 = 14,400$ calls underlying the results in Tables \ref{tab:summaryperf}-\ref{tab:summarytvd}. The $\mathfrak{e}$ component nevertheless remains a principled and necessary part of the formalism: it absorbs genuine instruction-following failures, API errors, and timeouts without corrupting the reasoner's state, and it is likely to be exercised in longer-running deployments, under rate-limit pressure, or with smaller local models whose instruction-following reliability is lower. We retain it in Definition \ref{def:zeta} precisely for those settings.

\begin{table}[t]
\centering
\footnotesize
\begin{tabular}{llrrrrrrrr}
\toprule
& & \multicolumn{2}{c}{\textbf{Bilateral} ($\zeta$)} & \multicolumn{2}{c}{\textbf{Unilateral}} \\
\cmidrule(lr){3-4} \cmidrule(lr){5-6}
\textbf{Dataset} & \textbf{Model Type} & \textbf{Time (s)} & \textbf{Tokens} & \textbf{Time (s)} & \textbf{Tokens} \\
\midrule
GPQA & flagship & 36.747 (0.281) & 4781.663 (45.878) & 12.411 (0.140) & 2212.766 (27.182) \\
& distilled & 34.771 (0.224) & 4731.672 (41.661) & 13.439 (0.095) & 2532.153 (26.615) \\
\midrule
SimpleQA & flagship & 32.789 (0.274) & 4163.807 (37.704) & 12.256 (0.140) & 2100.465 (25.634)\\
& distilled & 30.310 (0.253) & 3964.037 (39.991) & 11.424 (0.120) & 2167.419 (28.044) \\
\bottomrule
\end{tabular}
\caption{Summary execution time (in seconds) and total tokens used for the bilateral factuality evaluation function $\zeta$ and a baseline unilateral factuality evaluation function. }
\label{tab:summaryperftnt}
\end{table}

\begin{table}[t]
\centering
\footnotesize
\begin{tabular}{llrrrrrrrr}
\toprule
\textbf{Dataset} & \textbf{Model Type} & \textbf{$\langle \mathfrak{t},\mathfrak{t} \rangle$} & \textbf{$\langle \mathfrak{t},\mathfrak{f} \rangle$} & \textbf{$\langle \mathfrak{f},\mathfrak{t} \rangle$} & $\langle \mathfrak{f},\mathfrak{f} \rangle$ \\
\midrule
GPQA & flagship & 0.301 (0.008) & 0.211 (0.007) & 0.378 (0.008) & 0.110 (0.005)  \\
& distilled & 0.299 (0.007) & 0.192 (0.006) & 0.312 (0.007) & 0.197 (0.006) \\
\midrule
SimpleQA & flagship & 0.310 (0.007) & 0.228 (0.007) & 0.357 (0.008) & 0.106 (0.005) \\
& distilled & 0.301 (0.007) & 0.233 (0.007) & 0.266 (0.007) & 0.200 (0.006) \\
\bottomrule
\end{tabular}
\caption{Summary truth value distributions for the bilateral factuality evaluation function $\zeta$. Of note is the fact that the models evaluated did not produce any truth values where $u = \mathfrak{e}$ or $v = \mathfrak{e}$ during the evaluation.}
\label{tab:summarytvd}
\end{table}

\paragraph{The accuracy-coverage trade-off}
\label{sect:accuracy-coverage}

A key finding from our experimental results is that bilateral evaluation achieves higher macro F1 than unilateral evaluation at the cost of lower coverage (Table~\ref{tab:summaryperf}). This trade-off is not a limitation but a feature: the increased visibility into the LLM's epistemic state that bilateral evaluation provides enables more conservative and reliable predictions.

The coverage reduction occurs because bilateral evaluation abstains whenever the LLM's epistemic state is indefinite---either inconsistent ($\langle \mathfrak{t}, \mathfrak{t} \rangle$) or ignorant ($\langle \mathfrak{f}, \mathfrak{f} \rangle$). Rather than forcing a commitment when evidence is conflicting or absent, the system acknowledges uncertainty. This epistemic humility is precisely what produces the accuracy improvement: by abstaining on difficult or ambiguous cases, the system's definite verdicts are more reliable.

This pattern aligns with the well-established framework of \textit{selective classification} (also known as classification with abstention or the reject option) \citep{el2010foundations,geifman2017selective}. Selective classification is motivated by applications where conservative prediction is critical---the cost of an incorrect prediction outweighs the cost of abstention. Medical diagnosis is a canonical example: a system that abstains on ambiguous cases and defers to human judgment is preferable to one that confidently misdiagnoses. The medication safety example in Section~\ref{sect:tableau} illustrates this directly: the system's abstention on \texttt{Cardiosafe(aspirin)} (a genuinely contested claim) and its glut detection on \texttt{Nonaddictive(valium)} (a claim the LLM can refute) both represent appropriate epistemic caution in a domain where errors carry significant consequences.

Bilateral evaluation provides a principled mechanism for selective classification that goes beyond simple confidence thresholding. Standard approaches to selective classification typically use model confidence scores to decide when to abstain, but such scores are notoriously miscalibrated in LLMs \citep{kadavath2022language}. Our approach instead grounds abstention in the structure of evidence: abstention occurs when verification and refutation evidence are both present (glut) or both absent (gap). This provides interpretable diagnostics---a downstream system can distinguish between ``the LLM is conflicted'' and ``the LLM lacks relevant knowledge,'' enabling different handling strategies for each case.

The connection to verbalization (Section~\ref{sect:verbalization}) is direct: gap valuations ($\langle \mathfrak{f}, \mathfrak{f} \rangle$) arise when the verbalized formula fails to activate relevant LLM knowledge. Improving verbalization strategies would shift some gaps to definite valuations, increasing coverage while maintaining accuracy. Glut valuations ($\langle \mathfrak{t}, \mathfrak{t} \rangle$), by contrast, reflect genuine LLM inconsistency that no verbalization strategy can resolve---these represent cases where the LLM's parametric knowledge is itself contradictory.

\paragraph{Bilateral truth value distributions across subject domains}

We note that our experimental benchmarks (GPQA and SimpleQA) span diverse subject matters, and the distribution of truth values likely varies across domains. Highly technical domains may produce more gaps due to verbalization challenges; domains with contested claims may produce more gluts. Table~\ref{tab:bilateral-topic-distribution} presents the distribution of bilateral truth values across ten topic categories in the SimpleQA dataset using zero-shot evaluation, revealing systematic variation in epistemic states both across models and subject domains.

Three distinct behavioral patterns emerge among model families. GPT models exhibit a pronounced tendency toward contradiction, with glut rates ($\langle \mathfrak{t},\mathfrak{t} \rangle$) averaging 59.0\% and near-zero gap rates ($\langle \mathfrak{f},\mathfrak{f} \rangle \approx 0.1\%$), indicating that these models routinely find both supporting and refuting evidence for the same claim. In contrast, Claude models display greater epistemic caution, producing substantially lower glut rates (19.6\%) and higher gap rates (30.4\%), suggesting a willingness to withhold judgment when evidence is inconclusive. Llama models occupy an intermediate position with high verification rates ($\langle \mathfrak{t},\mathfrak{f} \rangle = 29.8\%$) but also elevated glut rates (52.9\%).

Notably, topic difficulty varies considerably: Video games and Sports exhibit the highest glut rates (63.6\% and 47.3\%, respectively), while History shows the lowest (35.7\%) and the most consistent cross-model behavior ($\sigma = 15.0$). These findings demonstrate that bilateral evaluation exposes model-specific epistemic signatures that unilateral \textsc{true}/\textsc{false} judgments would collapse, and suggest that certain knowledge domains---particularly entertainment and popular culture---present greater challenges for consistent factuality assessment than more extensively documented domains such as History.

\section{A Belnap computer with paraconsistent reasoning and LLM integration}
\label{sect:tableau}
To evaluate the practical utility of our theoretical framework, we implemented a reasoning system for $\mathbf{ACrQ}$ in Python with type annotations and a modular architecture separating formula representation, semantic evaluation, and tableau construction. 

\paragraph{System architecture} The architecture supports two complementary reasoning modes:
\begin{itemize}
    \item \textbf{Forward-chaining inference} derives new atomic formulas by instantiating universally quantified rules over ground assertions. Given a theory containing rules of the form $t\!:\![\forall X\, P(X)]Q(X)$ and ground assertions $t\!:\!P(a)$, the forward-chaining procedure derives $t\!:\!Q(a)$ and iterates until no new facts can be inferred. Each newly derived atomic formula is then evaluated by the LLM judge using the caching bilateral factuality evaluation function $\zeta_c$, which may introduce additional assertions (confirming the inference) or their duals (indicating LLM refutation), potentially surfacing gluts.
    \item \textbf{Tableau-based satisfiability checking} determines whether the current theory is satisfiable under $\mathbf{ACrQ}$ semantics, extending the tableau calculus defined by \citet{ferguson2021tableaux}. The tableau procedure decomposes signed formulas according to $\mathbf{ACrQ}$ rules until branches either close (indicating unsatisfiability) or remain open (indicating satisfiability). For theories containing gluts---formulas $\varphi$ where both $t\!:\!\varphi$ and $t\!:\!\varphi^*$ hold---the paraconsistent semantics prevents logical explosion, allowing the reasoner to identify exactly which conclusions are contested while maintaining overall satisfiability.
\end{itemize}
Together, these two modes support an iterative workflow: users assert formulas in natural language, the system translates them to $\mathbf{AC}$ syntax, forward-chaining derives consequences and queries the LLM for validation, and satisfiability checking identifies any gaps (neither verified nor refuted) or gluts (both verified and refuted) in the resulting theory.

The system adds to the set of tableau rules in Definition 18 in \citep{ferguson2021tableaux} an additional LLM evaluation rule that queries external knowledge sources during tableau construction. The LLM evaluation rule is formally specified as:

\begin{prooftree}
\AxiomC{$\text{LLM}(P(a)) = \langle u, v \rangle$}
\RightLabel{(llm-eval)}
\UnaryInfC{$\sigma : P(a) \vdash \Delta_{\text{LLM}}(\sigma, P(a), u, v)$}
\end{prooftree}
\noindent where $\Delta_{\text{LLM}}$ generates conclusions based on the bilateral truth value returned by the LLM. This rule is applied with lowest priority, after all deterministic tableau rules have been exhausted. The implementation of the tableau procedure uses the caching bilateral factuality evaluation function $\zeta_c$ where valuations for atomic formulas are persistently and immutably stored, ensuring the stability of truth values throughout the reasoning process, consistent with optimization techniques used in description logic tableau reasoners \citep{gore2007exptime}. The tableau maintains full tree connectivity by chain-connecting initial formulas as siblings, ensuring all properties remain observable and verifiable throughout the reasoning process. The system preserves $\mathbf{ACrQ}$'s paraconsistent properties, handling contradictions (or ``knowledge gluts'') without logical explosion. For example, when the LLM returns $\langle\mathbf{t}, \mathbf{t}\rangle$, indicating both positive and negative evidence, the tableau construction procedure generates nodes for both $\mathbf{t}:P(a)$ and $\mathbf{t}:P^*(a)$, maintaining the glut without causing branch closure, applying Norihiro Kamide's method \citep{kamide2010paraconsistent} of encoding bilateral valuations for atomic formulas with a predicate $P$ using a predicate $P^*$ that corresponds to $P$'s \textit{refutability} value \citep{maier2013paraconsistent,ferguson2021tableaux}.

Claude Code \citep{claudecode2025} was used to accelerate implementation while maintaining correctness through rigorous verification: the system is supported by extensive test suites (311 tests covering compliance with \cite{ferguson2021tableaux}, semantic correctness, and edge cases) with all features validated through code review by two of the authors (BA and TF) before integration.\footnote{Code and data for the reasoner is available at \url{https://github.com/bradleypallen/wkrq}; in addition, the Python package used by the reasoner for the implementation of $\zeta$ and $\zeta_c$ is available at \url{https://github.com/bradleypallen/bilateral-truth}.}

To support interactive use of the reasoner, the implementation includes a dialogue management system, Theory Manager, that bridges natural language input and formal reasoning. Natural language assertions are entered by the user, translated into formulas in $\mathbf{AC}$, and added to a set of assertions stored persistently by the reasoner as a theory. The user can issue commands to perform forward-chaining inference, and to perform satisfiability checking of the current theory, including gap and glut detection. 

As an illustration of the reasoner in action, Figure \ref{fig:theory-manager-example} shows a trace of an interactive session with the Theory Manager. In this example, the user first specifies the use of OpenAI's GPT-4o model with the session, and then makes two assertions in natural language, the canonical example of all humans are mortal and Socrates is a human, which the Theory Manager translates into the $\mathbf{AC}$ syntax. The user then invokes the prover to apply rules in a forward-chaining fashion, yielding the inference that Socrates is a mortal. Given the appearance of atomic statements in the tableau being constructed, whether directly asserted or inferred, the \texttt{llm-eval} rule is being applied; given it agrees with the atomic formulas added to this point, the tableau construction procedure does not add any additional nodes. Then the user asserts that Socrates is a pig, and this time the \texttt{llm-eval} rule yields an assertion that Socrates is not a pig. Satisfiability checking invoked by the user identifies that an inconsistency now exists in the set of assertions in the theory; however, as this does not result in explosion per the definition of $\mathbf{AC}$, the theory is still satisfiable.
\begin{figure}[t]
\centering
\begin{lstlisting}[
    basicstyle=\footnotesize\ttfamily,
    frame=single,
    backgroundcolor=\color{gray!5},
    xleftmargin=0.5cm,
    xrightmargin=0.5cm,
    escapeinside={(*}{*)},
    numbers=left,
    numberstyle=\tiny\color{gray},
    stepnumber=1,
    numbersep=5pt
]
theory> llm openai gpt-4o
[OK] LLM evaluator enabled: openai

theory> assert all humans are mortal
[OK] Asserted: S0001
  Formula: t:[forall X Human(X)]Mortal(X)

theory> assert socrates is a human
[OK] Asserted: S0002
  Formula: t:Human(socrates)

theory> infer
[OK] Inferred 1 new statement(s):
  I0003: t:Mortal(socrates)              (*\textcolor{blue}{$\leftarrow$ logical inference}*)

theory> assert socrates is a pig
[OK] Asserted: S0004
  Formula: t:Pig(socrates)

theory> infer
[OK] Inferred 1 new statement(s):
  E0005: t:Pig*(socrates)                (*\textcolor{red}{$\leftarrow$ LLM refutation}*)

theory> check
[OK] Satisfiable
[!] Found 1 glut (conflicting evidence):
  S0004: t:Pig(socrates)                 (*\textcolor{orange}{$\leftarrow$ glut detected}*)
  E0005: t:Pig*(socrates)                (*\textcolor{orange}{$\leftarrow$ but satisfiable}*)
\end{lstlisting}
\caption{Example interaction with the Theory Manager demonstrating paraconsistent reasoning. The system correctly infers \texttt{Mortal(socrates)} through logical deduction (line 13), then queries the LLM when a false claim is asserted. The LLM returns $\zeta_c(\text{Pig(socrates)}) = \langle \mathfrak{f}, \mathfrak{t} \rangle$ (cannot verify, can refute), resulting in \texttt{Pig*(socrates)} (line 22). Despite the glut between the user assertion and LLM evidence (lines 24-25), the system remains satisfiable, demonstrating paraconsistent handling of contradictions.}
\label{fig:theory-manager-example}
\end{figure}

\paragraph{A medication safety reasoning use case}

To illustrate the system's capabilities for a use case where inconsistency tolerance is practically motivated, we present a medication safety example. Medical knowledge bases frequently contain errors, outdated information, or conflicting guidelines that can lead to incorrect inferences \citep{da1989paraconsistent}. Traditional approaches to managing such inconsistencies in medical ontologies focus on detecting and repairing problematic axioms through techniques such as axiom pinpointing \citep{schlobach2007debugging,kalyanpur2006debugging,baader2008debugging}. However, these methods require that inconsistencies manifest as logical contradictions within the ontology itself; they cannot detect axioms that are logically consistent but factually incorrect given external medical knowledge. Our approach addresses this gap: by validating inferred conclusions against LLM-encoded medical knowledge, we can surface errors (gluts) that would otherwise remain hidden, while the paraconsistent semantics allows reasoning to continue even when conflicts are detected. This example is deliberately artificial---it includes intentionally incorrect rules to test glut detection---but it demonstrates a capability that could complement existing ontology debugging tools in domains where knowledge bases must be validated against evolving empirical evidence.

The medication safety example knowledge base contains 228 asserted statements encoding 28 universally quantified rules and 200 ground assertions across 16 drug classes.
\begin{table}[t]
\centering
\begin{tabular}{lr@{\qquad}lr}
\toprule
Metric & Value & Glut Category & Count \\
\midrule
Asserted statements & 228 & Opioids nonaddictive & 25 \\
Inferred statements & 712 & Antipsychotics metabolically neutral & 15 \\
Satisfiability & \textsc{sat} & Beta-blockers safe in asthma & 11 \\
Gluts detected & 92 & Benzodiazepines nonaddictive & 11 \\
Gaps detected & 2 & Other categories & 30 \\
\bottomrule
\end{tabular}
\caption{Scaled evaluation results (940 statements total).}
\label{tab:scaled-results}
\end{table}
Table~\ref{tab:scaled-results} summarizes the results of asserting the statements in the knowledge base, running the forward chaining reasoner, and then running the tableau satisfiability checker to detect gluts.  Forward-chaining derived 712 new atomic formulas, yielding a theory of 940 total statements. The system detected 92 gluts corresponding to medically significant errors: all 25 opioid medications yielded gluts for $\mathsf{Nonaddictive}(X)$ because the LLM correctly identifies opioids as highly addictive. The system also correctly identified that beta-blockers are contraindicated in asthma patients (11 gluts) and that fluoroquinolones pose risks for children (7 gluts). Despite 92 contradictions, the theory remained satisfiable.
We encode rules about drug classifications, some correct and some deliberately erroneous:
\begin{align*}
\mathsf{S0001}&: \mathfrak{t}:[\forall X\, \mathsf{Benzodiazepine}(X)]\mathsf{Nonaddictive}(X) & \text{(incorrect)} \\
\mathsf{S0002}&: \mathfrak{t}:[\forall X\, \mathsf{Benzodiazepine}(X)]\mathsf{Sedative}(X) & \text{(correct)} \\
\mathsf{S0003}&: \mathfrak{t}:[\forall X\, \mathsf{Nsaid}(X)]\mathsf{Analgesic}(X) & \text{(correct)} \\
\mathsf{S0004}&: \mathfrak{t}:[\forall X\, \mathsf{Nsaid}(X)]\mathsf{Cardiosafe}(X) & \text{(incorrect)}
\end{align*}
Rule $\mathsf{S0001}$ is medically false: benzodiazepines carry significant dependence risks. Rule $\mathsf{S0004}$ is also false: NSAIDs increase cardiovascular risk. We add ground assertions about specific medications (e.g., $\mathfrak{t}:\mathsf{Benzodiazepine}(\mathsf{valium})$, $\mathfrak{t}:\mathsf{Nsaid}(\mathsf{ibuprofen})$).
Forward-chaining derives consequences such as $\mathsf{Nonaddictive}(\mathsf{valium})$ and $\mathsf{Cardiosafe}(\mathsf{ibuprofen})$. The LLM judge (GPT-4o) then evaluates each derived formula bilaterally.
\begin{table}[t]
\centering
\begin{tabular}{llll}
\toprule
Formula & KB & LLM & Status \\
\midrule
$\mathsf{Nonaddictive}(\mathsf{valium})$ & $\mathfrak{t}$ & $\langle \mathfrak{f}, \mathfrak{t} \rangle$ & Glut \\
$\mathsf{Cardiosafe}(\mathsf{ibuprofen})$ & $\mathfrak{t}$ & $\langle \mathfrak{f}, \mathfrak{t} \rangle$ & Glut \\
$\mathsf{Sedative}(\mathsf{valium})$ & $\mathfrak{t}$ & $\langle \mathfrak{t}, \mathfrak{f} \rangle$ & Agreement \\
$\mathsf{Analgesic}(\mathsf{ibuprofen})$ & $\mathfrak{t}$ & $\langle \mathfrak{t}, \mathfrak{f} \rangle$ & Agreement \\
$\mathsf{Cardiosafe}(\mathsf{aspirin})$ & $\mathfrak{t}$ & $\langle \mathfrak{f}, \mathfrak{f} \rangle$ & Gap \\
\bottomrule
\end{tabular}
\caption{LLM-grounded evaluation of derived formulas in the medication safety example. The KB column shows the truth value from symbolic inference; the LLM column shows $\zeta_c(\varphi) = \langle u, v \rangle$.}
\label{tab:medication-eval}
\end{table}

Table~\ref{tab:medication-eval} shows the results of the evaluation.
The deliberately erroneous rules produced gluts: the LLM correctly identifies that Valium carries dependence risks and that ibuprofen increases cardiovascular risk. Where rules are accurate, the LLM confirmed the derived conclusions. The gap for $\mathsf{Cardiosafe}(\mathsf{aspirin})$ reflects genuine medical ambiguity---low-dose aspirin has cardioprotective effects in some populations but bleeding risks in others.
Despite multiple gluts, the system remained satisfiable. A classical reasoner would face explosion; our paraconsistent semantics localizes contradictions, allowing identification of exactly which conclusions are contested while maintaining valid inferences from uncontested premises.

\paragraph{Computational complexity}

The forward-chaining phase computes the deductive closure of the asserted knowledge base under the rule set, producing a finite ground theory. Let $R$ denote the number of rules, $C$ the number of constants, and $k$ the maximum number of distinct variables in any rule body. Standard Datalog complexity analysis applies \citep{dantsin2001complexity}: each iteration considers $O(R \cdot C^k)$ candidate instantiations, and since each iteration must add at least one new ground atom or terminate, the fixpoint is reached in at most $O(|P| \cdot C^a)$ iterations, where $|P|$ is the number of predicates and $a$ the maximum arity. For fixed schema parameters, forward-chaining is polynomial in the number of constants, consistent with Datalog's \textsc{PTime} data complexity for ground query answering. Following the definition of the bilateral factuality evaluation function, each atom requires $n$ queries to assess support and $n$ queries to assess refutation where $n$ is the size of the repeated sample, yielding $2n$ LLM calls per inference. For $F$ inferred atoms with mean call latency $L$, total interpretation time is $O(nFL)$ sequentially, or $O(nFL/P)$ with parallelism degree $P$.

The tableau procedure checks satisfiability of the ground theory under $\mathsf{ACrQ}$ semantics via the signed tableau calculus of \citet{ferguson2021tableaux}, which provides a sound and complete proof procedure for $\mathsf{AC}$ with restricted quantifiers. Because forward-chaining eliminates quantifiers by grounding, the tableau operates over a quantifier-free theory of $N$ ground atoms. The key complexity observation, noted by \citet{ferguson2021tableaux}, is that $\mathsf{AC}$ validity is polynomial-time reducible to classical propositional validity via a translation that preserves formula structure. This translation doubles the signature (introducing $\overline{R}$ for each predicate $R$ to track refutation) but preserves formula size up to a constant factor. For ground theories, tableau satisfiability thus reduces to propositional satisfiability over $O(N)$ atoms---\textsc{NP}-complete in general \citep{cook1971complexity}, though the conjunctive structure of theories produced by forward-chaining often admits more efficient decision procedures. The tableau invokes $\mathcal{I}_{LLM}$ whenever a ground atomic formula appears as a node in the tableau tree; letting $A$ denote the number of distinct atoms encountered during expansion, semantic interpretation requires $O(nAL)$ time. Unlike classical description logics, where TBox reasoning ranges from \textsc{ExpTime}-complete for $\mathcal{ALC}$ \citep{schild1991terminological} to \textsc{2NExpTime}-complete for $\mathcal{SROIQ}$ \citep{kazakov2008riq}, $\mathsf{ACrQ}$'s grounding strategy yields complexity comparable to ABox-only reasoning. The cost of paraconsistency is not computational but rather the overhead of LLM-based semantic interpretation.

Table~\ref{tab:timing} reports timing from the medication safety use case, with $R = 28$ rules, $C = 200$ constants, $k = 2$ maximum variables per rule, $N = 940$ ground statements after closure, and $n = 3$ samples per verification/refutation direction.
\begin{table}[t]
\centering
\small
\begin{tabular}{lrrl}
\toprule
Phase & Time & LLM Calls & Notes \\
\midrule
Forward-chaining (logic) & 0.006s & --- & Grounding \\
Forward-chaining (LLM) & 463s & 4,272 & $712 \times 6$ calls at ${\sim}0.11$s \\
Tableau (logic + LLM) & 297s & varies & Expansion over 940 statements \\
\midrule
\textbf{Total} & \textbf{760s} & & \\
\bottomrule
\end{tabular}
\caption{Timing from the medication safety evaluation. LLM interpretation 
dominates computational cost.}
\label{tab:timing}
\end{table}
These results confirm that LLM latency dominates execution time: 
forward-chaining logic completes in under 10ms, while semantic interpretation accounts for over 99\% of total runtime. This cost structure indicates that moderately-sized knowledge bases are tractable with the current proof-of-concept implementation---the medication safety knowledge base, with 200 entities and 940 ground statements, completes in under 13 minutes. More significantly, the dominant costs are amenable to optimization strategies that can plausibly scale throughput by one to two orders of magnitude. First, the $2n$ LLM calls per atom are trivially parallelizable across independent atoms; with parallelism degree $P = 50$ (readily achievable with standard API rate limits), the 463s forward-chaining LLM phase would reduce to under 10s. Second, mature DL reasoners have developed sophisticated optimization techniques---including absorption, lazy unfolding, dependency-directed backjumping, and semantic branching \citep{horrocks2003implementation}---that substantially reduce tableau expansion; adapting these techniques to $\mathsf{ACrQ}$ would decrease both logical overhead and the number of LLM-interpreted atoms $A$. Third, local LLM deployment eliminates network latency entirely: current quantized models achieve sub-10ms inference on consumer hardware, which would reduce per-call latency by an order of magnitude relative to API-based inference. Combined, these optimizations suggest that knowledge bases with $10^4$--$10^5$ ground statements---comparable to medium-scale industrial ontologies---are within reach of production deployment.

\section{Discussion and limitations}
\label{sect:limitations}

We have provided a theoretical framework for integrating an LLM with a paraconsistent reasoner, demonstrated the feasibility of providing LLM-grounded valuations of atomic formulas, and demonstrated a Belnap computer that implements the framework. In this section we discuss the broader implications of these results and the limitations of our approach with respect to verbalization, computational complexity, error propagation, and belief revision.

\paragraph{Verbalization}
\label{sect:verbalization}

The bilateral factuality evaluation function $\zeta$ depends critically on the verbalization function $\delta$, which translates atomic formulas into natural language for LLM evaluation. This translation determines whether the LLM can provide a meaningful valuation or must return $\langle \mathfrak{f}, \mathfrak{f} \rangle$, indicating epistemic gap.
An atomic formula $\varphi = R(c_1, \ldots, c_n)$ is evaluable when two conditions hold: (i) the arguments $c_1, \ldots, c_n$ refer to entities about which the LLM has parametric knowledge, and (ii) the predicate $R$ expresses a property or relation the LLM can assess for those entities. When both conditions are met, the LLM can draw on its training data to provide verification or refutation evidence. When either condition fails, the LLM lacks grounds for evaluation and returns $\langle \mathfrak{f}, \mathfrak{f} \rangle$.
Consider the contrast between \texttt{Addictive(valium)} and \texttt{Addictive(compound7)}. The former is evaluable: ``Valium'' refers to a well-documented pharmaceutical (diazepam), and ``addictive'' is a property extensively discussed in medical literature. The LLM can assess this claim. The latter is not evaluable: ``compound7'' is an opaque identifier about which the LLM has no knowledge. The LLM cannot verify or refute claims about unknown entities, so it returns $\langle \mathfrak{f}, \mathfrak{f} \rangle$.
Similarly, predicate interpretability matters. The formula \texttt{Sedative(valium)} verbalizes naturally to ``Valium is a sedative,'' a claim the LLM can evaluate. But a domain-specific predicate like \texttt{Cyp3a4Inhibitor(valium)} may or may not be evaluable depending on whether the LLM's training data includes sufficient pharmacological detail about cytochrome P450 interactions. Highly technical or abbreviated predicates may fail to activate relevant LLM knowledge even when that knowledge is present.
When the LLM cannot evaluate a formula, the returned value $\langle \mathfrak{f}, \mathfrak{f} \rangle$ represents a legitimate epistemic state: the LLM has neither evidence for nor evidence against the claim. This is not a system failure. The paraconsistent semantics of $\mathbf{AC}$ handles gaps naturally---the formula's valuation remains underdetermined by the LLM, and reasoning proceeds based on symbolic inference and explicit assertions. The LLM-grounded interpretation thus acts as a \textit{partial oracle}, validating what it can and remaining silent otherwise.
This contrasts with approaches that force binary verdicts. A unilateral evaluation function that must return true or false cannot represent ignorance; it must either guess or abstain entirely. The bilateral approach using $\mathcal{NINE}$-valued semantics provides a middle path: the LLM contributes what epistemic signal it can, and the logic handles the rest.
As the medication safety example in Section~\ref{sect:tableau} illustrates, formulas with transparent predicate names referring to well-documented properties (e.g., \texttt{Analgesic(ibuprofen)}) yield definite valuations, while formulas with less evocative names (e.g., \texttt{Monitored(warfarin)}) may return gaps---not necessarily because the LLM lacks the relevant knowledge, but because the verbalization fails to activate it.
The implementation described in Section~\ref{sect:llmgrounded} uses direct syntactic mapping, providing the formula's surface syntax to the LLM (e.g., \texttt{Nonaddictive(valium)} verbalized as ``Nonaddictive(valium)''). This minimal approach relies on the LLM's ability to parse logical notation and on implicit grounding through symbol names that happen to correspond to natural language terms. While sufficient for formulas with transparent symbol names, this approach has limitations:
\begin{enumerate}
\item \textit{Opaque symbol names}: Knowledge bases often use abbreviated or systematic naming conventions (e.g., \texttt{rxn\_4521}, \texttt{hasCADRisk}) that do not transparently convey meaning.
\item \textit{Predicate-argument structure}: The relationship between predicate and arguments may require explicit verbalization. The formula \texttt{Treats(aspirin, headache)} is more naturally evaluated as ``Aspirin treats headaches'' than as ``Treats(aspirin, headache).''
%\item \textit{Ambiguity}: A symbol like \texttt{mercury} could refer to the element, the planet, or the Roman god. Without disambiguation, the LLM may evaluate against the wrong referent.
\end{enumerate}
Alternative verbalization strategies can address these limitations. \textit{Template-based transformation} uses predefined patterns for each predicate: \texttt{Treats(X, Y)} $\rightarrow$ ``X treats Y''; \texttt{Contraindicated(X, Y)} $\rightarrow$ ``X is contraindicated for patients with Y''. This provides more natural language input but requires manual template creation. \textit{LLM-generated verbalization} uses a language model to produce natural language renderings of formulas, offering flexibility at the cost of introducing another potential source of error. Investigating alternative implementations of $\delta$ remains future work.

A complementary strategy for mitigating both epistemic gaps and verbalization mismatches is retrieval-augmented generation (RAG) \citep{gao2023retrieval}. When an atomic formula refers to entities or predicates that fall outside the LLM's parametric knowledge (for example, specialized axioms drawn from a proprietary pharmacovigilance database or a domain-specific ontology), the verbalization function $\delta$ can be extended to retrieve relevant supporting documents from an external corpus and include them in the prompt seen by the verification and refutation functions $P^+$ and $P^-$. This effectively supplies the LLM with evidence it would not otherwise possess, converting what would have been valuations of $\langle \mathfrak{f}, \mathfrak{f} \rangle$ into definite valuations of $\langle \mathfrak{t}, \mathfrak{f} \rangle$ or $\langle \mathfrak{f}, \mathfrak{t} \rangle$ while preserving the bilateral nature of $\zeta$. Because retrieval is implemented inside $\delta$ (and potentially, $P^+$ and $P^-$), the theoretical framework in Section \ref{sec:preservation} is unaffected, and the metalogical results of Theorems \ref{thm:soundness} and \ref{thm:completeness} apply unchanged, since they rely solely on the stability of the cached valuations (Lemma \ref{lem:stability}), not on how those valuations are produced. The integration of RAG into $\delta$, and the development of retrieval strategies that preserve the bilateral approach by treating retrieval for verification and refutation separately, is a promising direction for future work, particularly in the context of biomedical domains, where authoritative knowledge is under constant revision and highly specialized.

These considerations suggest guidelines for knowledge base design when LLM-grounded evaluation is intended:
\begin{enumerate}
\item \textit{Use transparent symbol names}: Choose constant and predicate names that correspond to natural language terms the LLM will recognize. Prefer \texttt{ibuprofen} over \texttt{drug\_0042}; prefer \texttt{CausesLiverDamage} over \texttt{hepTox}.
\item \textit{Favor well-documented entities}: Formulas involving entities with substantial presence in LLM training data (approved drugs, named diseases, public figures, established concepts) will be more reliably evaluable than those involving obscure or private entities.
\item \textit{Expect gaps for domain-specific content}: Formulas involving proprietary identifiers, internal codes, or specialized terminology will likely yield $\langle \mathfrak{f}, \mathfrak{f} \rangle$. Plan for the reasoner to handle these through symbolic inference alone.
\item \textit{Consider verbalization at design time}: When defining predicates, consider how they will verbalize. A predicate that produces awkward or ambiguous verbalizations will be harder for the LLM to evaluate.
\end{enumerate}
A natural extension would enrich the verbalization function to incorporate contextual information from the theory itself. If the knowledge base contains assertions about an otherwise-unknown entity (e.g.,  \texttt{HasCondition(patient42, RenalImpairment)}), this context could be included in the prompt when evaluating formulas involving that entity. The LLM would then evaluate not in isolation but in light of what the theory asserts about the entity. We leave exploration of such context-aware verbalization strategies to future work.

\paragraph{Scalability}
\label{sec:scalability}
The moderately-sized knowledge base used in the medication safety example raises natural questions about how ACrQ-based reasoning compares to mature description logic (DL) reasoners in terms of scalability, and what optimizations might enable further scaling.
Description logics are decidable fragments of first-order logic with well-characterized complexity. DL reasoners achieve decidability by carefully restricting expressivity, while ACrQ preserves full restricted quantification and tolerates contradictions.
\begin{table}[t]
\centering
\begin{tabular}{lll}
\toprule
\textbf{Feature} & \textbf{ACrQ} & \textbf{Description Logics} \\
\midrule
Quantification & Restricted $[\forall X\, P(X)]Q(X)$ & Guarded (role restrictions) \\
Truth values & 3 $(t, f, e)$ + 3 meta-signs & 2 (classical) \\
Contradictions & Tolerated (gluts) & Cause explosion \\
Decidability & Semi-decidable & Decidable (by design) \\
Primary use & Paraconsistent reasoning & Ontology reasoning \\
\bottomrule
\end{tabular}
\caption{Feature comparison between ACrQ and description logics.}
\label{tab:dl-comparison}
\end{table}
Mature DL reasoners (HermiT~\citep{glimm2014hermit}, Pellet~\citep{sirin2007pellet}, FaCT++~\citep{tsarkov2006factpp}) employ sophisticated optimizations developed over decades, many of which are directly transferable to our paraconsistent reasoner. These include \emph{lazy unfolding}, which unfolds definitions only when needed for clash detection (yielding 10--100$\times$ speedup for sparse theories); \emph{semantic branching}, which chooses branch order based on likelihood of closure (2--10$\times$ speedup); \emph{dependency-directed backjumping}, which skips irrelevant choice points (exponential gains in pathological cases); \emph{satisfiability caching for subformulas} (10--1000$\times$ for theories with repeated patterns); \emph{formula indexing via hash-based lookup} to reduce $O(N^2)$ to $O(N)$ average case; \emph{incremental reasoning through dependency tracking} for partial recomputation; and \emph{parallel branch exploration} for near-linear speedup with available cores~\citep{horrocks2003implementation}.
Beyond these transferable DL techniques, ACrQ's paraconsistent nature enables unique optimizations that, to our knowledge, have not been previously explored, and which we leave to future work: \emph{glut-aware pruning}, whereby known gluts are recorded and excluded from closure checking; \emph{bilateral predicate indexing}, which maintains dual index structures for $P$ and $P^*$ to enable $O(N)$ glut/gap detection; and \emph{sign-stratified processing}, which handles definite signs before branching signs.

A full cost–benefit analysis for production deployment — covering LLM API costs versus local deployment, the break-even point at which caching amortizes interpretation cost, and the sensitivity of downstream decision quality to the sample size n used in repeated sampling — is beyond the scope of the present theoretical contribution but would provide actionable guidance for practitioners. We regard such an analysis, anchored in a concrete application domain such as clinical decision support or ontology curation, as a valuable companion to the present work.

\paragraph{Error Propagation}
\label{sect:error-propagation}

A natural concern with LLM-grounded interpretations is error propagation: if the LLM judge returns an incorrect valuation for an atomic formula, does that error propagate through the reasoning system and corrupt downstream inferences? The bilateral architecture mitigates this risk in two ways.
First, errors in definite valuations ($\langle \mathfrak{t}, \mathfrak{f} \rangle$ or $\langle \mathfrak{f}, \mathfrak{t} \rangle$) behave identically to errors in classical interpretations---they produce unsound conclusions, but no worse than reasoning over any other knowledge base containing incorrect facts.
Second, and more importantly, the indefinite valuations that bilateral evaluation surfaces---gaps ($\langle \mathfrak{f}, \mathfrak{f} \rangle$) and gluts ($\langle \mathfrak{t}, \mathfrak{t} \rangle$)---act as natural error containment. When the LLM is uncertain or conflicted, this uncertainty propagates conservatively through the paraconsistent semantics rather than forcing a potentially incorrect commitment. The reasoner can identify which conclusions depend on definite versus indefinite premises, enabling downstream systems to assign different confidence levels accordingly.
For instance, in the medication safety example of Section~\ref{sect:tableau}, the gluts detected for \texttt{Nonaddictive(valium)} and \texttt{Cardiosafe(ibuprofen)} flag exactly those conclusions where the knowledge base's rules conflict with LLM knowledge, enabling a downstream system to treat these conclusions with appropriate caution.
Full error analysis, including empirical measurement of how verbalization failures and LLM miscalibration affect end-to-end reasoning accuracy, remains future work.

\paragraph{Belief Revision}
\label{sect:belief-revision}

The current implementation supports cache invalidation with replay: when external evidence indicates a cached valuation may be stale, the entry is cleared and the formula is re-evaluated on subsequent queries. This approach is orthogonal to the paraconsistent semantics: where paraconsistency tolerates contradictions within a knowledge state, belief revision addresses changes between knowledge states over time.
Recent work on paraconsistent belief revision \citep{testa2017agm,coniglio2024logic} has adapted AGM-style postulates \citep{alchourron1985logic} to logics of formal inconsistency, but retains the AGM motivation of restoring consistency when new information contradicts prior beliefs. Our framework takes a different stance: gluts ($\langle \mathfrak{t}, \mathfrak{t} \rangle$) are valid epistemic states reflecting genuine LLM inconsistency, not errors requiring repair. Whether and how to integrate revision mechanisms that respect this tolerance, or to explore alternatives such as truth maintenance systems \citep{forbus1993building}, remains future work.

\section{Conclusion and future work}
\label{sect:conclusion}

We described a novel approach to logical reasoning using LLMs with several key contributions. We defined a bilateral approach to factuality evaluation that identifies gaps and contradictions in LLM parametric knowledge. We introduced the concept of LLM-grounded interpretations that integrate an LLM directly into the formal semantics of the underlying logic while preserving its soundness and completeness. We provided empirical evidence that bilateral factuality evaluation outperforms unilateral approaches, and demonstrated through concrete examples how the system handles contradictory information without logical explosion. And finally, we implemented a Belnap computer that demonstrates the practical feasibility of this approach, including an interactive dialogue management interface that bridges natural language and formal reasoning. Our evaluation with an example medication safety knowledge base of 940 statements (228 asserted, 712 inferred) demonstrates that LLM-grounded reasoning maintains its theoretical properties at realistic knowledge base sizes. The system successfully identified 92 gluts where deductively inferred conclusions conflicted with LLM medical knowledge---including medically significant errors such as claiming opioids are non-addictive, beta-blockers are safe in asthma, and fluoroquinolones are safe for children. Despite these contradictions, the paraconsistent semantics preserved satisfiability and allowed identification of exactly which rules conflicted with empirical medical knowledge. Computational analysis reveals that the primary bottleneck is LLM API latency rather than logical reasoning, suggesting that standard optimization techniques from description logic reasoners---including batching, parallelization, and caching---could enable scaling to knowledge bases of over one hundred thousand axioms.

In addition to the future work described in Section \ref{sect:limitations}, we plan to 
conduct a user evaluation of the usability of the Theory Manager for ontology and knowledge graph construction, and to extend our approach based on work in the area of generalized truth values \citep{shramko2005some,hornischer2025iterating} to provide a multi-valued semantics for modeling factuality in LLMs, with the goal of improving the theoretical framework and metrics for factuality evaluation. A further valuable direction for future work would be an empirical comparison of paraconsistent and classical reasoning over LLM-grounded knowledge. Such a study could quantify the practical value of tolerating gluts rather than treating them as inconsistencies requiring repair, by measuring (i) how often classical reasoners over the same LLM-grounded atoms are forced into trivialization or require ad hoc repair, and (ii) the downstream utility of the glut and gap diagnostics surfaced by the paraconsistent reasoner for tasks such as ontology debugging, knowledge-base curation, and safety-critical decision support. Finally, a complementary application-focused study reporting a detailed cost–benefit analysis in a production setting (including monetary cost per inference, latency budgets, caching behavior, and the operational value of the glut and gap diagnostics) would translate the engineering directions sketched in Section \ref{sect:limitations} into concrete guidance for real-world deployments.

\begin{acks}
This work was partially supported by the EU’s Horizon Europe research and innovation programme within the ENEXA project (grant Agreement no. 101070305). Particular thanks are due to Frank van Harmelen for his valuable feedback based on a close reading of an earlier version of this article, and to Fabian Hoppe, Levin Hornischer, Jan-Christoph Kalo, and Lise Stork for discussions and perceptive observations that enriched our research.
\end{acks}

\bibliographystyle{SageH}
\bibliography{references}

@InProceedings{pmlr-v284-allen25a,
  title = 	 {Sound and Complete Neurosymbolic Reasoning with LLM-Grounded Interpretations},
  author =       {Allen, Bradley P. and Chhikara, Prateek and Ferguson, Thomas Macaulay and Ilievski, Filip and Groth, Paul},
  booktitle = 	 {Proceedings of The 19th International Conference on Neurosymbolic Learning and Reasoning},
  pages = 	 {392--419},
  year = 	 {2025},
  editor = 	 {H. Gilpin, Leilani and Giunchiglia, Eleonora and Hitzler, Pascal and van Krieken, Emile},
  volume = 	 {284},
  series = 	 {Proceedings of Machine Learning Research},
  month = 	 {08--10 Sep},
  publisher =    {PMLR},
  pdf = 	 {https://raw.githubusercontent.com/mlresearch/v284/main/assets/allen25a/allen25a.pdf},
  url = 	 {https://proceedings.mlr.press/v284/allen25a.html},
  abstract = 	 {Large language models (LLMs) have demonstrated impressive capabilities in natural language understanding and generation, but they exhibit problems with logical consistency in the output they generate. How can we harness LLMs’ broad-coverage parametric knowledge in formal reasoning despite their inconsistency? We present a method for directly integrating an LLM into the interpretation function of the formal semantics for a paraconsistent logic. We provide experimental evidence for the feasibility of the method by evaluating the function using datasets created from several short-form factuality benchmarks. Unlike prior work, our method offers a theoretical framework for neurosymbolic reasoning that leverages an LLM’s knowledge while preserving the underlying logic’s soundness and completeness properties.}
}

@article{dhuliawala2023chain,
 author = {Dhuliawala, Shehzaad and Komeili, Mojtaba and Xu, Jing and Raileanu, Roberta and Li, Xian and Celikyilmaz, Asli and Weston, Jason},
 journal = {arXiv preprint arXiv:2309.11495},
 title = {Chain-of-Verification Reduces Hallucination in Large Language Models},
 year = {2023}
}

@article{gao2023retrieval,
  title={Retrieval-augmented generation for large language models: A survey},
  author={Gao, Yunfan and Xiong, Yun and Gao, Xinyu and Jia, Kangxiang and Pan, Jinliu and Bi, Yuxi and Dai, Yi and Sun, Jiawei and Wang, Haofen},
  journal={arXiv preprint arXiv:2312.10997},
  year={2023}
}

@article{kojima2022large,
 author = {Kojima, Takeshi and Gu, Shixiang Shane and Reid, Machel and Matsuo, Yutaka and Iwasawa, Yusuke},
 journal = {Advances in neural information processing systems},
 pages = {22199--22213},
 title = {Large language models are zero-shot reasoners},
 volume = {35},
 year = {2022}
}

@article{petroni2019language,
 author = {Petroni, Fabio and Rocktäschel, Tim and Lewis, Patrick and Bakhtin, Anton and Wu, Yuxiang and Miller, Alexander H and Riedel, Sebastian},
 journal = {arXiv preprint arXiv:1909.01066},
 title = {Language models as knowledge bases?},
 year = {2019}
}

@article{wei2022chain,
 author = {Wei, Jason and Wang, Xuezhi and Schuurmans, Dale and Bosma, Maarten and Xia, Fei and Chi, Ed and Le, Quoc V and Zhou, Denny and others},
 journal = {Advances in Neural Information Processing Systems},
 pages = {24824--24837},
 title = {Chain-of-thought prompting elicits reasoning in large language models},
 volume = {35},
 year = {2022}
}

@article{yao2023tree,
 author = {Yao, Shunyu and Yu, Dian and Zhao, Jeffrey and Shafran, Izhak and Griffiths, Thomas L and Cao, Yuan and Narasimhan, Karthik},
 journal = {arXiv preprint arXiv:2305.10601},
 title = {Tree of thoughts: Deliberate problem solving with large language models},
 year = {2023}
}

@article{rumfitt2000yes,
  title={{'Yes' and 'No'}},
  author={Rumfitt, Ian},
  journal={Mind},
  volume={109},
  number={436},
  pages={781--823},
  year={2000},
  publisher={Oxford University Press}
}

@article{wei2024measuring,
  title={Measuring short-form factuality in large language models},
  author={Wei, Jason and Karina, Nguyen and Chung, Hyung Won and Jiao, Yunxin Joy and Papay, Spencer and Glaese, Amelia and Schulman, John and Fedus, William},
  journal={arXiv preprint arXiv:2411.04368},
  year={2024}
}

@incollection{belnap1977useful,
  title={A useful four-valued logic},
  author={Belnap, Nuel},
  booktitle={Modern uses of multiple-valued logic},
  pages={5--37},
  year={1977},
  publisher={Springer}
}

@incollection{belnap1977how,
  author={Belnap, Nuel},
  booktitle = {Contemporary aspects of philosophy},
  editor = {Ryle, G.},
  pages = {30--55},
  publisher = {Oriel Press},
  title = {How a computer should think},
  year = {1977}
}

@article{wang2023survey,
  title={{Survey on factuality in large language models: Knowledge, retrieval and domain-specificity}},
  author={Wang, Cunxiang and Liu, Xiaoze and Yue, Yuanhao and Tang, Xiangru and Zhang, Tianhang and Jiayang, Cheng and Yao, Yunzhi and Gao, Wenyang and Hu, Xuming and Qi, Zehan and others},
  journal={arXiv preprint arXiv:2310.07521},
  year={2023}
}

@article{maier2013paraconsistent,
  title={{Paraconsistent OWL and related logics}},
  author={Maier, Frederick and Ma, Yue and Hitzler, Pascal},
  journal={Semantic Web},
  volume={4},
  number={4},
  pages={395--427},
  year={2013},
  publisher={IOS Press}
}

@inproceedings{ferguson2021tableaux,
  title={{Tableaux and restricted quantification for systems related to weak Kleene logic}},
  author={Ferguson, Thomas Macaulay},
  booktitle={International Conference on Automated Reasoning with Analytic Tableaux and Related Methods},
  pages={3--19},
  year={2021},
  organization={Springer}
}

@article{el2010foundations,
  title={{On the Foundations of Noise-free Selective Classification}},
  author={El-Yaniv, Ran and Wiener, Yair},
  journal={Journal of Machine Learning Research},
  volume={11},
  number={5},
  year={2010}
}

@inproceedings{ferguson2021modeling,
  title={{Modeling Intentional States with Subsystems of ALC}},
  author={Ferguson, Thomas Macaulay},
  booktitle = {Proceedings of the 34th International Workshop on Description Logics (DL 2021)},
  series = {CEUR Workshop Proceedings},
  volume={2954},
  pages={1--12},
  year={2021}
}

@incollection{ferguson2017faulty,
  title={{Faulty Belnap Computers and Subsystems of $\mathbf{FDE}$}},
  author={Ferguson, Thomas Macaulay},
  booktitle={Meaning and Proscription in Formal Logic: Variations on the Propositional Logic of William T. Parry},
  pages={107--131},
  year={2017},
  publisher={Springer}
}

@article{correia2010grounding,
  title={Grounding and truth-functions},
  author={Correia, Fabrice},
  journal={Logique et Analyse},
  volume={53},
  number={211},
  pages={251--279},
  year={2010},
  publisher={JSTOR}
}

@book{kleene1952introduction,
  title={Introduction to metamathematics},
  author={Kleene, Stephen Cole},
  publisher={Wolters-Noordhoff Publishing and North-Holland Publishing Company}, 
  year={1952}
}

@article{bang2025hallulens,
  title={{HalluLens: LLM Hallucination Benchmark}},
  author={Bang, Yejin and Ji, Ziwei and Schelten, Alan and Hartshorn, Anthony and Fowler, Tara and Zhang, Cheng and Cancedda, Nicola and Fung, Pascale},
  journal={arXiv preprint arXiv:2504.17550},
  year={2025}
}

@article{patel1989four,
  title={A four-valued semantics for terminological logics},
  author={Patel-Schneider, Peter F},
  journal={Artificial Intelligence},
  volume={38},
  number={3},
  pages={319--351},
  year={1989},
  publisher={Elsevier}
}

@article{pan2023logic,
  title={{Logic-LM}: Empowering large language models with symbolic solvers for faithful logical reasoning},
  author={Pan, Liangming and Albalak, Alon and Wang, Xinyi and Wang, William Yang},
  journal={arXiv preprint arXiv:2305.12295},
  year={2023}
}

@article{cheng2025empowering,
  title={{Empowering LLMs with Logical Reasoning: A Comprehensive Survey}},
  author={Cheng, Fengxiang and Li, Haoxuan and Liu, Fenrong and van Rooij, Robert and Zhang, Kun and Lin, Zhouchen},
  journal={arXiv preprint arXiv:2502.15652},
  year={2025}
}

@article{rein2023gpqa,
      title={{GPQA: A Graduate-Level Google-Proof Q\&A Benchmark}}, 
      author={David Rein and Betty Li Hou and Asa Cooper Stickland and Jackson Petty and Richard Yuanzhe Pang and Julien Dirani and Julian Michael and Samuel R. Bowman},
      journal={arXiv preprint arXiv:2311.12022},
      year={2023} 
}

@inproceedings{ma2007algorithms,
  title={{Algorithms for paraconsistent reasoning with OWL}},
  author={Ma, Yue and Hitzler, Pascal and Lin, Zuoquan},
  booktitle={European Semantic Web Conference},
  pages={399--413},
  year={2007},
  organization={Springer}
}

@article{jacovi2025facts,
  title={{The FACTS Grounding Leaderboard: Benchmarking LLMs' Ability to Ground Responses to Long-Form Input}},
  author={Jacovi, Alon and Wang, Andrew and Alberti, Chris and Tao, Connie and Lipovetz, Jon and Olszewska, Kate and Haas, Lukas and Liu, Michelle and Keating, Nate and Bloniarz, Adam and others},
  journal={arXiv preprint arXiv:2501.03200},
  year={2025}
}

@inproceedings{kamide2010paraconsistent,
  title={Paraconsistent description logics revisited},
  author={Kamide, Norihiro},
  booktitle={23rd International Workshop on Description Logics DL2010},
  pages={197},
  year={2010}
}

@article{sirin2007pellet,
  title={{Pellet: A practical OWL-DL reasoner}},
  author={Sirin, Evren and Parsia, Bijan and Grau, Bernardo Cuenca and Kalyanpur, Aditya and Katz, Yarden},
  journal={Journal of Web Semantics},
  volume={5},
  number={2},
  pages={51--53},
  year={2007},
  publisher={Elsevier}
}

@article{politis1994large,
  title={Large sample confidence regions based on subsamples under minimal assumptions},
  author={Politis, Dimitris N and Romano, Joseph P},
  journal={The Annals of Statistics},
  pages={2031--2050},
  year={1994},
  publisher={JSTOR}
}

@article{arieli1998value,
  title={The value of the four values},
  author={Arieli, Ofer and Avron, Arnon},
  journal={Artificial Intelligence},
  volume={102},
  number={1},
  pages={97--141},
  year={1998},
  publisher={Elsevier}
}

@article{hornischer2025iterating,
  title={{Iterating Both and Neither: With Applications to the Paradoxes}},
  author={Hornischer, Levin},
  journal={Notre Dame Journal of Formal Logic},
  volume={1},
  number={1},
  pages={1--43},
  year={2025},
  publisher={Duke University Press}
}

@article{shramko2005some,
  title={{Some useful 16-valued logics: How a computer network should think}},
  author={Shramko, Yaroslav and Wansing, Heinrich},
  journal={Journal of Philosophical Logic},
  volume={34},
  pages={121--153},
  year={2005},
  publisher={Springer}
}

@misc{sep-logic-paraconsistent,
	author       =	{Priest, Graham and Tanaka, Koji and Weber, Zach},
	title        =	{{Paraconsistent Logic}},
	booktitle    =	{The {Stanford} Encyclopedia of Philosophy},
	editor       =	{Edward N. Zalta and Uri Nodelman},
	howpublished =	{\url{https://plato.stanford.edu/archives/spr2025/entries/logic-paraconsistent/}},
	year         =	{2025},
	edition      =	{{S}pring 2025},
	publisher    =	{Metaphysics Research Lab, Stanford University}
}

@article{li2024llms,
  title={{LLMs-as-judges: a comprehensive survey on LLM-based evaluation methods}},
  author={Li, Haitao and Dong, Qian and Chen, Junjie and Su, Huixue and Zhou, Yujia and Ai, Qingyao and Ye, Ziyi and Liu, Yiqun},
  journal={arXiv preprint arXiv:2412.05579},
  year={2024}
}

@article{zheng2023judging,
  title={{Judging LLM-as-a-Judge with MT-Bench and Chatbot Arena}},
  author={Zheng, Lianmin and Chiang, Wei-Lin and Sheng, Ying and Zhuang, Siyuan and Wu, Zhanghao and Zhuang, Yonghao and Lin, Zi and Li, Zhuohan and Li, Dacheng and Xing, Eric and others},
  journal={Advances in Neural Information Processing Systems},
  volume={36},
  pages={46595--46623},
  year={2023}
}

@article{zhu2023judgelm,
  title={{JudgeLM: Fine-tuned Large Language Models are Scalable Judges}},
  author={Zhu, Lianghui and Wang, Xinggang and Wang, Xinlong},
  journal={arXiv preprint arXiv:2310.17631},
  year={2023}
}

@misc{sep-truth-values,
	author       =	{Shramko, Yaroslav and Wansing, Heinrich},
	title        =	{{Truth Values}},
	booktitle    =	{The {Stanford} Encyclopedia of Philosophy},
	editor       =	{Edward N. Zalta and Uri Nodelman},
	howpublished =	{\url{https://plato.stanford.edu/archives/spr2025/entries/truth-values/}},
	year         =	{2025},
	edition      =	{{S}pring 2025},
	publisher    =	{Metaphysics Research Lab, Stanford University}
}

@book{shramko2011truth,
  title={{Truth and falsehood: An inquiry into generalized logical values}},
  author={Shramko, Yaroslav and Wansing, Heinrich},
  volume={36},
  year={2011},
  publisher={Springer Science \& Business Media}
}

@article{fine2016angellic,
  title={Angellic content},
  author={Fine, Kit},
  journal={Journal of Philosophical Logic},
  volume={45},
  pages={199--226},
  year={2016},
  publisher={Springer}
}

@article{szmuc2019epistemic,
  title={{An epistemic interpretation of paraconsistent weak Kleene logic}},
  author={Szmuc, Damian E},
  journal={Logic and Logical Philosophy},
  volume={28},
  number={2},
  pages={277--330},
  year={2019}
}

@article{olausson2023linc,
  title={{LINC: A neurosymbolic approach for logical reasoning by combining language models with first-order logic provers}},
  author={Olausson, Theo X and Gu, Alex and Lipkin, Benjamin and Zhang, Cedegao E and Solar-Lezama, Armando and Tenenbaum, Joshua B and Levy, Roger},
  journal={arXiv preprint arXiv:2310.15164},
  year={2023}
}

@article{callewaert2025verus,
  title={{VERUS-LM: a Versatile Framework for Combining LLMs with Symbolic Reasoning}},
  author={Callewaert, Benjamin and Vandevelde, Simon and Vennekens, Joost},
  journal={arXiv preprint arXiv:2501.14540},
  year={2025}
}

@article{morishita2024enhancing,
  title={{Enhancing reasoning capabilities of LLMs via principled synthetic logic corpus}},
  author={Morishita, Terufumi and Morio, Gaku and Yamaguchi, Atsuki and Sogawa, Yasuhiro},
  journal={Advances in Neural Information Processing Systems},
  volume={37},
  pages={73572--73604},
  year={2024}
}

@inproceedings{feng2024language,
  title={Language models can be deductive solvers},
  author={Feng, Jiazhan and Xu, Ruochen and Hao, Junheng and Sharma, Hiteshi and Shen, Yelong and Zhao, Dongyan and Chen, Weizhu},
  booktitle={Findings of the Association for Computational Linguistics: NAACL 2024},
  pages={4026--4042},
  year={2024}
}

@article{jiao2023exploring,
  title={Exploring self-supervised logic-enhanced training for large language models},
  author={Jiao, Fangkai and Teng, Zhiyang and Ding, Bosheng and Liu, Zhengyuan and Chen, Nancy F and Joty, Shafiq},
  journal={arXiv preprint arXiv:2305.13718},
  year={2023}
}

@incollection{ferguson2017computational,
  title={A computational interpretation of conceptivism},
  author={Ferguson, Thomas Macaulay},
  booktitle={Meaning and Proscription in Formal Logic: Variations on the Propositional Logic of William T. Parry},
  pages={73--105},
  year={2017},
  publisher={Springer}
}

@article{liu2025synlogic,
  title={SynLogic: Synthesizing Verifiable Reasoning Data at Scale for Learning Logical Reasoning and Beyond},
  author={Liu, Junteng and Fan, Yuanxiang and Jiang, Zhuo and Ding, Han and Hu, Yongyi and Zhang, Chi and Shi, Yiqi and Weng, Shitong and Chen, Aili and Chen, Shiqi and others},
  journal={arXiv preprint arXiv:2505.19641},
  year={2025}
}

@article{brown2024large,
  title={Large language monkeys: Scaling inference compute with repeated sampling},
  author={Brown, Bradley and Juravsky, Jordan and Ehrlich, Ryan and Clark, Ronald and Le, Quoc V and R{\'e}, Christopher and Mirhoseini, Azalia},
  journal={arXiv preprint arXiv:2407.21787},
  year={2024}
}

@article{nguyen2009efficient,
  title={{An efficient tableau prover using global caching for the description logic ALC}},
  author={Nguyen, Linh Anh},
  journal={Fundamenta Informaticae},
  volume={93},
  number={1-3},
  pages={273--288},
  year={2009},
  publisher={SAGE Publications Sage UK: London, England}
}

@inproceedings{gore2007exptime,
  title={{EXPTIME tableaux with global caching for description logics with transitive roles, inverse roles and role hierarchies}},
  author={Goré, Rajeev and Nguyen, Linh Anh},
  booktitle={International Conference on Automated Reasoning with Analytic Tableaux and Related Methods},
  pages={133--148},
  year={2007},
  organization={Springer}
}

@article{hoppe2025investigating,
  title={Investigating the Robustness of Deductive Reasoning with Large Language Models},
  author={Hoppe, Fabian and Ilievski, Filip and Kalo, Jan-Christoph},
  journal={arXiv preprint arXiv:2502.04352},
  year={2025}
}

@inproceedings{gligorov2007using,
  title={Using Google distance to weight approximate ontology matches},
  author={Gligorov, Risto and Ten Kate, Warner and Aleksovski, Zharko and Van Harmelen, Frank},
  booktitle={Proceedings of the 16th International Conference on World Wide Web},
  pages={767--776},
  year={2007},
  organization={ACM}
}

@inproceedings{huang2008using,
  title={Using semantic distances for reasoning with inconsistent ontologies},
  author={Huang, Zhisheng and van Harmelen, Frank},
  booktitle={International Semantic Web Conference},
  pages={178--194},
  year={2008},
  publisher={Springer}
}

@inproceedings{de2016names,
  title={Are names meaningful? {Q}uantifying social meaning on the {S}emantic {W}eb},
  author={de Rooij, Sander and Beek, Wouter and Bloem, Peter and van Harmelen, Frank and Schlobach, Stefan},
  booktitle={International Semantic Web Conference},
  pages={184--199},
  year={2016},
  publisher={Springer}
}

@inproceedings{schon2025context,
  title={{Context-Aware Clause Selection Using Symbol Name Meanings in Theorem Proving}},
  author={Schon, Claudia},
  booktitle={International Symposium on Frontiers of Combining Systems},
  pages={306--323},
  year={2025},
  organization={Springer}
}

@misc{claudecode2025,
  title = {Claude Code Overview},
  author = {Anthropic},
  year = {2025},
  url = {https://docs.claude.com/en/docs/claude-code/overview},
  note = {URL accessed: 2025-10-14}
}

@article{manhaeve2018deepproblog,
  title={{Deepproblog: Neural probabilistic logic programming}},
  author={Manhaeve, Robin and Dumancic, Sebastijan and Kimmig, Angelika and Demeester, Thomas and De Raedt, Luc},
  journal={Advances in neural information processing systems},
  volume={31},
  year={2018}
}

@article{alchourron1985logic,
  author    = {Alchourr{\'o}n, Carlos E. and G{\"a}rdenfors, Peter and Makinson, David},
  title     = {On the Logic of Theory Change: Partial Meet Contraction and Revision Functions},
  journal   = {Journal of Symbolic Logic},
  year      = {1985},
  volume    = {50},
  number    = {2},
  pages     = {510--530},
  doi       = {10.2307/2274239},
  abstract  = {This paper extends earlier work by its authors on formal aspects of the processes of contracting a theory to eliminate a proposition and revising a theory to introduce a proposition. Specifically, they investigate ``partial meet contraction functions'', which are defined to yield the intersection of some nonempty family of maximal subsets of the theory that fail to imply the proposition being eliminated.}
}

@article{coniglio2024logic,
  title={A Logic for Paraconsistent Belief Revision based on Epistemic Entrenchment},
  author={Coniglio, Marcelo E and Figallo, Martin and Testa, Rafael R},
  journal={arXiv preprint arXiv:2412.06117},
  year={2024}
}

@book{forbus1993building,
  author    = {Forbus, Kenneth D. and de Kleer, Johan},
  title     = {Building Problem Solvers},
  publisher = {MIT Press},
  year      = {1993},
  address   = {Cambridge, MA},
  isbn      = {978-0262061575}
}

@article{testa2017agm,
  author    = {Testa, Rafael R. and Coniglio, Marcelo E. and Ribeiro, M{\'a}rcio M.},
  title     = {{AGM}-Like Paraconsistent Belief Change},
  journal   = {Logic Journal of the IGPL},
  year      = {2017},
  volume    = {25},
  number    = {4},
  pages     = {632--672},
  doi       = {10.1093/jigpal/jzx010}
}

@article{kadavath2022language,
  title={Language models (mostly) know what they know},
  author={Kadavath, Saurav and Conerly, Tom and Askell, Amanda and Henighan, Tom and Drain, Dawn and Perez, Ethan and Schiefer, Nicholas and Hatfield-Dodds, Zac and DasSarma, Nova and Tran-Johnson, Eli and others},
  journal={arXiv preprint arXiv:2207.05221},
  year={2022}
}

@article{geifman2017selective,
  title={Selective classification for deep neural networks},
  author={Geifman, Yonatan and El-Yaniv, Ran},
  journal={Advances in neural information processing systems},
  volume={30},
  year={2017}
}

@article{da1989paraconsistent,
  title={Paraconsistent logics as a formalism for reasoning about inconsistent knowledge bases},
  author={da Costa, Newton CA and Subrahmanian, VS},
  journal={Artificial Intelligence in Medicine},
  volume={1},
  number={4},
  pages={167--174},
  year={1989},
  publisher={Elsevier}
}

@inproceedings{baader2008debugging,
  author = {Baader, Franz and Suntisrivaraporn, Boontawee},
  title = {Debugging {SNOMED CT} Using Axiom Pinpointing in the Description Logic $\mathcal{EL}^+$},
  booktitle = {Proceedings of the 3rd International Conference on Knowledge Representation in Medicine (KR-MED'08)},
  series = {CEUR Workshop Proceedings},
  volume = {410},
  pages = {1--7},
  year = {2008}
}

@book{kalyanpur2006debugging,
  title={Debugging and repair of OWL ontologies},
  author={Kalyanpur, Aditya},
  year={2006},
  publisher={University of Maryland, College Park}
}

@article{schlobach2007debugging,
  title={Debugging incoherent terminologies},
  author={Schlobach, Stefan and Huang, Zhisheng and Cornet, Ronald and van Harmelen, Frank},
  journal={Journal of automated reasoning},
  volume={39},
  number={3},
  pages={317--349},
  year={2007},
  publisher={Springer}
}

@article{glimm2014hermit,
  author = {Glimm, Birte and Horrocks, Ian and Motik, Boris and Stoilos, Giorgos and Wang, Zhe},
  title = {{HermiT}: An {OWL} 2 Reasoner},
  journal = {Journal of Automated Reasoning},
  volume = {53},
  number = {3},
  pages = {245--269},
  year = {2014},
  doi = {10.1007/s10817-014-9305-1}
}

@inproceedings{tsarkov2006factpp,
  author = {Tsarkov, Dmitry and Horrocks, Ian},
  title = {{FaCT++} Description Logic Reasoner: System Description},
  booktitle = {Proceedings of the 3rd International Joint Conference on Automated Reasoning (IJCAR 2006)},
  series = {Lecture Notes in Computer Science},
  volume = {4130},
  pages = {292--297},
  publisher = {Springer},
  year = {2006},
  doi = {10.1007/11814771_26}
}

@incollection{horrocks2003implementation,
  author = {Horrocks, Ian},
  title = {Implementation and Optimization Techniques},
  booktitle = {The Description Logic Handbook: Theory, Implementation, and Applications},
  editor = {Baader, Franz and Calvanese, Diego and McGuinness, Deborah L. and Nardi, Daniele and Patel-Schneider, Peter F.},
  pages = {306--346},
  publisher = {Cambridge University Press},
  year = {2003}
}

@inproceedings{cook1971complexity,
  author = {Cook, Stephen A.},
  title = {The Complexity of Theorem-Proving Procedures},
  booktitle = {Proceedings of the 3rd Annual ACM Symposium on Theory of 
               Computing (STOC)},
  pages = {151--158},
  year = {1971},
  doi = {10.1145/800157.805047}
}

@article{dantsin2001complexity,
  author = {Dantsin, Evgeny and Eiter, Thomas and Gottlob, Georg and 
            Voronkov, Andrei},
  title = {Complexity and Expressive Power of Logic Programming},
  journal = {ACM Computing Surveys},
  volume = {33},
  number = {3},
  pages = {374--425},
  year = {2001},
  doi = {10.1145/502807.502810}
}

@inproceedings{schild1991terminological,
  author = {Schild, Klaus},
  title = {A Correspondence Theory for Terminological Logics: 
           Preliminary Report},
  booktitle = {Proceedings of the 12th International Joint Conference on 
               Artificial Intelligence (IJCAI)},
  pages = {466--471},
  year = {1991}
}

@inproceedings{kazakov2008riq,
  author = {Kazakov, Yevgeny},
  title = {{RIQ} and {SROIQ} Are Harder than {SHOIQ}},
  booktitle = {Proceedings of the 11th International Conference on 
               Principles of Knowledge Representation and Reasoning (KR)},
  pages = {274--284},
  year = {2008}
}

\appendix

\section{Angell's logic of analytic containment \texorpdfstring{$\mathbf{AC}$}{AC}}
\label{apd:ac}

Below we summarize the definitions of the object language, truth functions, and interpretations for the version of $\mathbf{AC}$ presented in greater detail in \cite{ferguson2021tableaux}.

\subsection{Object language}

\begin{definition}
\label{def:objlang}    
Let $\mathcal{L}$ be a first-order language built from a countable set $\mathcal{C}$ of constants, a countable set of variables $\mathcal{V}$, a countable set $\mathcal{R}$ of relation symbols, the Boolean connectives $\neg$, $\land$, and $\lor$, restricted universal and existential quantifiers $\forall$ and $\exists$, and round parentheses (as used in complex formulas) and square brackets (as used in quantified formulas) as auxiliary symbols. If $R \in \mathcal{R}$ and $c_1$, \ldots, $c_n \in \mathcal{C}$, then $R(c_1, \ldots, c_n)$ is an atomic formula. Let $\mathcal{L}_{AT}$ be the set of atomic formulas. The formulas of $\mathcal{L}$ are the elements of $\mathcal{L}_{AT}$, together with the following, where $\varphi$, $\psi \in \mathcal{L}$ and $x \in \mathcal{V}$:
\begin{align*}
    \neg \varphi \: \vert \: (\varphi \land \psi) \: \vert \: (\varphi \lor \psi) \: \vert \: [ \forall x \varphi(x) ] \psi(x) \: \vert \: [ \exists x \varphi(x) ] \psi(x)
\end{align*}
\end{definition}

\subsection{Truth functions}

\begin{definition}
\label{def:wktruthtables}
The \emph{weak Kleene truth tables} over the
set of truth values $\mathcal{V}_3 = \{\mathfrak{t}, \mathfrak{e}, \mathfrak{f}\}$ are:
\begin{center}
\begin{tabular}{c|c}
$\neg$ & \\
\hline
$\mathfrak{t}$ & $\mathfrak{f}$ \\
$\mathfrak{e}$ & $\mathfrak{e}$ \\
$\mathfrak{f}$ & $\mathfrak{t}$
\end{tabular}
\qquad
\begin{tabular}{c|ccc}
$\land$ & $\mathfrak{t}$ & $\mathfrak{e}$ & $\mathfrak{f}$ \\
\hline
$\mathfrak{t}$ & $\mathfrak{t}$ & $\mathfrak{e}$ & $\mathfrak{f}$ \\
$\mathfrak{e}$ & $\mathfrak{e}$ & $\mathfrak{e}$ & $\mathfrak{e}$ \\
$\mathfrak{f}$ & $\mathfrak{f}$ & $\mathfrak{e}$ & $\mathfrak{f}$
\end{tabular}
\qquad
\begin{tabular}{c|ccc}
$\lor$ & $\mathfrak{t}$ & $\mathfrak{e}$ & $\mathfrak{f}$ \\
\hline
$\mathfrak{t}$ & $\mathfrak{t}$ & $\mathfrak{e}$ & $\mathfrak{t}$ \\
$\mathfrak{e}$ & $\mathfrak{e}$ & $\mathfrak{e}$ & $\mathfrak{e}$ \\
$\mathfrak{f}$ & $\mathfrak{t}$ & $\mathfrak{e}$ & $\mathfrak{f}$
\end{tabular}
\end{center}
\end{definition}

The weak Kleene truth tables for conjunction and disjunction induce the truth functions $\dot{\land}$ and $\dot{\lor}$, respectively. 

\begin{definition}
\label{def:quantifiers}
The \emph{restricted Kleene quantifier functions} $\dot{\forall}$ and $\dot{\exists}$ are mappings from sets of truth values to truth values such that:
\[
\begin{aligned}
\dot{\exists}(X) &= \begin{cases}
\mathfrak{t} & \text{if } \langle \mathfrak{t}, \mathfrak{t} \rangle \in X \\
\mathfrak{e} & \text{if for all } \langle u \text{, } v \rangle \text{, either } u = \mathfrak{e} \text{ or } v = \mathfrak{e} \\
\mathfrak{f} & \text{if } \langle \mathfrak{t}, \mathfrak{t} \rangle \notin X \text{ and for some } \langle u, v \rangle \in X \text{, } u \neq \mathfrak{e} \text{ and } v \neq \mathfrak{e}
\end{cases} \\
\dot{\forall}(X) &= \begin{cases}
\mathfrak{t} & \text{if } \langle \mathfrak{t}, \mathfrak{f} \rangle \text{, } \langle \mathfrak{t}, \mathfrak{e} \rangle \notin X \text{ and for some } \langle u \text{, } v \rangle \in X, u \neq \mathfrak{e} \text{ and } v \neq \mathfrak{e} \\
\mathfrak{e} & \text{if for all } \langle u \text{, } v \rangle \in X \text{, either } u = \mathfrak{e} \text{ or } v = \mathfrak{e} \\
\mathfrak{f} & \text{if } \{ \langle \mathfrak{t}, \mathfrak{t} \rangle \text{, } \langle \mathfrak{t} \text{, } \mathfrak{e} \rangle \} \cap X \neq \emptyset \text{ and for some } \langle u, v \rangle \in X \text{, either } u = \mathfrak{e} \text{ or } v = \mathfrak{e}
\end{cases}
\end{aligned}
\] 
\end{definition}

\subsection{Interpretations}
\label{sect:interpretations}

\begin{definition}
\label{def:standard}
    An \emph{$\mathbf{AC}$ interpretation} $\mathcal{I}$ is a pair $\langle \mathbf{C}^\mathcal{I}, \mathbf{R}^\mathcal{I} \rangle$ where $\mathbf{C}^\mathcal{I}$ is a domain of individuals and $\mathbf{R}^\mathcal{I}$ is a set of functions where $\mathcal{I}$ assigns:
    \begin{itemize}
        \item every constant $c \in \mathbf{C}$ an individual $c^\mathcal{I} \in \mathbf{C}^\mathcal{I}$
        \item every n-ary predicate $R$ a function $R^\mathcal{I}: (\mathbf{C}^\mathcal{I})^n \rightarrow \mathcal{V}_3 \times \mathcal{V}_3$
    \end{itemize}
\end{definition}

\begin{definition}
\label{def:stdmap}
An $\mathbf{AC}$ interpretation $\mathcal{I}$ induces a map $\mathcal{I}: \mathcal{L} \rightarrow \mathcal{V}_3 \times \mathcal{V}_3$ as follows, where $\mathcal{I}_0$ and $\mathcal{I}_1$ project the first and second coordinates respectively:
\begin{itemize}
\item For atomic formulas $R(c_1, \ldots, c_n) \in \mathcal{L}_{AT}$, $\mathcal{I}(\varphi) = R^\mathcal{I}(c_1^\mathcal{I}, \ldots, c_n^\mathcal{I})$
\item $\mathcal{I}(\neg \varphi) = \langle \mathcal{I}_1(\varphi), \mathcal{I}_0(\varphi) \rangle$
\item $\mathcal{I}(\varphi \land \psi) = \langle \mathcal{I}_0(\varphi) \, \dot{\land} \, \mathcal{I}_0(\psi), \mathcal{I}_1(\varphi) \, \dot{\lor} \, \mathcal{I}_1(\psi) \rangle$
\item $\mathcal{I}(\varphi \lor \psi) = \langle \mathcal{I}_0(\varphi) \, \dot{\lor} \, \mathcal{I}_0(\psi), \mathcal{I}_1(\varphi) \, \dot{\land} \, \mathcal{I}_1(\psi) \rangle$
\item $\mathcal{I}([\forall x\varphi(x)]\psi(x)) = \langle \dot{\forall}(\{\mathcal{I}_0(\varphi(c)), \mathcal{I}_0(\psi(c)) \mid c \in \mathcal{C}\}), \dot{\exists}(\{\mathcal{I}_0(\varphi(c)), \mathcal{I}_1(\psi(c)) \mid c \in \mathcal{C}\}) \rangle$
\item $\mathcal{I}([\exists x\varphi(x)]\psi(x)) = \langle \dot{\exists}(\{\mathcal{I}_0(\varphi(c)), \mathcal{I}_0(\psi(c)) \mid c \in \mathcal{C}\}),  \dot{\forall}(\{\mathcal{I}_0(\varphi(c)), \mathcal{I}_1(\psi(c)) \mid c \in \mathcal{C}\}) \rangle$
\end{itemize}
\end{definition}

\begin{definition}
\label{def:validity}
    Given an $\mathbf{AC}$ interpretation $\mathcal{I}$, validity with respect to $\mathcal{I}$ is defined as truth preservation, i.e.
    \begin{align*}
        \Gamma \models_{\mathcal{I}} \varphi \text{ if for all instances of } \mathcal{I} \text{ such that } \forall \psi \in \Gamma \: \mathcal{I}_0(\psi) = \mathfrak{t} \text{, } \mathcal{I}_0(\varphi) = \mathfrak{t}.
    \end{align*}
\end{definition}

\section{Prompts}
\label{apd:templates}

\subsection{Direct verification template}
\label{apd:dirvertemplate}
\begin{lstlisting}[basicstyle=\tiny\ttfamily, frame=single, breaklines=true]
Determine whether the following answer to the given question is correct. 
Conclude with a single line containing ONLY one of these two phrases:
VERIFIED
CANNOT VERIFY

Question: {question}
Proposed answer: {answer}
\end{lstlisting}

\subsection{Direct refutation template}
\label{apd:dirreftemplate}
\begin{lstlisting}[basicstyle=\tiny\ttfamily, frame=single, breaklines=true]
Determine whether the following answer to the given question can be refuted. 
Conclude with a single line containing ONLY one of these two phrases:
REFUTED
CANNOT REFUTE

Question: {question}
Proposed answer: {answer}
\end{lstlisting}

\subsection{Zero-shot verification template}
\label{apd:zerovertemplate}
\begin{lstlisting}[basicstyle=\tiny\ttfamily, frame=single, breaklines=true]
I'll provide you with a question and its proposed answer. 
Your task is to verify whether this answer is correct by following these steps:

1. Analyze the exact meaning of both the question and answer, 
identifying any key terms that need clarification.
2. Establish specific conditions that would make this answer true for this question.
3. Provide direct evidence supporting the answer, including specific facts, examples, 
or authoritative references that confirm its accuracy.
4. Test if the answer remains valid across all contexts where the question applies, 
noting any limitations or exceptions.
5. Check for consistency with established knowledge in the relevant domain.

Based on your analysis, determine whether the answer is verified and explain
your reasoning with specific supporting evidence. 
Your goal is not to find fault but to determine if positive 
evidence exists to confirm the answer.

After your complete analysis, conclude with a single line containing 
ONLY one of these two phrases:
VERIFIED
CANNOT VERIFY

Question: {question}
Proposed answer: {answer}
\end{lstlisting}

\subsection{Zero-shot refutation template}
\label{apd:zeroreftemplate}
\begin{lstlisting}[basicstyle=\tiny\ttfamily, frame=single, breaklines=true]
I'll provide you with a question and its proposed answer. 
Your task is to determine if this answer can be refuted by following these steps:

1. Analyze the exact meaning of both the question and the proposed answer.
2. Identify what specific conditions would need to be true for this answer to be false 
(not merely the absence of evidence).
3. Search for direct counterexamples or contradicting evidence that 
actively demonstrates why the answer is incorrect.
4. Construct specific scenarios where the answer fails to hold true, 
even if the question's premises are accepted.
5. Identify any logical inconsistencies, factual errors, or category mistakes
within the answer.

Focus on building an affirmative case for why the answer is incorrect, 
rather than simply noting a lack of supporting evidence. 
Provide specific counterevidence and explain precisely 
how it contradicts the proposed answer.

After your complete analysis, conclude with a single line containing 
ONLY one of these two phrases:
REFUTED
CANNOT REFUTE

Question: {question}
Proposed answer: {answer}
\end{lstlisting}

\subsection{Few-shot verification template}
\label{apd:fewvertemplate}
\begin{lstlisting}[basicstyle=\tiny\ttfamily, frame=single, breaklines=true]
I'll provide you with a question and its proposed answer. 
Your task is to verify whether this answer is correct by following these steps:

1. Analyze the exact meaning of both the question and answer, identifying any key terms that need clarification.
2. Establish specific conditions that would make this answer true for this question.
3. Provide direct evidence supporting the answer, including specific facts, examples, or authoritative references that confirm 
its accuracy.
4. Test if the answer remains valid across all contexts where the question applies, noting any limitations or exceptions.
5. Check for consistency with established knowledge in the relevant domain.

Based on your analysis, determine whether the answer is verified and explain your reasoning with specific supporting evidence. 
Your goal is not to find fault but to determine if positive evidence exists to confirm the answer.

Here are some examples of how to approach verification:

Example 1:
Question: Who was the first man to walk on the Moon?
Proposed answer: Neil Armstrong
Let's verify this answer systematically:
Key Terms Analysis:
"First man to walk on the Moon" requires precise historical verification
Specific date and mission details are crucial
Conditions for Verification:
Must be the first human to physically step onto lunar surface
Must have documented evidence of this historic moment
Must be part of a specific NASA mission
Direct Evidence:
Neil Armstrong stepped onto the Moon's surface on July 20, 1969
During Apollo 11 mission
Famous quote: "That's one small step for man, one giant leap for mankind"
Extensive NASA documentation confirms this
Broadcast live on global television
Verified by multiple independent sources including NASA, scientific records, and historical archives
Context and Limitations:
Specifically refers to first human, not first spacecraft
Specifically lunar surface, not orbital passage
Confirmed during Cold War space race
Part of NASA's Apollo program
Consistency Check:
Matches scientific and historical consensus
No credible alternative claims exist
Extensively documented in multiple international records
This answer meets all verification criteria with robust, multi-source evidence.
VERIFIED

Example 2:
Question: Chocolate is made from material
Proposed answer: Sugar
Let's analyze this systematically:
Key Term Analysis:
"Chocolate" refers to a food product derived from cacao/cocoa beans
"Material" suggests the fundamental ingredient or component
"Sugar" is a sweetener, not the primary material of chocolate
Conditions for Accuracy:
Sugar is an ingredient in chocolate
But sugar is NOT the primary material of chocolate
Chocolate's primary material is cocoa/cacao solids
Direct Evidence:
Chocolate is primarily made from cocoa beans processed into cocoa solids
Cocoa solids come from cacao tree seeds/beans
Sugar is added as a sweetener, not the base material
Chocolate composition typically includes:

Cocoa solids (primary material)
Cocoa butter
Sugar (secondary ingredient)
Milk (in milk chocolate)

Context Testing:
In all chocolate production processes, cocoa is the fundamental material
Sugar is always a supplementary ingredient, not the base material
Domain Consistency:
Culinary and food science consistently define cocoa/cacao as chocolate's primary material
Based on comprehensive analysis, the proposed answer is incorrect.
CANNOT VERIFY

After your complete analysis, conclude with a single line containing ONLY one of these two phrases:
VERIFIED
CANNOT VERIFY

Question: {question}
Proposed answer: {answer}
\end{lstlisting}

\subsection{Few-shot refutation template}
\label{apd:fewreftemplate}
\begin{lstlisting}[basicstyle=\tiny\ttfamily, frame=single, breaklines=true]
I'll provide you with a question and its proposed answer. 
Your task is to determine if this answer can be refuted by following these steps:

1. Analyze the exact meaning of both the question and the proposed answer.
2. Identify what specific conditions would need to be true for this answer to be false (not merely the absence of evidence).
3. Search for direct counterexamples or contradicting evidence that actively demonstrates why the answer is incorrect.
4. Construct specific scenarios where the answer fails to hold true, even if the question's premises are accepted.
5. Identify any logical inconsistencies, factual errors, or category mistakes within the answer.

Focus on building an affirmative case for why the answer is incorrect, rather than simply noting a lack of supporting evidence. 
Provide specific counterevidence and explain precisely how it contradicts the proposed answer.

Here are some examples of how to approach refutation:

Example 1:
Question: Are penguins birds?
Proposed answer: No
Let's analyze this systematically:
Meaning Analysis:
Question asks about the taxonomic classification of penguins
Proposed answer claims penguins are NOT birds
Conditions for Falsity:
Penguins must meet standard biological criteria for birds
Must share key avian characteristics
Counterevidence:
Penguins have ALL classic bird characteristics:

Feathered body
Lay eggs
Warm-blooded
Have beaks
Descended from dinosaur lineage
Classified in scientific taxonomy under Class Aves
Specifically, penguins belong to the order Sphenisciformes, which is a recognized bird order
Biological and genetic evidence conclusively places penguins within bird classification

Specific Scenarios Contradicting Answer:
Penguins have wing-like flippers adapted for swimming
They have respiratory and skeletal structures identical to other bird species
Genetic sequencing confirms their bird lineage
Logical Inconsistencies:
Rejecting penguins as birds would require rejecting fundamental biological classification systems
No scientific basis exists for excluding penguins from bird category
REFUTED

Example 2:
Question: Who was the first man to walk on the Moon?
Proposed answer: Neil Armstrong
Let's analyze this systematically:
Meaning Analysis:
Question seeks the definitive first human male to set foot on lunar surface
Proposed answer: Neil Armstrong (Apollo 11 mission, July 20, 1969)
Potential Conditions for Falsity:
Documented evidence of another person walking on Moon before Armstrong
Proof that Armstrong was not actually the first
Historical record showing a different individual preceded him
Counterevidence Search:
No credible historical evidence exists contradicting Armstrong's first Moon walk
NASA records and global documentation consistently confirm Armstrong as first
Extensive photographic and video evidence supports this claim
Scenario Testing:
No alternative scenarios emerge that could plausibly replace Armstrong's achievement
Extensive verification by multiple nations and independent researchers confirms his primacy
Logical Consistency Check:
Armstrong's Moon walk is extensively documented
Multiple witnesses and technological records corroborate the event
No logical inconsistencies detected in the claim
The proposed answer is completely accurate and supported by overwhelming historical evidence.
CANNOT REFUTE

After your complete analysis, conclude with a single line containing ONLY one of these two phrases:
REFUTED
CANNOT REFUTE

Question: {question}
Proposed answer: {answer}
\end{lstlisting}

\subsection{Prompt template for generating negative examples for SimpleQA-derived benchmark}
\label{apd:synnegtemplate}
\begin{lstlisting}[basicstyle=\tiny\ttfamily, frame=single, breaklines=true]
You are an expert synthetic data generator. Your task is to generate three plausible but 
incorrect answers to a given question that will serve as challenging distractors.

Guidelines for generating high-quality wrong answers:
1. Each answer must be factually incorrect but highly plausible within the context
   - Draw from the same domain/topic as the correct answer
   - Use answers that could reasonably be mistaken for the truth
   - Avoid obviously wrong or nonsensical options

2. Strictly match the answer type and format
   - For dates: Use the same date format and plausible timeframe
   - For people: Match profession, era, and relevance 
   - For numbers: Stay within reasonable orders of magnitude
   - For places: Use locations of similar type/scale
   
3. Ensure clear differentiation
   - Make each wrong answer distinct from the correct answer
   - Avoid overlap between wrong answers
   - Space out numerical answers appropriately
   
4. Maintain consistent specificity
   - Match the level of detail in the correct answer
   - If the answer is detailed, provide equally detailed wrong options
   - If the answer is brief, keep wrong answers similarly concise

5. Optimize for realism and difficulty
   - Wrong answers should feel natural, not contrived
   - Target common misconceptions when possible
   - Make the options challenging enough to test real knowledge
   - Avoid answers that are too obviously incorrect

Example 1:
Question: Who was the first president of the United States?
Answer: George Washington
Wrong Answers:
- John Adams
- Thomas Jefferson
- Benjamin Franklin
Reason: All are founding fathers but not the first president

Example 2:
Question: What is the largest planet in our solar system?
Answer: Jupiter
Wrong Answers:
- Saturn
- Neptune
- Uranus
Reason: All are gas giant planets, but smaller than Jupiter

Example 3:
Question: Who wrote Romeo and Juliet?
Answer: William Shakespeare
Wrong Answers:
- Christopher Marlowe
- Ben Jonson
- John Webster
Reason: All are prominent Elizabethan playwrights

Return only three wrong answers as a list in JSON format with the following requirements:
- Each wrong answer should be a string
- The output should be a single JSON object with key "negative_answers" 
- The value should be an array of exactly 3 wrong answers
- No explanations or additional text should be included
- The answers should maintain consistent formatting with the correct answer

Example format:
{{
    "negative_answers": ["Wrong Answer 1", "Wrong Answer 2", "Wrong Answer 3"]
}}
\end{lstlisting}

\subsection{Direct prompt template for unilateral factuality evaluation}
\label{apd:unidirtemplate}
\begin{lstlisting}[basicstyle=\tiny\ttfamily, frame=single, breaklines=true]
Determine whether the following answer to the given question is TRUE or FALSE.

Conclude with a single line containing ONLY one of these two words:
TRUE
FALSE

Question: {question}
Proposed answer: {answer}
\end{lstlisting}

\subsection{Zero-shot prompt template for unilateral factuality evaluation}
\label{apd:unizerotemplate}
\begin{lstlisting}[basicstyle=\tiny\ttfamily, frame=single, breaklines=true]
I'll provide you with a question and its proposed answer.
Your task is to evaluate whether this answer is correct by following these steps:

1. Analyze the exact meaning of both the question and answer, identifying any key terms that need clarification.
2. Establish specific conditions that would make this answer true for this question.
3. Provide direct evidence supporting the answer, including specific facts, examples, or 
authoritative references that confirm its accuracy.
4. Test if the answer remains valid across all contexts where the question applies, noting any limitations or exceptions.
5. Check for consistency with established knowledge in the relevant domain.
6. Search for direct counterexamples or contradicting evidence that actively demonstrates why the answer is incorrect.
7. Construct specific scenarios where the answer fails to hold true, even if the question's premises are accepted.
8. Identify any logical inconsistencies, factual errors, or category mistakes within the answer.

After analyzing the question and answer, provide a single line containing ONLY one of these two words:
TRUE
FALSE

Question: {question}
Proposed answer: {answer}
\end{lstlisting}

\subsection{Few-shot prompt template for unilateral factuality evaluation}
\label{apd:unifewtemplate}
\begin{lstlisting}[basicstyle=\tiny\ttfamily, frame=single, breaklines=true]
I'll provide you with a question and its proposed answer.
Your task is to evaluate whether this answer is correct by following these steps:

1. Analyze the exact meaning of both the question and answer, identifying any key terms that need clarification.
2. Establish specific conditions that would make this answer true for this question.
3. Provide direct evidence supporting the answer, including specific facts, examples, or 
authoritative references that confirm its accuracy.
4. Test if the answer remains valid across all contexts where the question applies, noting any limitations or exceptions.
5. Check for consistency with established knowledge in the relevant domain.
6. Search for direct counterexamples or contradicting evidence that actively demonstrates why the answer is incorrect.
7. Construct specific scenarios where the answer fails to hold true, even if the question's premises are accepted.
8. Identify any logical inconsistencies, factual errors, or category mistakes within the answer.

Here are examples of how to approach evaluation:

Example 1:
Question: Who was the first man to walk on the Moon?
Proposed answer: Neil Armstrong
Analyze the question and answer:
Question: "Who was the first man to walk on the Moon?" This is a straightforward factual question seeking the identity of 
the first human to set foot on the lunar surface.
Proposed answer: "Neil Armstrong" This is a name, presumably offered as the answer to the question.
Establish conditions for truth:
The answer is true if Neil Armstrong was indeed the first human to walk on the Moon.
Provide supporting evidence:
Historical records, NASA documentation, and countless reliable sources confirm that Neil Armstrong was the first person to 
walk on the Moon on July 20, 1969, during the Apollo 11 mission.
Test validity across contexts:
The answer holds true in all historical contexts related to the first Moon landing.
Check for consistency with established knowledge:
The answer aligns perfectly with established historical and scientific knowledge.
Search for counterexamples:
There are no credible counterexamples. No other individual is historically recognized as the first person to walk on the Moon.
Construct failure scenarios:
There are no scenarios where the answer fails, assuming the question refers to the generally accepted historical event.
Identify logical inconsistencies:
There are no logical inconsistencies or factual errors.
TRUE

Example 2:
Question: What is the main ingredient in chocolate?
Proposed answer: Sugar
Analyze the question and answer:
Question: "Chocolate is made from material" - This is an incomplete sentence. The question is implicitly asking "What material is 
chocolate made from?" or "What is a key material used to make chocolate?".
Proposed answer: "Sugar" - This suggests that sugar is the material chocolate is made from.
Establish conditions for truth:
The answer would be true if sugar was the only ingredient in chocolate, or 
if the question was interpreted as "Is sugar a material used to make chocolate?".
Provide supporting evidence:
Sugar is a common and significant ingredient in most chocolate recipes.
Test validity across contexts:
This answer fails in many contexts. Chocolate is not only made from sugar.
Check for consistency with established knowledge:
Chocolate is made from cacao beans, sugar, and often other ingredients like milk solids, cocoa butter, lecithin, and flavorings.
Search for counterexamples:
Dark chocolate often contains a higher percentage of cacao and less sugar.
Sugar-free chocolate exists, using artificial sweeteners instead.
Cacao beans are essential for chocolate, and chocolate cannot be made without them.
Construct failure scenarios:
Imagine a recipe for 100% cacao chocolate. It would contain no sugar.
Imagine a sugar-free chocolate bar. It would contain no sugar.
Identify logical inconsistencies:
The answer implies sugar is the only ingredient, which is false.
FALSE

Question: {question}
Proposed answer: {answer}
\end{lstlisting}

\section{Example}
\label{apd:examples}

To illustrate bilateral evaluation, we present an example with statements about penguins in the context of a knowledge base with a universally quantified statement that all birds can fly:

\begin{enumerate}
\item $\mathbf{C} = \{ penguin, eagle, sparrow, ... \}$
\item $\varphi_0 = [\: \forall x \: bird(x) \:] \: flies(x)$
\item $\varphi_1 = \text{bird}(\text{penguin})$
\item $\varphi_2 = \text{flies}(\text{penguin})$
\item $\varphi_3 = \neg \varphi_2$
\item $\delta(\varphi_1) = \ulcorner \text{Penguins are birds} \urcorner$
\item $P^+(\delta(\varphi_1)) = \ulcorner \ldots \text{Penguins are scientifically classified as birds. They belong to }$ \\ 
$ \text{the family Spheniscidae} \ldots \text{Conclusion: } \mathsf{VERIFIED} \urcorner$
\item $P^-(\delta(\varphi_1)) = \ulcorner \ldots \text{All evolutionary biologists classify penguins as birds. This is }$ \\ $ \text{supported by molecular evidence, fossil records, and anatomical features. }$ \\ $\text{There is no reasonable alternative classification. Conclusion: } \mathsf{CANNOT\ REFUTE} \urcorner$
\item $\zeta(\varphi_1) = \langle t, f \rangle$
\item $\delta(\varphi_2) = \ulcorner \text{Penguins fly} \urcorner$
\item $P^+(\delta(\varphi_2)) = \ulcorner \ldots \text{While penguins have wings, they cannot achieve aerial flight. }$ \\ $ \text{Their wings are adapted for swimming rather than flying. They flap their } $ \\ $ \text{wings underwater to ``fly" through water. From a strict biological perspective, } $ \\ $ \text{penguins do not fly through air. Conclusion: } \mathsf{CANNOT\ VERIFY} \urcorner$
\item $P^-(\delta(\varphi_2)) = \ulcorner \ldots \text{Penguins are flightless birds. Their wings have evolved into }$ \\ $ \text{flippers for aquatic propulsion rather than aerial flight. This is well-established }$ \\ $ \text{in ornithology. Conclusion: } \mathsf{REFUTED} \urcorner$
\item $\zeta(\varphi_2) = \langle f, t \rangle$
\item $\mathcal{I}(\varphi_0) = \langle\dot{\forall}(\{\mathcal{I}_0(\text{bird}(c)), \mathcal{I}_0(\text{flies}(c)) \mid c \in \mathbf{C}\}), \dot{\exists}(\{\mathcal{I}_0(\text{bird}(c)), \mathcal{I}_1(\text{flies}(c)) \mid c \in \mathbf{C}\})\rangle \\ = \langle f, t \rangle$
\item $\mathcal{I}(\varphi_3) = \langle \mathcal{I}_1(\varphi_2), \mathcal{I}_0(\varphi_2) \rangle = \langle t, f \rangle$
\end{enumerate}
This bilateral evaluation reveals the inconsistency. The universal statement $\varphi_0$ evaluates to false when considering penguins, and $\varphi_3$ evaluates to true, but both statements can coexist in $\mathbf{AC}$ without causing explosion. This demonstrates how the system handles the classic penguin problem through paraconsistent reasoning.

\section{Experiments}
\label{apd:experiments}

\subsection{Datasets}

We used the short-form factuality benchmarks GPQA \citep{rein2023gpqa} and SimpleQA \citep{wei2024measuring} to create the benchmarks for our experiments. GPQA consists of 448 multiple-choice questions, written by domain experts in biology, physics, and chemistry. SimpleQA consists of 4,326 question/answer pairs addressing a range of general topic areas, including history, science and technology, art, geography, TV shows, and video games. From these two benchmarks we created a balanced set of positive and negative examples. From SimpleQA, we sampled without replacement 200 question/answer pairs to be positive examples, and 200 questions to be negative examples, where we substituted false answers synthetically generated using GPT-4o Mini using the prompt shown in Appendix \ref{apd:synnegtemplate}. From GPQA, we sampled 200 existing question/answer pairs as positive examples, and 200 questions paired with the first incorrect answer for that question provided as part of the dataset.

\subsection{Experimental setup}

Following \cite{wei2024measuring}, we evaluated our implementation of $\zeta$ on a selective classification with a binary abstention task \citep{el2010foundations} using the above two datasets, measuring an LLM judge's grading of a given question/answer pair. The standard pattern in LLM judges in factuality evaluation is to prompt the LLM judge to evaluate the answer to the question as either correct, incorrect, or not attempted. This, again, has a natural mapping to the values of $\mathcal{V}_3$; to derive a single truth value $v \in \mathcal{V}_3$ for the evaluation, we use the following projection $p: \mathcal{V}_3 \times \mathcal{V}_3 \rightarrow \mathcal{V}_3$ such that for a pair $\langle u,v \rangle$: 

\begin{align*}
    p(\langle u,v \rangle) = \begin{cases}
    \mathfrak{t} & \text{if } \langle u,v\rangle = \langle \mathfrak{t},\mathfrak{f}\rangle\\
    \mathfrak{f} & \text{if } \langle u,v\rangle = \langle \mathfrak{f},\mathfrak{t}\rangle\\
    \mathfrak{e} & \text{otherwise}
  \end{cases}
\end{align*}

Calls to the public inference APIs for the models used a temperature of 0.1. Repeated sampling (N=3) with majority vote was used in both the verification and refutation processes. Statistical significance was assessed using paired t-tests (\texttt{ttest\_rel} from the \texttt{scipy.stats} Python package) to compare performance metrics between different model and prompt combinations. The experiments were conducted in the first half of May 2025 using calls to the public inference APIs for each of the models. 

\subsection{Experimental results}

\begin{table}[!htbp]
\centering
\tiny
\begin{tabular}{llllll}
\toprule
\textbf{Judge Model} & \textbf{Prompt} & \textbf{Macro F1} & \textbf{Coverage} & \textbf{Time (s)} & \textbf{Tokens} \\
\midrule
Claude 3.5 Sonnet & direct & 0.712 (0.023) & 0.748 (0.02) &  34.22 (6.08) & 2914.80 (778.99) \\
& zero & 0.738 (0.029) & 0.542 (0.024) &  53.91 (6.41) & 4863.48 (731.27) \\
& few & 0.716 (0.028) & 0.54 (0.023) &  52.92 (6.41) & 8079.40 (745.71) \\
\midrule
Claude 3.5 Haiku & direct & 0.578 (0.027) & 0.778 (0.02) &  30.52 (4.09) & 2641.07 (704.48) \\
& zero & 0.648 (0.034) & 0.412 (0.023) &  43.12 (3.60) & 4221.73 (685.85) \\
& few & 0.604 (0.034) & 0.438 (0.024) &  47.44 (4.69) & 7673.44 (700.47) \\
\midrule
Llama 4 Maverick & direct & \textbf{0.774} (0.021) & \textbf{0.852} (0.016) &  65.91 (52.53) & 6225.19 (1526.74) \\
& zero & 0.765 (0.025) & 0.618 (0.022) &  75.95 (44.90) & 7492.54 (1357.27) \\
& few & 0.751 (0.023) & 0.805 (0.018) &  65.08 (41.43) & 9945.70 (1444.02) \\
\midrule
Llama 4 Scout & direct & 0.702 (0.025) & 0.712 (0.021) &  63.24 (28.92) & 5403.00 (1833.85)  \\
& zero & 0.712 (0.031) & 0.46 (0.024) &  93.43 (51.27) & 7062.05 (1430.96) \\
& few & 0.694 (0.027) & 0.642 (0.022) &  69.27 (37.27) & 9540.89 (1362.86) \\
\midrule
GPT-4o & direct & 0.592 (0.027) & 0.705 (0.021) &   \textbf{5.61} (8.41) & \textbf{1133.92} (556.51) \\
& zero & 0.603 (0.035) & 0.48 (0.022) &  36.29 (9.44) & 6041.24 (1256.13) \\
& few & 0.69 (0.031) & 0.518 (0.023) &  42.01 (18.48) & 8662.78 (1398.93) \\
\midrule
GPT-4o Mini & direct & 0.536 (0.028) & 0.69 (0.022) &  16.25 (12.91) & 2863.86 (1716.43)  \\
& zero & 0.4 (0.025) & 0.438 (0.023) &  32.88 (7.54) & 5851.67 (959.41) \\
& few & 0.428 (0.028) & 0.488 (0.023) &  47.99 (15.90) & 8787.62 (1120.93) \\
\bottomrule
\end{tabular}
\caption{Performance metrics for $\zeta$ using different judge models and evaluation prompts on GPQA question/answer pairs (N=400).}
\label{tab:finalperfgpqa}
\end{table}

\begin{table}[!htbp]
\centering
\tiny
\begin{tabular}{llllll}
\toprule
\textbf{Judge Model} & \textbf{Prompt} & \textbf{Macro F1} & \textbf{Coverage} & \textbf{Time (s)} & \textbf{Tokens} \\
\midrule
Claude 3.5 Sonnet & direct & 0.667 (0.021) & 1.0 (0.0) &  14.52 (2.92) & 1360.21 (386.44) \\
& zero & 0.664 (0.022) & 1.0 (0.0) &  19.29 (2.80) & 2161.70 (369.97) \\
& few & 0.66 (0.022) & 1.0 (0.0) &  23.64 (2.85) & 5055.46 (382.93) \\
\midrule
Claude 3.5 Haiku & direct & 0.533 (0.023) & 1.0 (0.0) &  14.25 (2.23) & 1346.54 (377.10) \\
& zero & 0.556 (0.024) & 1.0 (0.0) &  19.70 (2.44) & 2070.33 (328.06) \\
& few & 0.577 (0.023) & 0.998 (0.002) &  19.48 (1.55) & 4829.19 (327.03) \\
\midrule
Llama 4 Maverick & direct & \textbf{0.722} (0.021) & \textbf{1.0} (0.0) &  26.83 (21.83) & 2919.78 (825.91) \\
& zero & 0.694 (0.02) & 0.998 (0.002) &  25.20 (10.44) & 3462.93 (665.47) \\
& few & 0.717 (0.021) & 0.99 (0.005) &  24.49 (11.06) & 5756.76 (602.70) \\
\midrule
Llama 4 Scout & direct & 0.636 (0.023) & 1.0 (0.0) &  30.07 (19.93) & 2467.57 (1096.58) \\
& zero & 0.577 (0.024) & 0.998 (0.002) &  24.74 (18.10) & 2689.86 (1180.22) \\
& few & 0.568 (0.023) & 1.0 (0.0) &  24.38 (18.08) & 4955.30 (1124.60) \\
\midrule
GPT-4o & direct & 0.572 (0.023) & 1.0 (0.0) &   \textbf{2.92} (7.85) & \textbf{546.73} (276.40) \\
& zero & 0.497 (0.024) & 1.0 (0.0) &   3.89 (4.92) & 1059.73 (276.40) \\
& few & 0.484 (0.024) & 1.0 (0.0) &   8.00 (8.50) & 3446.44 (343.78) \\
\midrule
GPT-4o Mini & direct & 0.543 (0.023) & 1.0 (0.0) &   9.52 (6.61) & 1543.24 (848.31) \\
& zero & 0.444 (0.021) & 1.0 (0.0) &   6.60 (6.96) & 1722.51 (945.13) \\
& few & 0.538 (0.024) & 1.0 (0.0) &  11.41 (2.60) & 4949.93 (477.92) \\
\bottomrule
\end{tabular}
\caption{Performance metrics for unilateral factuality evaluation using different judge models and evaluation prompts on GPQA question/answer pairs (N=400).}
\label{tab:finalperfgpqauni}
\end{table}

\begin{table}[!htbp]
\centering
\tiny
\begin{tabular}{llllll}
\toprule
\textbf{Judge Model} & \textbf{Prompt} & \textbf{Macro F1} & \textbf{Coverage} & \textbf{Time (s)} & \textbf{Tokens} \\
\midrule
Claude 3.5 Sonnet & direct & 0.81 (0.025) & 0.502 (0.024) &  19.22 (4.49) & 1190.43 (174.00) \\
& zero & 0.733 (0.027) & 0.615 (0.022) &  43.35 (6.44) & 3444.22 (171.12) \\
& few & 0.654 (0.031) & 0.65 (0.022) &  43.19 (5.45) & 6704.68 (161.86) \\
\midrule
Claude 3.5 Haiku & direct & 0.667 (0.033) & 0.458 (0.024) &  15.81 (3.07) & 1014.98 (149.24) \\
& zero & 0.673 (0.036) & 0.385 (0.022) &  38.92 (3.14) & 3095.57 (131.72) \\
& few & 0.653 (0.033) & 0.45 (0.023) &  40.01 (4.09) & 6435.11 (158.40) \\
\midrule
Llama 4 Maverick & direct & 0.673 (0.026) & \textbf{0.768} (0.02) &  21.84 (13.56) & 2008.57 (754.99) \\
& zero & 0.746 (0.029) & 0.528 (0.024) &  36.17 (10.85) & 4421.40 (448.00) \\
& few & 0.692 (0.026) & 0.715 (0.021) &  41.29 (31.55) & 6665.58 (512.60) \\
\midrule
Llama 4 Scout & direct & 0.58 (0.031) & 0.572 (0.023) &  17.02 (9.05) & 1392.07 (434.73) \\
& zero & 0.677 (0.038) & 0.358 (0.022) &  43.41 (16.12) & 4049.26 (380.54) \\
& few & 0.558 (0.031) & 0.632 (0.023) &  36.90 (27.20) & 6267.85 (437.70) \\
\midrule
GPT-4o & direct & 0.67 (0.026) & 0.74 (0.021) &   5.11 (5.49) & \textbf{477.22} (60.34) \\
& zero & 0.738 (0.032) & 0.44 (0.024) &  22.18 (3.49) & 3720.07 (314.42) \\
& few & \textbf{0.833} (0.026) & 0.482 (0.024) &  21.00 (4.86) & 6079.94 (292.95) \\
\midrule
GPT-4o Mini & direct & 0.604 (0.027) & 0.718 (0.021) &   \textbf{2.53} (0.72) & 483.86 (68.15) \\
& zero & 0.525 (0.038) & 0.378 (0.023) &  23.75 (10.23) & 3812.08 (829.15) \\
& few & 0.586 (0.034) & 0.472 (0.023) &  30.69 (16.02) & 6298.49 (254.71) \\
\bottomrule
\end{tabular}
\caption{Performance metrics for $\zeta$ using different judge models and evaluation prompts on SimpleQA question/answer pairs (N=400).}
\label{tab:finalperfsimpleqa}
\end{table}

\begin{table}[!htbp]
\centering
\tiny
\begin{tabular}{llllll}
\toprule
\textbf{Judge Model} & \textbf{Prompt} & \textbf{Macro F1} & \textbf{Coverage} & \textbf{Time (s)} & \textbf{Tokens} \\
\midrule
Claude 3.5 Sonnet & direct & \textbf{0.705} (0.022) & \textbf{1.0} (0.0) &   6.34 (1.59) & 453.53 (80.98) \\
& zero & 0.682 (0.022) & 1.0 (0.0) &  16.28 (5.05) & 1499.60 (112.18) \\
& few & 0.661 (0.023) & 1.0 (0.0) &  16.82 (1.99) & 4331.51 (89.63) \\
\midrule
Claude 3.5 Haiku & direct & 0.595 (0.023) & 1.0 (0.0) &   6.69 (1.60) & 489.73 (91.76) \\
& zero & 0.55 (0.025) & 1.0 (0.0) &  16.29 (3.87) & 1505.90 (93.60) \\
& few & 0.523 (0.024) & 1.0 (0.0) &  17.02 (1.55) & 4332.01 (63.46) \\
\midrule
Llama 4 Maverick & direct & 0.643 (0.023) & 1.0 (0.0) &   6.71 (4.20) & 814.69 (350.94) \\
& zero & 0.663 (0.023) & 0.992 (0.004) &  14.88 (8.78) & 2097.99 (230.58) \\
& few & 0.648 (0.023) & 0.992 (0.004) &  16.97 (10.19) & 4524.26 (265.16) \\
\midrule
Llama 4 Scout & direct & 0.578 (0.023) & 1.0 (0.0) &   5.73 (4.96) & 511.43 (286.82) \\
& zero & 0.559 (0.025) & 1.0 (0.0) &   8.62 (8.44) & 1192.79 (516.58) \\
& few & 0.552 (0.024) & 1.0 (0.0) &   5.08 (6.68) & 3306.05 (376.05) \\
\midrule
GPT-4o & direct & 0.62 (0.023) & 1.0 (0.0) &   2.55 (5.42) & \textbf{220.40} (30.07) \\
& zero & 0.614 (0.024) & 1.0 (0.0) &   3.49 (3.70) & 733.39 (30.07) \\
& few & 0.632 (0.023) & 1.0 (0.0) &   7.38 (9.60) & 3088.39 (30.07) \\
\midrule
GPT-4o Mini & direct & 0.587 (0.023) & 1.0 (0.0) &   \textbf{1.08} (0.19) & 220.58 (30.79) \\
& zero & 0.528 (0.023) & 1.0 (0.0) &   1.56 (1.52) & 788.69 (185.84) \\
& few & 0.572 (0.022) & 1.0 (0.0) &   6.75 (2.37) & 3923.80 (318.41) \\
\bottomrule
\end{tabular}
\caption{Performance metrics for unilateral factuality evaluation using different judge models and evaluation prompts on SimpleQA question/answer pairs (N=400).}
\label{tab:finalperfsimpleqauni}
\end{table}

\begin{table}[!htbp]
\centering
\tiny
\begin{tabular}{llllllllll}
\toprule
\textbf{Judge Model} & \textbf{Prompt} & \textbf{$\langle \mathfrak{t},\mathfrak{t} \rangle$} & \textbf{$\langle \mathfrak{t},\mathfrak{f} \rangle$} & \textbf{$\langle \mathfrak{f},\mathfrak{t} \rangle$} & \textbf{$\langle \mathfrak{f},\mathfrak{f} \rangle$} \\
\midrule
Claude 3.5 Sonnet & direct & 0.202 (0.018) & 0.392 (0.022) & 0.355 (0.022) & 0.05 (0.011) \\
& zero & 0.435 (0.023) & 0.218 (0.02) & 0.325 (0.021) & 0.022 (0.007)  \\
& few & 0.448 (0.023) & 0.18 (0.018) & 0.36 (0.022) & 0.012 (0.005) \\
\midrule
Claude 3.5 Haiku & direct & 0.11 (0.015) & 0.53 (0.023) & 0.248 (0.02) & 0.112 (0.015) \\
& zero & 0.53 (0.023) & 0.208 (0.019) & 0.205 (0.019) & 0.058 (0.011)  \\
& few & 0.502 (0.024) & 0.212 (0.02) & 0.225 (0.02) & 0.06 (0.011) \\
\midrule
Llama 4 Maverick & direct & 0.04 (0.009) & 0.442 (0.023) & 0.41 (0.023) & 0.108 (0.015) \\
& zero & 0.37 (0.022) & 0.245 (0.02) & 0.372 (0.023) & 0.012 (0.005) \\
& few & 0.155 (0.017) & 0.438 (0.023) & 0.368 (0.022) & 0.04 (0.009) \\
\midrule
Llama 4 Scout & direct & 0.072 (0.013) & 0.368 (0.023) & 0.345 (0.022) & 0.215 (0.019) \\
& zero & 0.525 (0.024) & 0.245 (0.021) & 0.215 (0.019) & 0.015 (0.006) \\
& few & 0.33 (0.022) & 0.385 (0.024) & 0.258 (0.02) & 0.028 (0.007) \\
\midrule
GPT-4o & direct & 0.082 (0.013) & 0.358 (0.022) & 0.348 (0.023) & 0.212 (0.018) \\
& zero & 0.502 (0.023) & 0.108 (0.015) & 0.372 (0.022) & 0.018 (0.006) \\
& few & 0.468 (0.024) & 0.155 (0.017) & 0.362 (0.022) & 0.015 (0.006) \\
\midrule
GPT-4o Mini & direct & 0.172 (0.018) & 0.145 (0.017) & 0.545 (0.023) & 0.138 (0.016) \\
& zero & 0.555 (0.023) & 0.018 (0.006) & 0.42 (0.023) & 0.008 (0.004) \\
& few & 0.51 (0.023) & 0.028 (0.007) & 0.46 (0.022) & 0.002 (0.002) \\
\bottomrule
\end{tabular}
\caption{Truth value probabilities for $\zeta$ using different judge models and evaluation prompts for GPQA.}
\label{tab:finaltvgpqa}
\end{table}

\begin{table}[!htbp]
\centering
\tiny
\begin{tabular}{llllllllll}
\toprule
\textbf{Judge Model} & \textbf{Prompt} & \textbf{$\langle \mathfrak{t},\mathfrak{t} \rangle$} & \textbf{$\langle \mathfrak{t},\mathfrak{f} \rangle$} & \textbf{$\langle \mathfrak{f},\mathfrak{t} \rangle$} & \textbf{$\langle \mathfrak{f},\mathfrak{f} \rangle$} \\
\midrule
Claude 3.5 Sonnet & direct & 0.052 (0.01) & 0.218 (0.02) & 0.285 (0.021) & 0.445 (0.024) \\
& zero & 0.245 (0.02) & 0.152 (0.016) & 0.462 (0.022) & 0.14 (0.016) \\
& few & 0.278 (0.021) & 0.118 (0.014) & 0.532 (0.023) & 0.072 (0.012) \\
\midrule
Claude 3.5 Haiku & direct & 0.035 (0.008) & 0.282 (0.022) & 0.175 (0.018) & 0.507 (0.024) \\
& zero & 0.148 (0.016) & 0.145 (0.016) & 0.24 (0.02) & 0.468 (0.023) \\
& few & 0.132 (0.016) & 0.162 (0.017) & 0.288 (0.021) & 0.418 (0.023) \\
\midrule
Llama 4 Maverick & direct & 0.088 (0.013) & 0.588 (0.022) & 0.18 (0.018) & 0.145 (0.016) \\
& zero & 0.438 (0.024) & 0.35 (0.022) & 0.178 (0.018) & 0.032 (0.008) \\
& few & 0.218 (0.019) & 0.525 (0.023) & 0.19 (0.019) & 0.068 (0.012) \\
\midrule
Llama 4 Scout & direct & 0.125 (0.015) & 0.368 (0.022) & 0.205 (0.018) & 0.302 (0.021) \\
& zero & 0.62 (0.022) & 0.245 (0.02) & 0.112 (0.015) & 0.022 (0.007) \\
& few & 0.232 (0.02) & 0.53 (0.023) & 0.102 (0.014) & 0.132 (0.016) \\
\midrule
GPT-4o & direct & 0.058 (0.011) & 0.462 (0.023) & 0.278 (0.022) & 0.202 (0.019) \\
& zero & 0.56 (0.024) & 0.105 (0.014) & 0.335 (0.023) & 0.0 (0.0) \\
& few & 0.51 (0.024) & 0.205 (0.018) & 0.278 (0.021) & 0.008 (0.004) \\
\midrule
GPT-4o Mini & direct & 0.145 (0.017) & 0.228 (0.019) & 0.49 (0.023) & 0.138 (0.016) \\
& zero & 0.62 (0.023) & 0.055 (0.01) & 0.322 (0.022) & 0.002 (0.002) \\
& few & 0.488 (0.022) & 0.272 (0.02) & 0.2 (0.018) & 0.038 (0.009) \\
\bottomrule
\end{tabular}
\caption{Truth value probabilities for $\zeta$ using different judge models and evaluation prompts for SimpleQA.}
\label{tab:finaltvsimpleqa}
\end{table}

\begin{table}[!htbp]
\centering
\tiny
\renewcommand{\arraystretch}{0.9}
\begin{tabular}{llllll}
\toprule
\textbf{Topic} & \textbf{Model} & $\langle \mathfrak{t}, \mathfrak{f} \rangle$ & $\langle \mathfrak{f}, \mathfrak{t} \rangle$ & $\langle \mathfrak{t}, \mathfrak{t} \rangle$ & $\langle \mathfrak{f}, \mathfrak{f} \rangle$ \\
\midrule
Art & GPT-4o & 0.127 (0.039) & 0.327 (0.056) & 0.545 (0.059) & 0.000 (0.000) \\
 & GPT-4o-mini & 0.036 (0.022) & 0.291 (0.054) & 0.673 (0.056) & 0.000 (0.000) \\
 & Claude 3.5 Sonnet & 0.182 (0.045) & 0.509 (0.059) & 0.182 (0.045) & 0.127 (0.039) \\
 & Claude 3.5 Haiku & 0.200 (0.047) & 0.218 (0.048) & 0.109 (0.037) & 0.473 (0.060) \\
 & Llama 4 Maverick & 0.345 (0.055) & 0.145 (0.041) & 0.491 (0.058) & 0.018 (0.016) \\
 & Llama 4 Scout & 0.273 (0.051) & 0.055 (0.026) & 0.655 (0.054) & 0.018 (0.016) \\
\midrule
Geography & GPT-4o & 0.114 (0.042) & 0.409 (0.063) & 0.477 (0.067) & 0.000 (0.000) \\
 & GPT-4o-mini & 0.114 (0.042) & 0.273 (0.058) & 0.614 (0.064) & 0.000 (0.000) \\
 & Claude 3.5 Sonnet & 0.159 (0.047) & 0.500 (0.065) & 0.273 (0.060) & 0.068 (0.032) \\
 & Claude 3.5 Haiku & 0.182 (0.049) & 0.318 (0.059) & 0.159 (0.049) & 0.341 (0.061) \\
 & Llama 4 Maverick & 0.386 (0.062) & 0.182 (0.050) & 0.409 (0.064) & 0.023 (0.019) \\
 & Llama 4 Scout & 0.295 (0.059) & 0.091 (0.037) & 0.614 (0.063) & 0.000 (0.000) \\
\midrule
History & GPT-4o & 0.143 (0.076) & 0.357 (0.108) & 0.500 (0.109) & 0.000 (0.000) \\
 & GPT-4o-mini & 0.071 (0.056) & 0.500 (0.111) & 0.429 (0.112) & 0.000 (0.000) \\
 & Claude 3.5 Sonnet & 0.214 (0.091) & 0.357 (0.108) & 0.214 (0.094) & 0.214 (0.088) \\
 & Claude 3.5 Haiku & 0.214 (0.090) & 0.214 (0.094) & 0.143 (0.080) & 0.429 (0.108) \\
 & Llama 4 Maverick & 0.429 (0.111) & 0.214 (0.093) & 0.357 (0.108) & 0.000 (0.000) \\
 & Llama 4 Scout & 0.357 (0.107) & 0.143 (0.079) & 0.500 (0.109) & 0.000 (0.000) \\
\midrule
Music & GPT-4o & 0.000 (0.000) & 0.361 (0.069) & 0.639 (0.069) & 0.000 (0.000) \\
 & GPT-4o-mini & 0.028 (0.023) & 0.278 (0.066) & 0.694 (0.068) & 0.000 (0.000) \\
 & Claude 3.5 Sonnet & 0.056 (0.033) & 0.611 (0.068) & 0.194 (0.056) & 0.139 (0.049) \\
 & Claude 3.5 Haiku & 0.111 (0.044) & 0.278 (0.065) & 0.083 (0.039) & 0.528 (0.072) \\
 & Llama 4 Maverick & 0.278 (0.065) & 0.167 (0.054) & 0.556 (0.072) & 0.000 (0.000) \\
 & Llama 4 Scout & 0.194 (0.056) & 0.167 (0.054) & 0.611 (0.072) & 0.028 (0.023) \\
\midrule
Other & GPT-4o & 0.136 (0.044) & 0.341 (0.062) & 0.523 (0.065) & 0.000 (0.000) \\
 & GPT-4o-mini & 0.068 (0.032) & 0.455 (0.065) & 0.477 (0.067) & 0.000 (0.000) \\
 & Claude 3.5 Sonnet & 0.136 (0.045) & 0.455 (0.064) & 0.205 (0.052) & 0.205 (0.052) \\
 & Claude 3.5 Haiku & 0.091 (0.038) & 0.227 (0.055) & 0.136 (0.045) & 0.545 (0.067) \\
 & Llama 4 Maverick & 0.295 (0.060) & 0.114 (0.041) & 0.523 (0.064) & 0.068 (0.033) \\
 & Llama 4 Scout & 0.295 (0.060) & 0.136 (0.044) & 0.455 (0.064) & 0.114 (0.041) \\
\midrule
Politics & GPT-4o & 0.131 (0.038) & 0.344 (0.052) & 0.525 (0.053) & 0.000 (0.000) \\
 & GPT-4o-mini & 0.066 (0.028) & 0.295 (0.051) & 0.623 (0.055) & 0.016 (0.014) \\
 & Claude 3.5 Sonnet & 0.180 (0.043) & 0.443 (0.054) & 0.197 (0.044) & 0.180 (0.042) \\
 & Claude 3.5 Haiku & 0.164 (0.041) & 0.246 (0.048) & 0.148 (0.041) & 0.443 (0.055) \\
 & Llama 4 Maverick & 0.410 (0.055) & 0.262 (0.050) & 0.295 (0.052) & 0.033 (0.020) \\
 & Llama 4 Scout & 0.246 (0.048) & 0.148 (0.039) & 0.574 (0.055) & 0.033 (0.020) \\
\midrule
Sci/Tech & GPT-4o & 0.101 (0.031) & 0.304 (0.047) & 0.595 (0.050) & 0.000 (0.000) \\
 & GPT-4o-mini & 0.038 (0.018) & 0.291 (0.046) & 0.671 (0.048) & 0.000 (0.000) \\
 & Claude 3.5 Sonnet & 0.228 (0.043) & 0.329 (0.047) & 0.367 (0.050) & 0.076 (0.026) \\
 & Claude 3.5 Haiku & 0.152 (0.036) & 0.114 (0.032) & 0.228 (0.042) & 0.506 (0.049) \\
 & Llama 4 Maverick & 0.456 (0.049) & 0.203 (0.039) & 0.329 (0.047) & 0.013 (0.011) \\
 & Llama 4 Scout & 0.253 (0.043) & 0.127 (0.033) & 0.620 (0.049) & 0.000 (0.000) \\
\midrule
Sports & GPT-4o & 0.161 (0.059) & 0.258 (0.068) & 0.581 (0.078) & 0.000 (0.000) \\
 & GPT-4o-mini & 0.065 (0.038) & 0.387 (0.073) & 0.548 (0.073) & 0.000 (0.000) \\
 & Claude 3.5 Sonnet & 0.065 (0.038) & 0.452 (0.081) & 0.290 (0.071) & 0.194 (0.062) \\
 & Claude 3.5 Haiku & 0.129 (0.051) & 0.226 (0.066) & 0.194 (0.062) & 0.452 (0.075) \\
 & Llama 4 Maverick & 0.290 (0.071) & 0.161 (0.056) & 0.452 (0.078) & 0.065 (0.036) \\
 & Llama 4 Scout & 0.129 (0.051) & 0.097 (0.045) & 0.774 (0.063) & 0.000 (0.000) \\
\midrule
TV & GPT-4o & 0.040 (0.033) & 0.400 (0.082) & 0.560 (0.082) & 0.000 (0.000) \\
 & GPT-4o-mini & 0.000 (0.000) & 0.400 (0.084) & 0.600 (0.084) & 0.000 (0.000) \\
 & Claude 3.5 Sonnet & 0.080 (0.047) & 0.640 (0.082) & 0.080 (0.045) & 0.200 (0.069) \\
 & Claude 3.5 Haiku & 0.000 (0.000) & 0.520 (0.086) & 0.040 (0.033) & 0.440 (0.086) \\
 & Llama 4 Maverick & 0.120 (0.054) & 0.160 (0.063) & 0.600 (0.081) & 0.120 (0.056) \\
 & Llama 4 Scout & 0.160 (0.061) & 0.080 (0.047) & 0.760 (0.073) & 0.000 (0.000) \\
\midrule
Games & GPT-4o & 0.000 (0.000) & 0.182 (0.096) & 0.818 (0.096) & 0.000 (0.000) \\
 & GPT-4o-mini & 0.091 (0.073) & 0.091 (0.072) & 0.818 (0.094) & 0.000 (0.000) \\
 & Claude 3.5 Sonnet & 0.000 (0.000) & 0.455 (0.122) & 0.455 (0.125) & 0.091 (0.073) \\
 & Claude 3.5 Haiku & 0.182 (0.098) & 0.273 (0.116) & 0.091 (0.071) & 0.455 (0.128) \\
 & Llama 4 Maverick & 0.182 (0.097) & 0.000 (0.000) & 0.818 (0.097) & 0.000 (0.000) \\
 & Llama 4 Scout & 0.182 (0.095) & 0.000 (0.000) & 0.818 (0.095) & 0.000 (0.000) \\
\bottomrule
\end{tabular}
\caption{Truth value probablities by model and SimpleQA topic with zero-shot prompting. Values show proportions (SE) for each of the following epistemic states: $\langle t, f \rangle$ (verified, not refuted), $\langle f, t \rangle$ (not verified, refuted), $\langle t, t \rangle$ (contradictory), $\langle f, f \rangle$ (uncertain). Standard errors estimated via subsampling~\citep{politis1994large}.}
\label{tab:bilateral-topic-distribution}
\end{table}

\end{document}